\newtheorem{theorem}{Theorem}[section]
\newtheorem{definition}{Definition}[section]
\newtheorem{lemma}{Lemma}[section]
\newtheorem{remark}{Remark}[section]
\newcommand{\A}{\mathcal{A}}
\newcommand{\G}{\mathcal{G}}
\newcommand{\F}{\mathcal{F}}
\newcommand{\MMM}{\mathcal{M}}
\newcommand{\BBB}{\mathcal{B}}
\newcommand{\EEE}{\mathcal{E}}
\newcommand{\GGG}{\mathcal{G}}
\newcommand{\tsigma}{\tilde{\sigma}}
\newcommand{\PS}{\mathcal{PS}}
\newcommand{\ba}{{\bf a}}
\newcommand{\bb}{{\bf b}}
\newcommand{\bc}{{\bf c}}
\newcommand{\bd}{{\bf d}}
\newcommand{\bg}{{\bf g}}
\newcommand{\bz}{{\bf z}}
\newcommand{\by}{{\bf y}}
\newcommand{\bw}{{\bf w}}
\newcommand{\bx}{{\bf x}}
\newcommand{\bh}{{\bf h}}
\newcommand{\real}{\mathbb{R}}
\newcommand{\D}{{\mathcal{D}}}
\newcommand{\K}{{\mathcal{K}}}
\newcommand{\B}{\mathcal{B}}
\newcommand{\supp}{\textbf{Support}\{\D \}}
\newcommand{\sigmal}{\sigma_L}
\newcommand{\E}{\mathbb{E}}
\setlist[itemize]{left=1pt}
\title{Weight for Robustness: A Comprehensive Approach towards Optimal Fault-Tolerant Asynchronous ML
}
\author{%
  Tehila Dahan \\
  Department of Electrical Engineering\\
  Technion\\
  Haifa, Israel \\
\texttt{t.dahan@campus.technion.ac.il} \\
  \And
 Kfir Y. Levy \\
  Department of Electrical Engineering\\
  Technion\\
  Haifa, Israel \\
\texttt{kfirylevy@technion.ac.il} \\
}
\begin{document}

\maketitle

\begin{abstract}
We address the challenges of Byzantine-robust training in asynchronous distributed machine learning systems, aiming to enhance efficiency amid massive parallelization and heterogeneous computing resources. Asynchronous systems, marked by independently operating workers and intermittent updates, uniquely struggle with maintaining integrity against Byzantine failures, which encompass malicious or erroneous actions that disrupt learning. The inherent delays in such settings not only introduce additional bias to the system but also obscure the disruptions caused by Byzantine faults. To tackle these issues, we adapt the Byzantine framework to asynchronous dynamics by introducing a novel weighted robust aggregation framework. This allows for the extension of robust aggregators and a recent meta-aggregator to their weighted versions, mitigating the effects of delayed updates. By further incorporating a recent variance-reduction technique, we achieve an optimal convergence rate for the first time in an asynchronous Byzantine environment. Our methodology is rigorously validated through empirical and theoretical analysis, demonstrating its effectiveness in enhancing fault tolerance and optimizing performance in asynchronous ML systems.
\end{abstract}

\section{Introduction} 
In recent years, there has been significant growth in the development of large-scale machine learning (ML) models and the volume of data they require \citep{zhao2023survey}. To efficiently accelerate large-scale training processes, Distributed ML has emerged as a crucial approach that can be categorized into synchronous and asynchronous paradigms. In synchronous learning, workers update the model simultaneously using the average of their outputs, similar to the Minibatch approach \citep{dekel2012optimal}. Asynchronous learning, however, allows workers to operate independently, sending updates as they are ready without waiting for others \citep{arjevani2020tight}.  This prevents slow workers from hindering the process, making it especially practical 
as the number of workers increases.

A major challenge of distributed ML is fault-tolerance, and  
Byzantine ML \citep{ alistarh2018byzantine, lamport2019byzantine,guerraoui2023byzantine} is a powerful framework for tackling this aspect. Byzantine ML captures a broad spectrum of failures within distributed environments, including random malfunctions or even malicious workers aiming to disrupt the training process. This makes Byzantine ML  widely applicable across various domains to ensure robust performance.

Addressing the Byzantine problem in synchronous distributed learning is well-established \citep{karimireddy2020byzantine, karimireddy2021learning, allouah2023fixing, farhadkhani2022byzantine, alistarh2018byzantine, dahanlevy2024}. Two primary ingredients were found to be crucial towards tackling Byzantine ML in synchronous settings: \textbf{(i)} \emph{Robust Aggregators} \citep{yin2018byzantine, blanchard2017machine, chen2017distributed}: such aggregators combine the gradient estimates sent by the workers to a single estimate while filtering out the outliers which may hinder the training process.
While the use of robust aggregators is crucial, it was found to be insufficient, and an additional ingredient of 
 \textbf{(ii)} \emph{learning from history} was shown to be vital in mitigating Byzantine faults \citep{karimireddy2021learning}.
And, the performance of robust aggregators was systematically explored within a powerful generic framework  \citep{karimireddy2020byzantine, karimireddy2021learning, allouah2023fixing, farhadkhani2022byzantine, dahanlevy2024}. Moreover,  due to the diversity of Byzantine scenarios \citep{xie2020fall, allen2020byzantine, baruch2019little}, it was found that relying on a single aggregator is insufficient, making the variety of robust aggregators essential. Unfortunately, many existing aggregators have sub-optimal performance. This drawback was elegantly resolved by the design of meta-aggregators \citep{karimireddy2020byzantine, allouah2023fixing, dahanlevy2024}, that enable to boost the performance of baseline aggregators. Unfortunately, in the asynchronous case, the use of robust aggregators is not straightforward, as updates are typically applied individually per-worker, rather than averaging outputs from all workers at once \citep{arjevani2020tight}. 

Despite its advantages, asynchronous distributed learning presents unique challenges, particularly when dealing with Byzantine faults. The delays inherent in asynchronous settings introduce additional bias to the system and obscure the disruptions caused by Byzantine faults. In fact, in contrast to the synchronous Byzantine setting, all existing approaches towards the asynchronous Byzantine case do not ensure a generalization error (excess loss) that diminishes with the number of honest data-samples and updates. This applies to works for both convex~\citep{fang2022aflguard} as well as non-convex scenarios~\citep{xie2020zeno++, yang2023buffered}; as well as to works that further assume the availability of a \emph{trusted dataset} possessed by the central-server~\citep{xie2020zeno++, fang2022aflguard}. Furthermore, the performance guarantees of all existing approaches towards that setting include an explicit dependence on the dimension of the problem --- a drawback that does not exist for SOTA synchronous Byzantine approaches. 

\textbf{Contributions.} We explore the asynchronous Byzantine setting under the fundamental framework of Stochastic Convex Optimization (SCO)~\citep{hazan2016introduction}. 
Our work is the first to achieve a convergence rate that diminishes with the number of honest data samples and updates and does not explicitly depend on the problem's dimension. In the absence of Byzantine workers, our rate matches the optimal performance of Byzantine-free asynchronous settings. This stands in contrast to previous efforts on Byzantine, which did not attain diminishing rates or dimensionality independence, even without Byzantine workers. We also show the effectiveness of our approaches in practice. 
Our contributions:
\begin{itemize}[leftmargin=1em]
    \item We quantify the difficulty in asynchronous scenarios by considering the \emph{number of Byzantine updates}, which is more natural than the standard measure of \emph{number of Byzantine workers}.
 \item We identify the need to utilize weighted aggregators rather than standard ones in favor of asynchronous Byzantine problems. Towards doing so, we extend the robust aggregation framework to allow and include weights and develop appropriate (weighted) rules and a meta-aggregator.
\item \textbf{Achieving Optimal Convergence}: We incorporate our weighted robust framework with a recent double momentum mechanism, leveraging its unique features to achieve an optimal convergence rate for the first time in asynchronous Byzantine ML.
\end{itemize}

\paragraph{Related Work.} A long line of studies has explored the synchronous Byzantine setting (see e.g.,~
\cite{alistarh2018byzantine,karimireddy2020byzantine, karimireddy2021learning, allouah2023fixing, farhadkhani2022byzantine, allen2020byzantine, el2021distributed, dahanlevy2024}). \cite{alistarh2018byzantine,karimireddy2021learning} demonstrated that historical information is crucial for optimal performance in Byzantine scenarios; and  \cite{karimireddy2021learning} introduced the idea of combining generic aggregation rules with standard momentum, using a parameter of $\Theta(1/\sqrt{T})$, which effectively incorporates $\Theta(\sqrt{T})$ iterations of historical gradients. Additionally, \cite{dahanlevy2024} showed that a double momentum approach is effective by taking a momentum parameter of \(1/T\), capturing the entire gradient history.

Robust aggregators such as Coordinate-wise Trimmed Mean (CWTM) \citep{yin2018byzantine}, Krum \citep{blanchard2017machine}, Geometric Median (GM) \citep{chen2017distributed}, CWMed \citep{yin2018byzantine}, and Minimum Diameter Averaging \citep{guerraoui2018hidden} have also proven to be highly beneficial in synchronous settings and have been evaluated within robust frameworks \citep{allouah2023fixing, karimireddy2020byzantine, farhadkhani2022byzantine, dahanlevy2024}.
However, not all robust aggregators achieve optimal performance, leading to the development of meta-aggregators \citep{karimireddy2020byzantine, allouah2023fixing, dahanlevy2024} to enhance their effectiveness. While standard aggregation works well in synchronous settings, where outputs are averaged across all workers, it is less suitable for asynchronous settings, where updates are processed individually as they arrive \citep{ arjevani2020tight}.

To adapt these approaches to asynchronous settings, \cite{yang2023buffered} devised BASGDm, an extension of BASGD \citep{yang2021basgd}, that groups worker momentums into buckets that are then aggregated using a robust aggregator. Other methods, like Zeno++ \citep{xie2020zeno++} and AFLGuard \citep{fang2022aflguard}, rely on a trusted dataset on the central server, which hinders their practicality. Kardam \citep{damaskinos2018asynchronous} uses the Lipschitzness of gradients to filter out outliers. Unfortunately, none of these approaches ensure a generalization error (excess loss) that diminishes with the number of honest data-samples and updates, and suffers from an explicit dependence on the problem's dimension. And this applies even in the absence of Byzantine faults.

Asynchronous Byzantine ML faces unique challenges as inherent delays add bias that obscures Byzantine disruptions. To mitigate this delay-bias in asynchronous, non-Byzantine scenarios, \cite{cohen2021asynchronous, aviv2021asynchronous} propose methods to keep model weights relatively close during iterations. Other approaches \citep{ stich2019error, arjevani2020tight,mishchenko2022asynchronous} suggest adjusting the step size proportionally to the delay. These strategies have proven useful in reducing the negative impact of delays, and achieve optimal performance.

Our work extends several concepts from \cite{dahanlevy2024} to the asynchronous scenario.
We devise a novel generalization of their Centered Trimmed Meta Aggregator (CTMA) towards weighted meta-aggregation, making it amenable to asynchronous scenarios.
In the spirit of \cite{dahanlevy2024}, we also adopt a recent variance reduction technique called \(\mu^2\)-SGD \citep{levy2023mu}. Nevertheless, while  \cite{dahanlevy2024} used this technique in a straightforward manner, we found it crucial to appropriately incorporate individual per-worker weights to overcome the challenge of asynchronicity in Byzantine ML. 

\section{Setting}
Our discussion focuses on the minimization of a smooth convex objective $f:\K \rightarrow \real$:
\begin{equation*}
    f(\bx):=\E_{\bz\sim\D}[f(\bx;\bz)]~,
\end{equation*}
where $\K\subseteq\real^d$ is a compact convex set and $\D$ denotes an unknown distribution from which we can draw i.i.d samples $\{\bz_t\sim\D\}_t$. Our work considers first-order methods that iteratively utilize gradient information to approach an optimal point. Such methods output a solution $\bx_T$, which is evaluated by the expected excess loss:
\begin{equation*}
    \textrm{ExcessLoss}:=\E[f(\bx_T) - f(\bx^*)]~,
\end{equation*}
where $\bx^*$ is a solution that minimizes $f$ over $\K$ and $\bx_T\in\K$ approximates this optimal solution.

\textbf{Asynchronous Training.} We explore these methods within a distributed environment involving multiple workers.
Our discussion focuses on a \emph{centralized} distributed framework characterized by a central Parameter Server ($\mathcal{PS}$) that may communicate with $m$ workers. Each of these workers may draw i.i.d. samples \( \bz \sim \D \); and based on these samples, compute unbiased gradient estimate $\bg\in\real^d$ at a point $\bx\in \K$. Concretely, a worker may compute $\bg:=\nabla f(\bx; \bz)$; implying that  $\E[\bg|\bx]=\nabla f(\bx)$. Specifically, our main focus is on \emph{Asynchronous} systems, where the $\mathcal{PS}$ does not wait to receive the stochastic gradient computations from all machines; instead, it updates its model whenever a worker completes a (stochastic) gradient computation. That worker then proceeds to compute a gradient estimate for the updated model, while the other workers continue to compute gradients based on `stale' models. This staleness leads to the use of staled (and therefore biased) gradient estimates, which is a major challenge in designing and analyzing asynchronous training methods.

\textbf{Asynchronous Byzantine Framework.} We assume that an unknown subset of the $m$ workers are \emph{Byzantine}, implying that these workers may transmit arbitrary or malicious information during the training process, and these "Byzantine" workers may even collaborate to disrupt the training.
We assume that the fraction of updates that arrive from Byzantine workers during the asynchronous training process is bounded and strictly smaller than $\nicefrac{1}{2}$ and denote this fraction by $\lambda$. 
\begin{remark}[Fraction of Byzantine Updates vs. Byzantine Workers]
In both synchronous and asynchronous settings, it is common to consider a bound on the \textbf{fraction of Byzantine workers} (up to \nicefrac{1}{2}) \citep{allouah2023fixing, farhadkhani2022byzantine, karimireddy2020byzantine, karimireddy2021learning, yang2023buffered, yang2021basgd, damaskinos2018asynchronous}. 
In synchronous scenarios this is meaningful since the server 
equally treats the information from all workers; which is done by equally
averaging gradients of all workers in each iteration in a mini-batch fashion \citep{dekel2012optimal}.
Conversely, in asynchronous scenarios, faster workers contribute to more updates than slower workers, leading to an unequal influence on the training process. In such scenarios, the fraction of Byzantine workers is less relevant; and it is therefore much more natural to consider the \textbf{fraction of Byzantine updates}. Interestingly, our definition aligns with the standard one (for the synchronous case), which considers the number of Byzantine workers.
\end{remark}

\textbf{Notation.}  For each worker \(i \in [m]\) and iteration \(t\), \(s_t^{(i)}\) represents the total number of updates by worker \(i\) up to \(t\), and \(\tau_t^{(i)}\) is the delay compared to the current model. \(t^{(i)}\) is the last update before \(t\), making \(\tau_t^{(i)}\) the time since the second last update (Figure~\ref{fig:delay}). \(\tau_t\) denotes the delay for the worker arriving at iteration \(t\), i.e., if worker $j$ arrives at iteration $t$ then \(\tau_t=\tau_t^{(j)}\). 

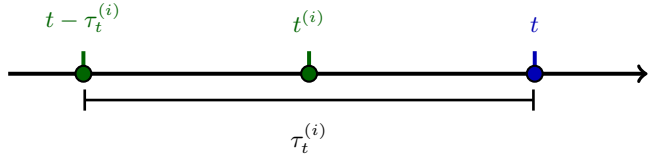
\begin{figure}[h]
    \centering
    \begin{tikzpicture}[scale=1]

        \draw[->, line width=1.5pt, color=black] (0,0) -- (8.5,0) node[right] {};

        \coordinate (T) at (7,0);
        \coordinate (Ti) at (4,0);
        \coordinate (Ttau) at (1,0);

        \draw[line width=1.5pt, color={rgb,255:red,0; green,0; blue,180}] (T) -- ++(0,0.3) node[above, font=\small, yshift=3pt] {$t$}; 
        \draw[line width=1.5pt, color={rgb,255:red,0; green,100; blue,0}] (Ti) -- ++(0,0.3) node[above, font=\small, yshift=3pt] {$t^{(i)}$}; 
        \draw[line width=1.5pt, color={rgb,255:red,0; green,100; blue,0}] (Ttau) -- ++(0,0.3) node[above, font=\small, yshift=3pt] {$t - \tau^{(i)}_t$}; 

        \filldraw[fill={rgb,255:red,0; green,0; blue,180}, draw=black, thick] (T) circle (3pt); 
        \filldraw[fill={rgb,255:red,0; green,100; blue,0}, draw=black, thick] (Ti) circle (3pt); 
        \filldraw[fill={rgb,255:red,0; green,100; blue,0}, draw=black, thick] (Ttau) circle (3pt); 

        \draw[|-|, line width=1.0pt, color=black] (Ttau |- 0,-0.35) -- (T |- 0,-0.35)
            node[midway, below=5pt, font=\small] {$\tau_{t}^{(i)}$};
    \end{tikzpicture}
    \caption{Illustration of the delay interval \(\tau^{(i)}_t\) for worker \(i\) at iteration \(t\), marking \(t\) (current iteration), \(t^{(i)}\) (most recent update from worker \(i\)), and \(t - \tau^{(i)}_t\) (previous update from worker \(i\)).}
    \label{fig:delay}
\end{figure}

    For a given time (iteration) \(t\), let \(t^{(i)}\) be the last iteration when worker \(i\) made an update. We denote $\bd_t^{(i)}:=\bd_{t^{(i)}}$, $\bg_t^{(i)}:=\bg_{t^{(i)}}$, $\Tilde{\bg}_t^{(i)}:=\Tilde{\bg}_{t^{(i)}}$, and $\bx_t^{(i)}=\bx_{t^{(i)}}$, where the latter are individual vectors that we will later define for any worker $i$. Throughout, \(\lVert\cdot\rVert\) represents the \(L_2\)-norm. For any natural \(N\), \(\left[N\right]=\left\{1,\ldots,N\right\}\). We use the compressed sum notation \(\alpha_{1:t}= \sum_{k=1}^{t}\alpha_k\). For every \(\bx \in \mathbb{R}^d\), the orthogonal projection of \(\bx\) onto a set \(\K\) is denoted by \(\Pi_\K(\bx)= \arg\min_{\by \in \K} \|\by - \bx\|^2\). We denote \(\BBB\) and \(\GGG\) as the subsets of Byzantine workers and honest workers, respectively, such that \(|m| = |\GGG| + |\BBB|\).

\paragraph{Assumptions.} We use the following conventional assumptions: 
\begin{flalign}
\label{eq:bounded_diameter}
\text{\textbf{Bounded Diameter}: we assume there exists } D>0 \text{ such that } \max_{\bx,\by\in\K}\|\bx-\by\|\leq D. &&
\end{flalign}
\textbf{Bounded Variance}:~there exists $\sigma>0$ such that $\forall \bx\in\K$, $\bz\in\supp$,
\begin{align}
\label{eq:bounded-variance}
    \E\|\nabla f(\bx;\bz) - \nabla f(\bx)\|^2\leq \sigma^2~.
\end{align}
\textbf{Expectation over Smooth Functions}:~we assume that $f(\cdot)$ is an expectation of smooth functions, i.e.~ $\forall \bx,\by\in\K~, \bz\in\supp $ there exist $L>0$ such that,
\begin{align} \label{eq:Main}
\|\nabla f(\bx;\bz) - \nabla f(\by;\bz)\| \leq L\|\bx-\by\|~,
\end{align} 
The above assumption also implies that the expected loss $f(\cdot)$ is $L$ smooth. \\
\textbf{Bounded Smoothness Variance} \citep{levy2023mu}:~
in Appendix \ref{sec:sigmal} we show that Eq.~\eqref{eq:Main} implies that, $\forall \bx,\by\in\K$, $\bz\in\supp$ there exists $\sigmal^2 \in[0,L^2]$ such,
\begin{align} 
\begin{split}
    \E\left\|(\nabla f(\bx;\bz)-\nabla f(\bx)) - (\nabla f(\by;\bz)-\nabla f(\by))\right\|^2 \leq \sigmal^2 \|\bx-\by\|^2 
\end{split}
\label{eq:sigmal}
\end{align} 
\begin{flalign} 
\label{eq:delay}
\text{\textbf{Bounded Delay}: $\exists K > 0$ such that for each worker $i\in[m]$,} \ \
   \tau^{(i)}_{min} \leq \tau_t^{(i)} \leq K \tau^{(i)}_{min} &&
\end{flalign}
where $\tau_{min}^{(i)}$ is the minimum delay of worker $i$. $K$ bounds the variance of the delay for each worker.
\begin{flalign} 
\label{eq:boundedIterations}
\text{\textbf{Bounded Byzantine Iterations}:~ there exists $0 \leq \lambda < \nicefrac{1}{2}$ such that $t \in [T]$:} \ \
   t_\BBB \leq \lambda t &&
\end{flalign}
where $t_\BBB$ is the total number of iterations made by Byzantine workers up to iteration $t$. \\
\textbf{Sample-Arrival Independence}: we assume that the delays in the system (i.e.~$\tau_t^{(i)}$'s) are independent of the data samples.
This is a standard assumption in asynchronous training scenarios,   see e.g.,~\cite{arjevani2020tight, aviv2021asynchronous}.

\section{Weighted Robust Aggregation Rules}
\label{sec:weightedRobustAgg}
As we have mentioned, robust aggregation rules have played a major role in designing fault-tolerant ML training methods for synchronous settings (see, e.g., \cite{allouah2023fixing, karimireddy2020byzantine, karimireddy2021learning, dahanlevy2024}). These existing aggregation rules treat inputs from all workers equally, which makes sense in synchronous cases where all workers contribute the same number of updates and data samples. Conversely, this symmetry breaks down in asynchronous settings, where faster (honest) workers contribute more updates and samples compared to slower workers.

Inspired by this asymmetry, we have identified the need to define a notion of weighted robust aggregators that generalizes the standard definition of robust aggregators. In this section, we provide such a definition, derive weighted variants of standard aggregators that satisfy our new definition, and design a generic meta-approach to derive optimal weighted aggregation rules.
Later, in Section~\ref{sec:AsynchRobust}, we demonstrate the benefits of using weighted robust aggregators as a crucial building block in designing asynchronous fault-tolerant training methods (see Alg.~\ref{alg:Asy}). 

\subsection{Robust Weighted Aggregation Framework}
Below, we generalize the definition introduced by \cite{dahanlevy2024, karimireddy2020byzantine, karimireddy2021learning} to allow and associate weights to the inputs of the robust aggregation rule, therefore allowing the aggregator to unequally treat its inputs.
\begin{definition} \textnormal{\( (c_\lambda, \lambda) \)-\textbf{weighted robust}}.
\label{def2}
Assume we have $m$ random vectors $\bx_1, \ldots, \bx_m \in \real^d$ and corresponding weights $s_1, \ldots, s_m > 0$. Also assume we have an "honest" subset \(\GGG \subseteq [m]\), implying
$\{\bx_i\}_{\in\ \GGG}$ are independent of each other.  Finally, assume that there exists $\lambda\in [0,\nicefrac{1}{2})$ such that $\sum_{i \in \GGG} s_i \geq (1-\lambda)s_{1:m}$. Moreover, assume that for any $i\in\GGG$ there exist $\rho_i\geq 0$ such that, 
\begin{gather*}
    \E \|{\bx_i - \Bar{\bx}_\GGG}\|^2\leq \rho^2_i, \quad \forall i\in\GGG~.
\end{gather*}
Then an aggregation rule $\A_\omega$ is called \( (c_\lambda, \lambda) \)-weighted robust if for any such $m$ random vectors and weights and $\lambda\geq 0$, it outputs $\hat{\bx}\gets \A_\omega(\bx_1, \ldots, \bx_m ; s_1, \ldots, s_m)$ such that,
\begin{gather*}
    \E\|{\hat{\bx} - \bar{\bx}_\GGG} \| \leq c_\lambda \rho^2
\end{gather*}
for some $c_\lambda\geq 0$. Above,  $\bar{\bx}_\GGG := \frac{1}{\sum_{i\in\GGG} s_i} \sum_{i \in \GGG} s_i{\bx}_i$, $\rho^2:=\frac{1}{\sum_{i\in\GGG} s_i} \sum_{i\in\GGG}s_i\rho_i^2$, and the expectation is w.r.t.~$\{\bx_i\}_{i=1}^m$ and (possible) randomization in the $\A_\omega$.
\end{definition}
Here, \( \lambda \) represents the fraction of the sum of the non-honest vectors' weights, unlike the unweighted definition (in synchronous cases) \citep{karimireddy2020byzantine, karimireddy2021learning, allouah2023fixing, farhadkhani2022byzantine} where it indicates the fraction of non-honest vectors. These definitions align when all weights are equal \citep{dahanlevy2024}. Similarly to the unweighted version, the optimal \( c_{\lambda} \) should be \( c_{\lambda} \leq O(\lambda) \) \citep{dahanlevy2024}.
\begin{remark}
Note that our definition is generic and may be applied in both convex and non-convex scenarios. Moreover, it is natural to consider such weighted aggregators beyond asynchronous settings.
For example, in synchronous settings where workers have varying batch sizes, weighted aggregation based on batch sizes may be more effective than uniform aggregation.
\end{remark}

Next, we present two weighted variants of standard (non-weighted) aggregators that satisfy the above definition (we defer the proof into Appendix \ref{app:robust-agg}). Table~\ref{tab:asyncFilters} summarizes their $c_\lambda$ values.

\subsection{Weighted Variant of Geometric Median and Coordinate-Wise}
\paragraph{Weighted Geometric Median (WeightedGM)}
The Weighted Geometric Median (WeightedGM) minimizes the weighted sum of Euclidean distances to a set of points. Formally, for points \(\{\bx_i\}_{i=1}^m\) and corresponding weights \(\{s_i\}_{i=1}^m\),
$\textnormal{WeightedGM} \in \arg \min_{\by \in \real^d} \sum_{i\in[m]} s_i \|\by - \bx_i\|$~.
\paragraph{Weighted Coordinate-Wise Median (WeightedCWMed)}
The Weighted Coordinate-Wise Median (WeightedCWMed) aggregates multi-dimensional data by finding the weighted median of each coordinate separately.
Thus, for given coordinate if \(\{\bx_i\}_{i=1}^m\) are sorted and weights  \(\{s_i\}_{i=1}^m\), the weighted median \(\bx_{j^*}\) is the element where:
$j^* = \arg \min_{j \in [m]}\left\{\sum_{i\in[j]} s_i > \frac{1}{2} \sum_{i\in[m]} s_i \right\}~.$
If \(\sum_{i=1}^{j} s_i = \frac{1}{2} \sum_{i=1}^{m} s_i\) for some \(j\), then: $\textnormal{WeightedMedian} = \frac{\bx_j + \bx_{j+1}}{2}$.

\begin{table}
    \centering
    \begin{tabular}{|l|c|c|c|c|}
        \hline
        Aggregation & $\omega$-GM & $\omega$-CWMed & $\omega$-GM + $\omega$-CTMA& $\omega$-CWMed + $\omega$-CTMA \\
        \hline
        \({c_\lambda}\) & \( \left (1 + \frac{\lambda}{{1-2\lambda}}\right)^2 \) & \(\left(1 + \frac{\lambda}{{1-2\lambda}}\right)^2 \) & \(  \lambda\left(1 + \frac{\lambda}{{1-2\lambda}}\right)^2 \) & \(\lambda \left(1 + \frac{\lambda}{{1-2\lambda}}\right)^2 \)  \\
        \hline
    \end{tabular}
    \caption{Summary of weighted aggregation rules and their respective \(c_\lambda\) values.}
    \label{tab:asyncFilters}
\end{table}

\subsection{Weighted Centered Trimmed Meta Aggregator ($\omega$-CTMA)}
Table \ref{tab:asyncFilters} illustrates that  \( \omega \)-GM and \( \omega \)-CWMed fail to achieve the desired optimal $c_\lambda = O(\lambda)$; typically for \( \lambda \leq \nicefrac{1}{3} \), their \( c_{\lambda} \) remains \( \leq O(1) \). To address this suboptimality, we propose \(\omega\)-CTMA, a weighted extension of the Centered Trimmed Meta Aggregator (CTMA) \citep{dahanlevy2024}. This extension enables us to achieve the optimal bound \(c_\lambda \leq O(\lambda)\) for \(\lambda \leq \nicefrac{1}{3}\) (see Table \ref{tab:asyncFilters}).

The \(\omega\)-CTMA algorithm (Algorithm \ref{alg:CTMA}) operates on a set of vectors along with their associated weights, a threshold \(\lambda\in[0,\nicefrac{1}{2})\), and a \((c_\lambda, \lambda)\)-weighted robust aggregator. It sorts the distances between each vector and the weighted robust aggregator, trims the set based on the threshold to satisfy \(\sum_{i \in S} s_i = (1 - \lambda) s_{1:m}\), and calculates a weighted average of the vectors, excluding outliers based on their proximity to an anchor point—the weighted robust aggregator.

\begin{algorithm}[t]
\caption{Weighted Centered Trimmed Meta Aggregator (\(\omega\)-CTMA)}\label{alg:CTMA}
\begin{algorithmic}[1] 
    \State \textbf{Input:} Set of vectors \(\{\bx_i\}_{i=1}^m\), weights \(\{s_i\}_{i=1}^m\), threshold parameter \(\lambda \in[0,\nicefrac{1}{2})\),
    \State \phantom{\textbf{Input:}} \((c_\lambda,\lambda)\)-weighted robust aggregated vector \(\bx_0 \leftarrow \A_\omega(\{\bx_i\}_{i=1}^m;\{s_i\}_{i=1}^m)\). 
    \State Sort the sequence \(\{ \|\bx_i - \bx_0\| \}_{i=1}^m\) in non-decreasing order, and then reindex \(\{\bx_i\}_{i\in[m]}\) and their corresponding weights \(\{s_i\}_{i\in[m]}\) according to this new order.
    \State Define \( S \gets \) set of indices corresponding to the first \( j^* \) elements in the sorted sequence, where \( j^* \) is the smallest \( j \in [m] \) for which \( \sum_{i\in[j]}s_i \geq (1-\lambda) \sum_{i\in[m]} s_i \).
    \State Set \( s_{m+1} \leftarrow (1-\lambda) \sum_{i\in[m]} s_i - \sum_{i\in[j^*-1]}s_i \), \quad \( \bx_{m+1} \leftarrow \bx_{j^*} \), \quad \( S \leftarrow (S\backslash \{j^*\}) \cup \{m+1\}  \).
    \State Compute the weighted sum:~~~ 
    \(\hat{\bx} \gets (\sfrac{1}{\sum_{i\in S} s_i})\sum_{i\in S} s_i \bx_i \).
    \State \textbf{Output:} \(\hat{\bx}\)
\end{algorithmic}
\end{algorithm}

\begin{lemma}
\label{lem:CTMA}
Under the assumptions outlined in Definition \ref{def2}, if $\omega$-CTMA receives a $(c_\lambda, \lambda)$- weighted robust aggregator, $\A_\omega$; then the output of $\omega$-CTMA, $\hat{\bx}$, is  $(60\lambda(1+c_\lambda), \lambda)$-robust.
\end{lemma}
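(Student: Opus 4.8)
The plan is to prove the squared-norm bound $\E\|\hat{\bx}-\bar{\bx}_\GGG\|^2 \leq 60\lambda(1+c_\lambda)\rho^2$, which is exactly $(60\lambda(1+c_\lambda),\lambda)$-robustness in the squared-norm sense of Definition~\ref{def2} (the inequality displayed there is understood with a squared left-hand side, as in the framework it generalizes). Write $W:=\sum_{i\in S}s_i=(1-\lambda)s_{1:m}$ (an exact equality, thanks to the fractional re-weighting step of Algorithm~\ref{alg:CTMA}), $W_\GGG:=\sum_{i\in\GGG}s_i\geq(1-\lambda)s_{1:m}$, and let $r:=\|\bx_{j^*}-\bx_0\|$ be the trimming radius and $e:=\|\bx_0-\bar{\bx}_\GGG\|$ the anchor error. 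Split the retained set into $S_\GGG:=S\cap\GGG$ and $S_\BBB:=S\cap\BBB$, and set $\beta:=\sum_{i\in S_\BBB}s_i$ and $V^2:=\sum_{i\in\GGG}s_i\|\bx_i-\bar{\bx}_\GGG\|^2$. The first step is to rewrite the error as a difference of two \emph{light} sums: since $\sum_{i\in S}s_i=W$ and $\sum_{i\in\GGG}s_i(\bx_i-\bar{\bx}_\GGG)=0$, the retained honest part can be traded for the excluded honest part, giving
\begin{equation*}
\hat{\bx}-\bar{\bx}_\GGG=\frac{1}{W}\Big(\sum_{i\in S_\BBB}s_i(\bx_i-\bar{\bx}_\GGG)-\sum_{i\in\GGG\setminus S}s_i(\bx_i-\bar{\bx}_\GGG)\Big).
\end{equation*}
The accompanying \emph{exchange identity} $\sum_{i\in\GGG\setminus S}s_i=(W_\GGG-W)+\beta\geq\beta$ shows the excluded honest mass dominates the retained Byzantine mass, while both $\beta$ and $\sum_{i\in\GGG\setminus S}s_i$ are at most $\lambda s_{1:m}$.

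Next I would control the radius $r$. By construction every excluded point obeys $\|\bx_i-\bx_0\|\geq r$ and every retained Byzantine point obeys $\|\bx_i-\bx_0\|\leq r$; combining the former with the exchange inequality and the weighted bias--variance split about $\bar{\bx}_\GGG$ yields
\begin{equation*}
\beta r^2\leq\sum_{i\in\GGG\setminus S}s_i\|\bx_i-\bx_0\|^2\leq\sum_{i\in\GGG}s_i\|\bx_i-\bx_0\|^2=V^2+W_\GGG e^2.
\end{equation*}
Applying the triangle inequality ($\|\bx_i-\bar{\bx}_\GGG\|\leq r+e$ on $S_\BBB$) and Cauchy--Schwarz on the honest sum, then using $\beta r\leq\sqrt{\beta}\,\sqrt{V^2+W_\GGG e^2}$ together with $\beta,\sum_{i\in\GGG\setminus S}s_i\leq\lambda s_{1:m}$ and $W_\GGG\leq s_{1:m}$, each of the three resulting pieces carries a factor $\sqrt{\lambda}$ or $\lambda$. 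Dividing by $W=(1-\lambda)s_{1:m}$ gives, with $\rho_{\mathrm{emp}}^2:=V^2/W_\GGG$,
\begin{equation*}
\|\hat{\bx}-\bar{\bx}_\GGG\|\leq\tfrac{\sqrt{\lambda}}{1-\lambda}\rho_{\mathrm{emp}}+\tfrac{\sqrt{\lambda}}{1-\lambda}\sqrt{\rho_{\mathrm{emp}}^2+e^2}+\tfrac{\lambda}{1-\lambda}e.
\end{equation*}

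Finally I would square, apply $(a+b+c)^2\leq 3(a^2+b^2+c^2)$, absorb $\lambda^2\leq\lambda$ and $(1-\lambda)^{-2}\leq 4$ using $\lambda<\nicefrac{1}{2}$, and take expectations. The key point making the expectation legitimate is that the random trimming set was already eliminated by passing to sums over \emph{all} of $\GGG$: this gives $\E[V^2]=\sum_{i\in\GGG}s_i\,\E\|\bx_i-\bar{\bx}_\GGG\|^2\leq W_\GGG\rho^2$, hence $\E[\rho_{\mathrm{emp}}^2]\leq\rho^2$, while $\E[e^2]\leq c_\lambda\rho^2$ is precisely the $(c_\lambda,\lambda)$-robustness of the anchor $\A_\omega$. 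Collecting terms produces a bound $C\lambda(1+c_\lambda)\rho^2$ with an absolute constant $C\leq 24$, comfortably inside the claimed $60$.

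I expect the main obstacle to be exactly this measurability issue: $S$, $r$, $\bx_0$, $S_\GGG$, and $S_\BBB$ are all data-dependent and may be adversarially correlated with the Byzantine inputs, so one cannot take expectations term-by-term inside the random index sets. The device that resolves it --- over-counting each random honest subset by the full honest set $\GGG$ before taking expectations --- is what loosens the constant, and one must ensure that $\rho_{\mathrm{emp}}^2$ and $e^2$ appear only after this relaxation (in particular after the $(a+b+c)^2\leq 3(\cdots)$ step, so that no cross term $\rho_{\mathrm{emp}}e$ survives that would require jointly controlling the honest dispersion and the anchor error). A secondary subtlety is the fractional boundary re-weighting, which I invoke only to make $W=(1-\lambda)s_{1:m}$ and the exchange identity hold as exact equalities.
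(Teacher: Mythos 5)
Your proof is correct, and it takes a genuinely different route from the paper's. The paper centers its decomposition at the anchor $\bx_0$, writing $\hat{\bx}-\bar{\bx}_\GGG$ as a sum of three pieces (a weight-mismatch term over all of $\GGG$, an excluded-honest term over $\GGG\setminus S$, and a retained-Byzantine term over $S\setminus\GGG$), and then neutralizes the retained-Byzantine piece by unrolling the integer weights into multisets $S^*,\GGG^*$ and constructing an explicit injection $\Phi:S^*\setminus\GGG^*\to\GGG^*\setminus S^*$ that dominates each retained Byzantine point by a farther excluded honest point. You instead center at $\bar{\bx}_\GGG$ and exploit the exact identity $\sum_{i\in\GGG}s_i(\bx_i-\bar{\bx}_\GGG)=0$ to trade the retained honest mass for the excluded honest mass, leaving only two sums; the retained-Byzantine contribution is then controlled in aggregate through the trimming radius $r$ via $\beta r^2\le\sum_{i\in\GGG\setminus S}s_i\|\bx_i-\bx_0\|^2$ together with the mass-exchange inequality $\beta\le\sum_{i\in\GGG\setminus S}s_i$. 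This is the same domination principle as the paper's injection, but in integrated form: it needs no per-point matching and in particular no integrality of the weights, which makes the fractional boundary weight $s_{m+1}$ unproblematic (the paper's multiset construction $\{i\}_{j=1}^{s_i}$ implicitly assumes integer $s_i$). Both arguments finish identically, by relaxing all random honest index sets to the full set $\GGG$ before taking expectations and invoking $\E\|\bx_0-\bar{\bx}_\GGG\|^2\le c_\lambda\rho^2$; your bookkeeping yields a constant of roughly $24$ versus the paper's $60$, so your bound is slightly sharper while establishing the same $O(\lambda(1+c_\lambda))$ robustness.
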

For the complete analysis, please refer to Appendix \ref{app:ctma}. Like CTMA \citep{dahanlevy2024}, $\omega$-CTMA is highly efficient, with a computational complexity of \(O(dm + m\log{m})\), similar to \(\omega\)-GM, \(\omega\)-CWMed, and weighted averaging, differing by at most an additional logarithmic factor.

\section{Asynchronous Robust Training}
\label{sec:AsynchRobust}
We leverage the \(\mu^2\)-SGD algorithm \citep{levy2023mu}, a double momentum mechanism that enhances variance reduction. By seamlessly incorporating our weighted robust framework as a black box into the \(\mu^2\)-SGD, we derive an optimal asynchronous Byzantine convergence rate.

\paragraph{$\mu^2$-SGD:}
The \(\mu^2\)-SGD is a variant of standard SGD, incorporating several key modifications in its update rule:
\begin{gather*}
    \label{eq:mu2sgd}
    \bw_{t+1}=\Pi_{\K}\left({\bw_{t}- \eta\alpha_{t} \bd_{t}}\right), \quad
    \bx_{t+1}=\frac{1}{\alpha_{1:t+1}} \sum_{k\in[t+1]} \alpha_k \bw_k; \quad \bw_1=\bx_1\in\K, \ \forall t>1.
\end{gather*}
Here, \(\{\alpha_t>0\}_t\) are importance weights that emphasize different update steps, with \(\alpha_t\propto t\) to place more weight on recent updates. The sequence \(\{\bx_t\}_t\) represents weighted averages of the iterates \(\{\bw_t\}_t\), and \(\bd_t\) is an estimate of the gradient at the average point, \(\nabla f(\bx_t)\), differing from standard SGD, which estimates gradients at the iterates, \(\nabla f(\bw_t)\).

This approach relates to Anytime-GD \citep{cutkosky2019anytime}, which is strongly connected to momentum and acceleration concepts \citep{cutkosky2019anytime, kavis2019unixgrad}. While the stochastic version of Anytime-GD typically uses the estimate \(\nabla f(\bx_t;\bz_t)\), \(\mu^2\)-SGD employs a variance reduction mechanism to produce a \emph{corrected momentum} estimate \(\bd_t\) \citep{cutkosky2019momentum}. Specifically, \(\bd_1 = \nabla f(\bx_1;\bz_1)\), and for \(t > 2\):
\begin{equation*}
\bd_{t}=\nabla f(\bx_{t};\bz_{t}) + (1-\beta_{t})(\bd_{t-1} - \nabla f(\bx_{t-1};\bz_{t})).
\end{equation*}
Here, \(\beta_t \in [0,1]\) are \emph{corrected momentum} weights. It can be shown by induction that \(\E[\bd_{t}] = \E[\nabla f(x_t)]\); however, in general, \(\E[\bd_{t} \vert x_t] \neq \nabla f(x_t)\), unlike standard SGD estimators. Nevertheless, \citep{levy2023mu} demonstrates that choosing \emph{corrected momentum} weights \(\beta_t := 1/t\) results in significant error reduction, with \(\E\|\varepsilon_t\|^2 := \E\|\bd_t - \nabla f(\bx_t)\|^2 \leq O(\tsigma^2/t)\) at step \(t\), where \(\tsigma^2 \leq O(\sigma^2 + D^2K^2 \sigma_L^2)\). This indicates that variance decreases with \(t\), contrasting with standard SGD where the variance \(\E\|\varepsilon^{\textnormal{SGD}}_t\|^2 := \E\|\bg_t - \nabla f(\bx_t)\|^2\) remains uniformly bounded.

\subsection{Asynchronous Robust $\mu^2$-SGD}
\begin{algorithm}[t]
\caption{Asynchronous Robust \(\mu^2\)-SGD}\label{alg:Asy}
\begin{algorithmic}[1] 
    \State \textbf{Input:} learning rate \(\eta_t > 0\), starting point \(\bx_{1} \in \K\), number of steps \(T\), importance weights \(\{\alpha_t\}_t\), momentum correction weights \(\{\beta_t\}_t\),  \((c_\lambda, \lambda)\)-robust weighted aggregation function \(\mathcal{A}_\omega\).
    \State \textbf{Initialize:} \(\forall i \in [m]\), set \(s_0^{(i)} = 0\). Set \(\bw_1 = \bx_1\).  Each honest worker \(i \in \GGG\) draws \(\bz^{(i)} \sim \mathcal{D}\) and set \(\bd^{(i)}_1 = \nabla f(\bx_1; \bz^{(i)})\).
    \For{\(t = 1\) \textbf{to} \(T\)} \Comment{\textit{Server update}}
        \State Receive \(\bd_{t-\tau_t}\) from worker \(i \in [m]\) and update:
        \State \hspace{\algorithmicindent} \(\bd_t^{(i)} = \bd_{t-\tau_t}\), \ \(s_t^{(i)} = s_{t-1}^{(i)} + 1\); \quad \(\forall \underline{j \neq i}\): set  \(s_t^{(j)} = s_{t-1}^{(j)}\), and for \(\underline{t>1}\): \(\bd_t^{(j)} = \bd_{t-1}^{(j)}\);
        \State Update server model:
        \State \hspace{\algorithmicindent} 
        \(\bw_{t+1} = \Pi_{\K}\left(\bw_{t} - \eta_t \alpha_t \mathcal{A}_\omega(\{\bd_t^{(j)}, s_t^{(j)}\}_{j=1}^m)\right)\)~,~~\&~~ \(\bx_{t+1} = \frac{1}{\sum_{k=1}^{t+1} \alpha_k} \sum_{k=1}^{t+1} \alpha_k \bw_k\)
        \State Send \(\bx_t\)  to worker \(i\). If \(i\) is an honest worker, it performs the following update:
        \State \hspace{\algorithmicindent} Worker \(i\) draws \(\bz_t \sim \D\), computes \(\bg_t = \nabla f(\bx_t; \bz_t)\)~,~~\&~~\(\Tilde{\bg}_{t-\tau_t} = \nabla f(\bx_{t-\tau_t}; \bz_t)\)~,
        \State \hspace{\algorithmicindent} and updates:
        \(\bd_t = \bg_t + (1 - \beta_t)(\bd_{t-\tau_t} - \Tilde{\bg}_{t-\tau_t})\) \Comment{\textit{Worker update}}
    \EndFor
    \State \textbf{Output:} \(\bx_T\)
\end{algorithmic}
\end{algorithm}
Building upon these, we integrate the \(\mu^2\)-SGD with a $(c_\lambda, \lambda)$-weighted robust aggregator $\A_\omega$, as described in Alg. \ref{alg:Asy}. At each iteration $t\in[T]$, the global $\mathcal{PS}$ receives an output from a certain worker and aggregates all workers' recent updates \( \left\{\bd^{(i)}_{{t}}\right\}_{i=1}^m\) by employing weights accordingly to the number of updates of each worker \( \left\{s^{(i)}_{{t}}\right\}_{i=1}^m\). An honest worker $i$ arriving at iteration $t$ returns its corrected momentum $\bd_t^{(i)}$ to the $\mathcal{PS}$, computed as:
\begin{align*}
     \bd_t^{(i)}=\bd_{t-\tau_t}=\bg_{t-\tau_t}+(1-\beta_{t-\tau_t})(\bd_{t-\tau_t-\tau_{t-\tau_t}}-\Tilde{\bg}_{t-\tau_t-\tau_{t-\tau_t}})~,
\end{align*}
where $\bg_{t}:=\nabla f(\bx_{t};\bz_{t})$, and $\Tilde{\bg}_{t-\tau_t}:=\nabla f(\bx_{t-\tau_t};\bz_{t})$. 
Afterwards, the $\mathcal{PS}$ performs the AnyTime update step as follows:
\begin{align*}
    \bw_{t+1}=\Pi_{\K}\left({\bw_{t}- \eta\alpha_{t} \mathcal{A}_\omega(\{\bd_t^{(i)}, s_t^{(i)}\}_{i=1}^m)}\right), \
\bx_{t+1}=\frac{1}{\alpha_{1:t+1}} \sum_{k\in[t+1]} \alpha_k \bw_k~.
\end{align*}
In the spirit of \cite{levy2023mu,dahanlevy2024}, we suggest employing \(\beta_t := 1/s_t\), which effectively considers the entire individual gradient's history of each worker; this translates to a stochastic error bound of \(\E\|\varepsilon^{(i)}_t\|^2 \leq O(\tilde{\sigma}^2/s_t)\) for an honest worker $i$ arriving at iteration $t$. To achieve an error corresponding to the total number of honest iterations $t_\GGG$, specifically \(\E\|\varepsilon_t\|^2 \leq O(\tilde{\sigma}^2/t_\GGG)\), as in the non-distributed setting \citep{levy2023mu},  a weighted collective error across all honest workers should be considered with weights determined by the number of honest worker arrivals, as detailed in Theorem \ref{thm:MainAsy}. The unique characteristics of \(\mu^2\)-SGD make it well-suited for the asynchronous Byzantine setting, where \(\lambda < \nicefrac{1}{2}\) relates to the fraction of Byzantine iterations. The total iteration number \(t\) matches the sum of the workers' frequencies (\(\sum_{i \in [\GGG]} s_t^{(i)} = t_\GGG\)), aligning with the weighted robust definition in Definition \ref{def2}. Using other approaches like momentum \citep{karimireddy2020byzantine, karimireddy2021learning, allouah2023fixing} is less straightforward in the asynchronous Byzantine setting, where their stochastic error does not align with the entire iterations, posing an additional challenge.

\begin{remark}[Memory and Computational Overhead of Algorithm \ref{alg:Asy}]
    Algorithm \ref{alg:Asy} incurs additional memory and computational costs compared to the asynchronous Byzantine-free setting \citep{arjevani2020tight}, where the server stores only one worker's output and the global model. Algorithm \ref{alg:Asy} stores the latest outputs from all workers, increasing memory usage to \( O(dm) \). Robust aggregation methods like $\omega$-CWMed \citep{yin2018byzantine} and ($\epsilon$-approximate) $\omega$-GM \citep{chen2017distributed, acharya2022robust} add a computational cost of \( O(dm\log m) \) and \( O(dm+d\epsilon^{-2}) \), respectively. This is in contrast to Byzantine-free settings where worker outputs are used directly without aggregation. Comparable overheads are observed in synchronous Byzantine-resilient methods, which similarly aggregate outputs from all workers. This reflects a necessary trade-off: achieving robustness inherently requires leveraging information from all workers to counteract the influence of potentially faulty ones.
\end{remark}

\begin{theorem}
\label{thm:MainAsy}
For a convex set \(\K\) with bounded diameter \(D\) and a function \(f:\K\mapsto\real\), and assume the assumptions in Equations~\eqref{eq:bounded-variance},\eqref{eq:Main},\eqref{eq:sigmal}. Then Alg.~\ref{alg:Asy} with parameters \(\{\alpha_t = t\}_t\) and \(\{\beta_t=1/s_{t}\}_t\) ensures the following for every \(t\in [T]\) and each honest worker \(i\in\GGG\):
\begin{gather*}
       \E\left\|{\varepsilon^{(i)}_{t}}\right\|^2 \leq  {\frac{\Tilde{\sigma}^2}{s^{(i)}_{t}}}, \quad
       \E\left\|\frac{1}{\sum_{i\in\GGG} s_t^{(i)}}{\sum_{i\in \GGG} s_{t}^{(i)} \varepsilon^{(i)}_{t}}\right\|^2 \leq \frac{\Tilde{\sigma}^2}{t_\GGG}~,
    \end{gather*}
where \(\varepsilon^{(i)}_{t} = \bd^{(i)}_{t}-\nabla f(\bx^{(i)}_{t})\), \(\tsigma^2=2\sigma^2 + 32D^2K^2 \sigma_L^2\), and $t_\GGG$ is the total number of honest iterations up to the $t^{\textnormal{th}}$ iteration.
\end{theorem}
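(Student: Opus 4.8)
The plan is to analyze the corrected‑momentum error of a single honest worker along its \emph{own} sequence of updates, and then combine the per‑worker bounds using cross‑worker independence. Fix an honest worker $i\in\GGG$ and index its updates by their order $k=1,2,\dots$; let $t_k$ be the global iteration of its $k$-th update, so that $s_{t_k}^{(i)}=k$, $\beta_{t_k}=1/k$, and (by the delay convention) $t_k-\tau_{t_k}=t_{k-1}$. Writing $\hat\varepsilon_k:=\bd_{t_k}-\nabla f(\bx_{t_k})$ and substituting $\bd_{t_{k-1}}=\hat\varepsilon_{k-1}+\nabla f(\bx_{t_{k-1}})$ into the worker update rule, I would obtain the recursion
\begin{equation*}
\hat\varepsilon_k=\Big(1-\tfrac1k\Big)\hat\varepsilon_{k-1}+\xi_k,\qquad \xi_k:=Z_k-\Big(1-\tfrac1k\Big)\tilde Z_k,
\end{equation*}
where $Z_k:=\nabla f(\bx_{t_k};\bz_{t_k})-\nabla f(\bx_{t_k})$ and $\tilde Z_k:=\nabla f(\bx_{t_{k-1}};\bz_{t_k})-\nabla f(\bx_{t_{k-1}})$ both use the same fresh sample $\bz_{t_k}$. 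Unrolling and using the telescoping identity $\prod_{l=a+1}^{k}(1-1/l)=a/k$ gives the closed form $\hat\varepsilon_k=\tfrac1k\sum_{a=1}^{k}a\,\xi_a$ with deterministic coefficients.

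The crux is to show $\E\|\xi_k\|^2\leq \tsigma^2/k^2$. I would split $\xi_k=(Z_k-\tilde Z_k)+\tfrac1k\tilde Z_k$, so that $\E\|\xi_k\|^2\leq 2\E\|Z_k-\tilde Z_k\|^2+\tfrac{2}{k^2}\E\|\tilde Z_k\|^2$; the second term is at most $2\sigma^2/k^2$ by \eqref{eq:bounded-variance}. For the first, $Z_k-\tilde Z_k$ is exactly the quantity appearing in the bounded‑smoothness‑variance assumption \eqref{eq:sigmal}, so $\E\|Z_k-\tilde Z_k\|^2\leq \sigmal^2\|\bx_{t_k}-\bx_{t_{k-1}}\|^2$. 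Here I expect the \emph{main obstacle}: bounding the drift of the Anytime average over one delay interval. Using $\alpha_t=t$ the weighted average satisfies $\|\bx_{s+1}-\bx_s\|\leq 2D/(s+2)$ (since $\bx_{s+1}-\bx_s=\tfrac{\alpha_{s+1}}{\alpha_{1:s+1}}(\bw_{s+1}-\bx_s)$ and the diameter is $D$), whence telescoping over the $\tau_{t_k}$ iterations between $t_{k-1}$ and $t_k$ gives $\|\bx_{t_k}-\bx_{t_{k-1}}\|\leq 2D\,\tau_{t_k}/t_{k-1}$. The bounded‑delay assumption \eqref{eq:delay}, together with $t_{k-1}\geq (k-1)\tau^{(i)}_{\min}$ and $\tau_{t_k}\leq K\tau^{(i)}_{\min}$, then yields $\tau_{t_k}/t_{k-1}\leq K/(k-1)$ and hence $\|\bx_{t_k}-\bx_{t_{k-1}}\|^2\leq 4D^2K^2/(k-1)^2$. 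Combining and using $(k-1)^2\geq k^2/4$ for $k\geq 2$ (and $\E\|\xi_1\|^2\leq\sigma^2$ for the base step) produces exactly $\E\|\xi_k\|^2\leq(2\sigma^2+32D^2K^2\sigmal^2)/k^2=\tsigma^2/k^2$.

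For the first claimed bound I would note that $\{\xi_a\}$ is a martingale‑difference sequence: by sample‑arrival independence the fresh sample $\bz_{t_a}$ is independent of everything determined before its draw, so $\E[\xi_a\mid\mathcal F_{a-1}]=0$ for the appropriate filtration while the points in $\xi_a$ and all earlier $\xi$'s are measurable. Hence the cross terms vanish and $\E\|\hat\varepsilon_k\|^2=\tfrac1{k^2}\sum_{a=1}^{k}a^2\,\E\|\xi_a\|^2\leq\tfrac1{k^2}\sum_{a=1}^{k}\tsigma^2=\tsigma^2/k$, which is the first inequality since $k=s_t^{(i)}$. For the second bound I would establish cross‑worker orthogonality: for distinct honest $i\neq j$, expand $\varepsilon_t^{(i)}$ and $\varepsilon_t^{(j)}$ as deterministic‑coefficient sums of their respective noise terms and, for each pair of noise terms, condition on the history strictly before the later of the two fresh draws (distinct, since one worker arrives per iteration); the corresponding factor is zero‑mean and the other is measurable, so $\E\langle\varepsilon_t^{(i)},\varepsilon_t^{(j)}\rangle=0$. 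Then, with $\sum_{i\in\GGG}s_t^{(i)}=t_\GGG$,
\begin{equation*}
\E\Big\|\sum_{i\in\GGG}s_t^{(i)}\varepsilon_t^{(i)}\Big\|^2=\sum_{i\in\GGG}\big(s_t^{(i)}\big)^2\E\|\varepsilon_t^{(i)}\|^2\leq\sum_{i\in\GGG}\big(s_t^{(i)}\big)^2\frac{\tsigma^2}{s_t^{(i)}}=\tsigma^2\,t_\GGG,
\end{equation*}
and dividing by $t_\GGG^2$ gives the stated $\tsigma^2/t_\GGG$. Beyond the delay‑interval drift bound, the second genuinely delicate point is justifying the martingale/independence structure rigorously despite the feedback between the model trajectory and the samples, which I would handle by conditioning on the delay pattern (independent of the data) throughout.
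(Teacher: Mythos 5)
Your proof is correct and follows essentially the same route as the paper's: you unroll the corrected\hyp{}momentum recursion into a weighted sum of martingale\hyp{}difference noise terms (your $a\,\xi_a$ is exactly the paper's $\MMM_a^{(i)}$), bound each term via the bounded-variance and bounded-smoothness-variance assumptions together with a drift bound of order $DK/(k-1)$ on the Anytime average over one delay interval, and sum variances by orthogonality within and across honest workers, arriving at the identical constant $\tsigma^2=2\sigma^2+32D^2K^2\sigma_L^2$. The only cosmetic deviations are that you derive the drift bound by telescoping the single-step displacement $\|\bx_{s+1}-\bx_s\|\le 2D/(s+2)$ instead of invoking Lemma~\ref{lem:PointDistworker}, and you prove the collective bound via pairwise cross-worker orthogonality rather than the paper's single global martingale over the ordered honest samples; both variations are sound.
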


\begin{proof}[Proof Sketch of Thm.~\ref{thm:MainAsy}]

The complete analysis is provided in App. \ref{app:main}. It involves several key steps for an honest $i$ worker who arrives at iteration $t$:
\begin{enumerate}[leftmargin=1em]
    \item
   Following Lemma \ref{lem:PointDistworker}, the distance between successive query points:$\|{\bx}^{(i)}_{t} - {\bx}^{(i)}_{t-\tau_t}\| \leq \frac{4K}{s^{(i)}_t-1} D$
   \item We analyze the recursive dynamics of the error term ${\varepsilon_{t}^{(i)}}$ by setting $\beta_t = \frac{1}{s^{(i)}_t}$ and obtain:
   $$s^{(i)}_t{\varepsilon}^{(i)}_{t} = (\bg^{(i)}_{t} - \nabla f({\bx}^{(i)}_{t})) + (s^{(i)}_t-1)Z^{(i)}_{t} + (s^{(i)}_t-1){\varepsilon}^{(i)}_{t-\tau_t}~,
   $$
   where $Z^{(i)}_{t}:=\bg_{t}^{(i)}-\nabla f({\bx}_{t}^{(i)})-(\Tilde{\bg}_{t-\tau_t}^{(i)}-\nabla f({\bx}_{t-\tau_t}^{(i)}))$. Unrolling this recursion provides an explicit expression: 
   $s_t^{(i)}{\varepsilon}^{(i)}_{t} = \sum_{k\in[s^{(i)}_t]}\MMM^{(i)}_k~,$
   where $\MMM^{(i)}_{s^{(i)}_t}:=\bg^{(i)}_{t}-\nabla f({\bx}^{(i)}_{t})+(s_t-1)Z^{(i)}_{t}$; thus, $\{ \MMM_k^{(i)}\}_{k\in[s_t^{(i)}]}$  is a martingale difference sequence.
   \item  Employing the above with Eq. \eqref{eq:bounded-variance} and \eqref{eq:sigmal}, we have: $\E\|\MMM^{(i)}_k\|^2 \leq 2\sigma^2 + 32D^2K^2\sigma_L^2 = \tsigma^2$.
   \item Leveraging the properties of a martingale difference sequence, we have:
   \begin{gather*}
    \E\left\|{s^{(i)}_t\varepsilon_{t}^{(i)}} \right\|^2 =\E\left\|\sum_{k\in\left[s^{(i)}_t\right]}\MMM_k^{(i)}\right\|^2  =\sum_{k\in\left[s^{(i)}_t\right]}\E\left\|\MMM_k^{(i)}\right\|^2  \leq \tsigma^2 s_t^{(i)} ~,\\
    \E\left\|{\sum_{i\in {\GGG}}s^{(i)}_t\varepsilon_t^{(i)}} \right\|^2 = \E
        \left\|\sum_{i\in {\GGG} }\sum_{k\in\left[s^{(i)}_t\right]} \MMM_k^{(i)}\right\|^2 = \sum_{i\in {\GGG} }\sum_{k\in\left[s^{(i)}_t\right]}\E
        \left\|\MMM_k^{(i)}\right\|^2
         \leq \tsigma^2 \sum_{i\in {\GGG}} s^{(i)}_t 
        = \tsigma^2t_{\GGG}~.
\end{gather*}
\end{enumerate}
\end{proof}
\begin{remark}
    Compared to synchronous scenarios \citep{levy2023mu, dahanlevy2024}, the variance \(\tsigma\) in Thm. \ref{alg:Asy} additionally includes the variance in the delay, denoted as \(K\) (Eq. \eqref{eq:delay}). In balanced scheduling methods, like Round Robin \citep{langford2009slow}, the impact of \(K\) on the error becomes minor, as the delay \(\tau^{(i)}_t=m\) is constant. In the case of constant delays, the factor \(K\) equals $1$.  
\end{remark}

\begin{lemma}
\label{lem:asyncFilter}
Let $\A_\omega$ be $(c_\lambda,\lambda)$-weighted robust aggregation rule and let $f:\K\mapsto\real$, where $\K$ is a convex set with bounded diameter $D$, and presume that the assumption in Equations~\eqref{eq:bounded-variance},\eqref{eq:Main},\eqref{eq:sigmal} hold.  Then invoking Alg.~\ref{alg:Asy} with $\{\alpha_t = t\}_t$ and $\{\beta_t=1/s_t\}_t$, ensures the following for any $t\in[T]$,
\begin{align*}
    \E\left\|\hat{\bd}_t-\nabla f(\bx_t) \right\|^2 \leq O\left(\underbrace{\frac{\tsigma^2}{t} + \frac{c_\lambda m\tsigma^2}{t}}_{\mathrm{Variance}} + \underbrace{\frac{(\tau^{max}_{t}DL)^2}{t^2} + \frac{c_\lambda(\tau^{max}_{t}DL)^2}{t^2}}_{\mathrm{Bias}}\right)
\end{align*}
where $\hat{\bd}_t=\mathcal{A}_\omega(\{\bd_t^{(i)}, s_t^{(i)}\}_{i=1}^m)$, $\tau_t^{max}=\max_{i\in[m]} \{\tau_t^{(i)}\}$, and \(\tsigma^2=2\sigma^2 + 32D^2K^2 \sigma_L^2\).
\end{lemma}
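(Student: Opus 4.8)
The plan is to bound $\E\|\hat{\bd}_t - \nabla f(\bx_t)\|^2$ by splitting it around the honest weighted average $\bar{\bd}_\GGG := \frac{1}{\sum_{i\in\GGG} s_t^{(i)}}\sum_{i\in\GGG} s_t^{(i)}\bd_t^{(i)}$, which is precisely the anchor appearing in Definition~\ref{def2}. Using $\|a+b\|^2 \le 2\|a\|^2 + 2\|b\|^2$, I would write
\[
\E\|\hat{\bd}_t - \nabla f(\bx_t)\|^2 \le 2\underbrace{\E\|\hat{\bd}_t - \bar{\bd}_\GGG\|^2}_{\text{(I): aggregation error}} + 2\underbrace{\E\|\bar{\bd}_\GGG - \nabla f(\bx_t)\|^2}_{\text{(II): honest-average error}}.
\]
Term (II) I would further split, using $\sum_{i\in\GGG}s_t^{(i)} = t_\GGG$, into the collective honest error $\frac{1}{t_\GGG}\sum_{i\in\GGG}s_t^{(i)}\varepsilon_t^{(i)}$ and a delay bias $\frac{1}{t_\GGG}\sum_{i\in\GGG}s_t^{(i)}(\nabla f(\bx_t^{(i)}) - \nabla f(\bx_t))$.

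For (II), the collective-error part is controlled directly by the second bound of Theorem~\ref{thm:MainAsy}, giving $\tsigma^2/t_\GGG$; since Eq.~\eqref{eq:boundedIterations} yields $t_\GGG = t - t_\BBB \ge (1-\lambda)t \ge t/2$, this is $O(\tsigma^2/t)$. For the bias part the essential ingredient is a bound on how far a worker's stale query point $\bx_t^{(i)}=\bx_{t^{(i)}}$ sits from the current average $\bx_t$. Here I would exploit the AnyTime structure: from $\bx_{k+1}-\bx_k = \frac{\alpha_{k+1}}{\alpha_{1:k+1}}(\bw_{k+1}-\bx_k)$ with $\alpha_k=k$, one has $\frac{\alpha_{k+1}}{\alpha_{1:k+1}}=\frac{2}{k+2}$, so $\|\bx_{k+1}-\bx_k\|\le \frac{2}{k+2}D$ (both $\bw_{k+1},\bx_k\in\K$). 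Telescoping over the at most $\tau_t^{(i)}\le\tau_t^{max}$ steps separating $t^{(i)}$ from $t$ gives $\|\bx_t-\bx_t^{(i)}\|\le O(\tau_t^{max}D/t)$ (matching Lemma~\ref{lem:PointDistworker} in spirit); smoothness, Eq.~\eqref{eq:Main}, converts this to $\|\nabla f(\bx_t^{(i)})-\nabla f(\bx_t)\|\le O(\tau_t^{max}DL/t)$, and convexity of $\|\cdot\|^2$ upgrades the weighted average, so (II) $\le O(\tsigma^2/t + (\tau_t^{max}DL)^2/t^2)$.

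For (I), I invoke the $(c_\lambda,\lambda)$-weighted robustness of $\A_\omega$ with inputs $\bd_t^{(i)}$ and weights $s_t^{(i)}$. The honest-weight condition $\sum_{i\in\GGG}s_t^{(i)}=t_\GGG\ge(1-\lambda)\sum_i s_t^{(i)}=(1-\lambda)t$ holds by Eq.~\eqref{eq:boundedIterations}, so Definition~\ref{def2} gives $\E\|\hat{\bd}_t-\bar{\bd}_\GGG\|^2\le c_\lambda\rho^2$ with $\rho^2=\frac{1}{t_\GGG}\sum_{i\in\GGG}s_t^{(i)}\rho_i^2$ and $\rho_i^2\ge\E\|\bd_t^{(i)}-\bar{\bd}_\GGG\|^2$. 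I would obtain $\rho_i^2$ by writing $\bd_t^{(i)}-\bar{\bd}_\GGG=(\bd_t^{(i)}-\nabla f(\bx_t))-(\bar{\bd}_\GGG-\nabla f(\bx_t))$: the first piece splits into $\varepsilon_t^{(i)}$ (bounded by $\tsigma^2/s_t^{(i)}$ via Theorem~\ref{thm:MainAsy}) plus the same per-worker delay bias, and the second piece is exactly (II). This yields $\rho_i^2=O(\tsigma^2/s_t^{(i)}+\tsigma^2/t+(\tau_t^{max}DL)^2/t^2)$. The decisive cancellation then happens in forming $\rho^2$: the weighted sum $\sum_{i\in\GGG}s_t^{(i)}\cdot\tsigma^2/s_t^{(i)}=|\GGG|\tsigma^2\le m\tsigma^2$, and dividing by $t_\GGG\ge t/2$ turns the leading variance into $O(m\tsigma^2/t)$. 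Hence (I) $\le c_\lambda\rho^2=O(c_\lambda m\tsigma^2/t + c_\lambda(\tau_t^{max}DL)^2/t^2)$, and adding (I) and (II) reproduces the claimed bound.

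The main obstacle I anticipate is the formation of $\rho^2$: the heterogeneous per-worker variances $\tsigma^2/s_t^{(i)}$ must be combined with the weights $s_t^{(i)}$ so that the $s_t^{(i)}$ cancel and leave only the factor $|\GGG|\le m$. This is exactly why the robust (aggregation) term must pay an extra $m$, whereas the honest-average term (II) enjoys the sharper $\tsigma^2/t_\GGG$ coming from the cross-worker martingale cancellation established in Theorem~\ref{thm:MainAsy}; keeping these two error sources separate is the conceptual heart of the argument. A second delicate point is the delay-bias estimate, which hinges on the AnyTime weighted-averaging geometry rather than raw SGD increments — it is the explicit $\alpha_k=k$ schedule that makes the per-step displacement $\frac{2}{k+2}D$ summable to order $\tau_t^{max}D/t$. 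Finally, invoking Definition~\ref{def2} presupposes that the honest inputs $\{\bd_t^{(i)}\}_{i\in\GGG}$ meet its independence hypothesis and that Sample-Arrival Independence is in force (so the deterministic delay bounds can be pulled outside the expectation); I would confirm these conditions hold for Alg.~\ref{alg:Asy} before applying the aggregation guarantee.
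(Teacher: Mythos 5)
Your proposal is correct and follows essentially the same route as the paper's proof: the same split of $\hat{\bd}_t-\nabla f(\bx_t)$ around the weighted honest average, the same use of Definition~\ref{def2} with $\rho_i^2 = O(\tsigma^2/s_t^{(i)} + (\tau_t^{max}DL)^2/t^2)$ so that the weights cancel to give the factor $m$, the same appeal to both parts of Theorem~\ref{thm:MainAsy}, and the same $t_\GGG \ge (1-\lambda)t \ge t/2$ step. The only cosmetic differences are that the paper anchors the per-worker deviation at $\bar{\nabla}_{\GGG,t}$ (its bias term $\BBB^2_t$, via a pairwise version of Lemma~\ref{lem:PointDistworker}) rather than at $\nabla f(\bx_t)$, and derives the stale-point distance from the weighted-average identity rather than by telescoping per-step AnyTime increments; both yield the same bounds.
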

Lemma \ref{lem:asyncFilter} shows that the error between our gradient estimator \(\hat{\bd}_t\) and the true gradient includes a bias term arising from the aggregation of delayed momentums. This is in contrast to the synchronous scenario \citep{dahanlevy2024} where the error is solely variance-dependent without any bias component. However, this bias does not affect the overall excess loss (Theorem \ref{thm:AsymuSGD}), which remains comparable to the optimal rate achieved in synchronous Byzantine settings (see Remark \ref{remark:sync}).

By integrating the weighted robust aggregator with the double momentum mechanism, we achieve the optimal convergence rate for the first time in an asynchronous Byzantine setting—a significant advancement over previous efforts.
\begin{theorem}[Asynchronous Byzantine $\mu^2$-SGD Guarantees]\label{thm:AsymuSGD}
Let $\A_\omega$ be $(c_\lambda,\lambda)$-weighted robust aggregation rule and let $f$ be a convex function. 
Also, let us make the same assumptions as in Thm.~\ref{thm:MainAsy}, and let us denote $G^*:=\| \nabla f(\bx^*)\|$, where $ \bx^* \in \arg\min_{\bx\in\K} f(\bx)$.
Then invoking Alg.~\ref{alg:Asy} with $\{\alpha_t = t\}_t$ and $\{\beta_t=1/s_{t}\}_t$, and using a learning rate $\eta\leq 1/4LT$ guarantees,
\begin{align*}
    \E\left[f(\bx_T) - f(\bw^*)\right] 
    & \leq O\left( \frac{G^*D+LD^2\mu_{max}\sqrt{1+c_\lambda}}{T} + \frac{D{\tsigma}\sqrt{1+c_\lambda m}}{\sqrt{T}}\right)
\end{align*}
where $\tsigma^2 = 2\sigma^2 + 32D^2K^2 \sigma_L^2$, \ $\mu_{max}=\frac{1}{T}\sum_{t\in[T]} \tau_t^{max}$, \ and $\tau_t^{max}=\max_{i\in[m]} \{\tau_t^{(i)}\}$.
\end{theorem}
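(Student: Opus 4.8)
The plan is to proceed by the standard Anytime-GD analysis adapted to the weighted-robust asynchronous setting, treating the aggregated gradient $\hat{\bd}_t = \A_\omega(\{\bd_t^{(i)}, s_t^{(i)}\}_{i=1}^m)$ as a biased, noisy estimate of $\nabla f(\bx_t)$ whose quality is precisely quantified by Lemma~\ref{lem:asyncFilter}. First I would invoke the Anytime-GD guarantee associated with the update $\bw_{t+1}=\Pi_\K(\bw_t - \eta\alpha_t \hat{\bd}_t)$ together with $\bx_{t+1}=\frac{1}{\alpha_{1:t+1}}\sum_{k\le t+1}\alpha_k \bw_k$. Because $\{\bx_t\}$ is an $\alpha$-weighted average of the iterates, convexity of $f$ gives a bound of the form $\alpha_{1:T}\,\E[f(\bx_T)-f(\bx^*)] \le \sum_{t}\alpha_t\,\E\langle \hat{\bd}_t, \bw_t-\bx^*\rangle$, and the online-to-batch / regret machinery of Anytime-GD converts the right-hand side into a regret term controlled by the standard descent-lemma telescoping. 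With $\alpha_t=t$ we have $\alpha_{1:T}=\Theta(T^2)$, so any additive error on the numerator is divided by $T^2$, which is what will ultimately produce the $1/T$ and $1/\sqrt{T}$ rates.

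Second, the crucial move is to split $\hat{\bd}_t = \nabla f(\bx_t) + (\hat{\bd}_t - \nabla f(\bx_t))$ and carry the error term $\hat{\bd}_t - \nabla f(\bx_t)$ through the regret sum. The inner product $\langle \hat{\bd}_t-\nabla f(\bx_t), \bw_t-\bx^*\rangle$ is handled by Cauchy--Schwarz together with the bounded diameter $D$, while the smoothness of $f$ supplies the usual quadratic term $-\frac{1}{2\eta}\|\bw_{t+1}-\bw_t\|^2$-type gains in the descent lemma that let us absorb the gradient-norm contributions provided $\eta\le 1/4LT$. I would then substitute the two-part bound from Lemma~\ref{lem:asyncFilter}: the variance part $\tsigma^2(1+c_\lambda m)/t$ and the bias part $(\tau_t^{max}DL)^2(1+c_\lambda)/t^2$. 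Summing $\alpha_t=t$ against the variance term gives $\sum_t t\cdot \tsigma^2(1+c_\lambda m)/t = T\,\tsigma^2(1+c_\lambda m)$, and after dividing by $\alpha_{1:T}=\Theta(T^2)$ and taking the square root (since the error enters as a norm via Cauchy--Schwarz, one applies Jensen to pass from $\E\|\cdot\|^2$ to $\E\|\cdot\|$) this yields the $D\tsigma\sqrt{1+c_\lambda m}/\sqrt{T}$ statistical term. Summing $\alpha_t=t$ against the bias term gives $\sum_t t\cdot (\tau_t^{max}DL)^2(1+c_\lambda)/t^2 = \sum_t (\tau_t^{max}DL)^2(1+c_\lambda)/t$; here I would use the definition $\mu_{max}=\frac{1}{T}\sum_t \tau_t^{max}$ (likely with a Cauchy--Schwarz or concavity step to pull the $\tau_t^{max}$ out as $\mu_{max}$) to land the optimization term $LD^2\mu_{max}\sqrt{1+c_\lambda}/T$, with the $G^*D/T$ piece coming from the boundary/projection term involving $\nabla f(\bx^*)$ at the optimum.

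The main obstacle I anticipate is correctly propagating the \emph{bias} term, which is genuinely new relative to the synchronous analysis of \cite{dahanlevy2024} where $\hat{\bd}_t$ is (conditionally) a clean estimate. Because $\hat{\bd}_t - \nabla f(\bx_t)$ has a nonzero mean contribution from delayed momentums, it enters the regret through a first-order (not merely second-order variance) channel, so I must ensure the Anytime-GD descent lemma tolerates a biased step direction --- this is why the error is carried as $\E\|\hat{\bd}_t-\nabla f(\bx_t)\|$ through Cauchy--Schwarz against the bounded diameter rather than vanishing in expectation. The delicate accounting is in how $\tau_t^{max}$ aggregates across $t$: a naive bound would give $\frac{1}{T}\sum_t (\tau_t^{max})^2$ rather than the stated $\mu_{max}$, so I expect the clean $\mu_{max}$ dependence requires either an averaging/Jensen argument on the $\tau_t^{max}$ sequence or absorbing a squared-delay factor into the $1/t$ weighting. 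Finally, one must verify that the condition $\eta\le 1/4LT$ is exactly what lets the smoothness-induced quadratic terms be dominated so that no residual $L\eta$-scaled variance survives, closing the bound at the claimed rate that degrades gracefully with $c_\lambda$ and, in the Byzantine-free limit $c_\lambda\to 0,\ \lambda\to 0$, recovers the optimal asynchronous rate.
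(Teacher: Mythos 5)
Your proposal follows essentially the same route as the paper: the Anytime-GD guarantee combined with the projected-step regret bound, Cauchy--Schwarz against the diameter $D$ to carry $\E\|\hat{\bd}_t-\nabla f(\bx_t)\|$ through the regret sum, smoothness plus $\eta\le 1/4LT$ to absorb the quadratic terms via a self-bounding recursion, and Lemma~\ref{lem:asyncFilter} with Jensen to control the error. The one point you flagged as delicate resolves exactly as you guessed: applying Jensen (and $\sqrt{a+b}\le\sqrt{a}+\sqrt{b}$) \emph{before} summing turns the bias into a first-order term $\tau_t^{max}DL\sqrt{1+c_\lambda}/t$, so $\sum_t \alpha_t\E\|\EEE_t\|$ picks up $\sum_t \tau_t^{max}=T\mu_{max}$ rather than any squared-delay quantity.
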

\begin{remark}
    In the absence of Byzantine iterations (\(\lambda=0\)), the parameter \(c_\lambda\) of a \((c_\lambda, \lambda)\)-weighted robust aggregator can diminish to 0 when we use $\omega$-CTMA (see Table \ref{tab:asyncFilters}). This aligns with the asynchronous SGD analysis \citep{arjevani2020tight} and represents the first work to achieve optimal convergence without Byzantine workers compared to previous efforts \citep{yang2021basgd, yang2023buffered, fang2022aflguard, zhu2023asynchronous, damaskinos2018asynchronous, xie2020zeno++}.
\end{remark}
\begin{remark}
    Unlike previous works \citep{yang2021basgd, yang2023buffered, fang2022aflguard, zhu2023asynchronous, damaskinos2018asynchronous, xie2020zeno++}, our convergence rate is independent of data dimensionality $d$ and is sublinear at $T$, even in the presence of Byzantine workers.
\end{remark}
\begin{remark}
\label{remark:sync}
This result is consistent with the synchronous scenario \citep{dahanlevy2024}, where the delay is constant \(\tau_t = m\) as in Round Robin \citep{langford2009slow}. In this case, the proportion of Byzantine workers is \(\lambda\), and the asynchronous excess loss is \(\leq O\left(\frac{LD^2m}{T}+\frac{D\tilde{\sigma}\sqrt{1+c_\lambda m}}{\sqrt{T}}\right)\). In comparison to the synchronous case, where \(m\) workers perform \(R\) rounds, here we make \(R\) query point updates and \(T = Rm\) data-samples, resulting in synchronous excess loss \(\leq O\left(\frac{LD^2}{R}+\frac{D\tilde{\sigma}\sqrt{1/m+c_\lambda}}{\sqrt{R}}\right) = O\left(\frac{LD^2m}{T}+\frac{D\tilde{\sigma}\sqrt{1+mc_\lambda}}{\sqrt{T}}\right)\) \citep{dahanlevy2024}.
\end{remark}

\section{Experiments}
To evaluate the effectiveness of our proposed approach, we conducted experiments on MNIST \citep{lecun2010mnist} and CIFAR-10 \citep{krizhevsky2014cifar} datasets—two recognized benchmarks in image classification tasks. We employed a two-layer convolutional neural network architecture for both datasets, implemented using the PyTorch framework. The training was performed using the cross-entropy loss function, and all computations were executed on an NVIDIA L40S GPU. To ensure the robustness of our findings, each experiment was repeated with three different random seeds, and the results were averaged accordingly. Our experimental results demonstrate consistent performance across both datasets. Further details about the experimental setup and the complete results are provided in Appendix  \ref{app:exp}.

{\textbf{Weighted vs. Non-Weighted Robust Aggregators}.} We evaluated the test accuracy of weighted and non-weighted robust aggregators in imbalanced asynchronous Byzantine environments. Our experiments show that weighted robust aggregators consistently achieved higher test accuracy than the non-weighted ones (see Figure \ref{fig:weighted_vs_nonweighted} and Figure \ref{fig:weighted}). This highlights the benefit of prioritizing workers who contribute more updates in asynchronous setups.
\begin{figure}[H]
\centering
\includegraphics[width=0.7\linewidth]{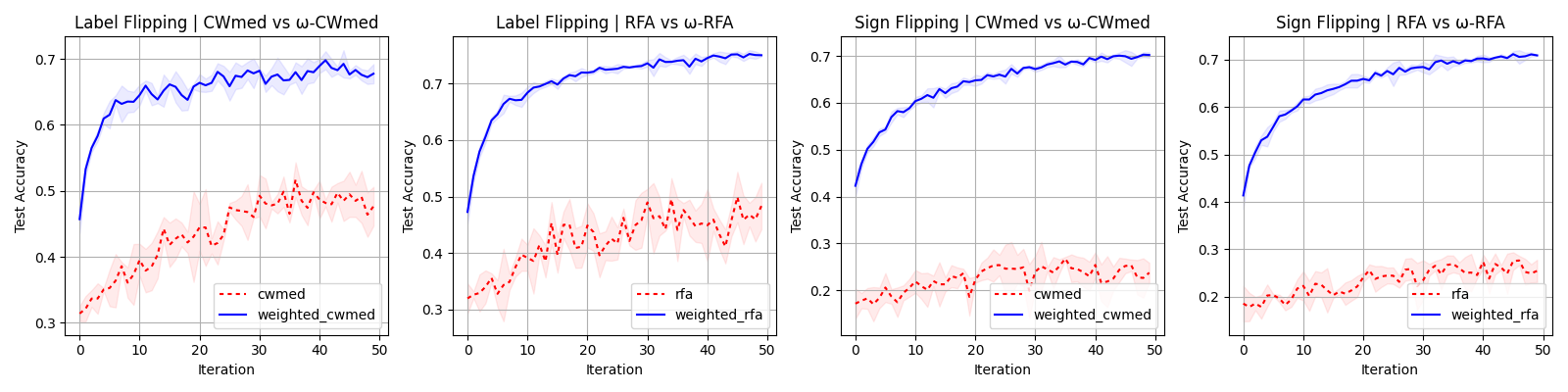}
\caption{\smaller \textbf{CIFAR-10}. \textbf{Test Accuracy of Weighted vs. Non-Weighted Robust Aggregators}. This scenario involves 17 workers, including 8 Byzantine workers, with workers' arrival probabilities proportional to the square of their IDs. We used the $\mu^2$-SGD in this scenario. {Left}: \textit{label flipping}, $\lambda = 0.3$. {Right}: \textit{sign flipping}, $\lambda = 0.4$.}
\label{fig:weighted_vs_nonweighted}
\end{figure}

{\textbf{Effectiveness of $\omega$-CTMA}.} We evaluated the test accuracy of weighted robust aggregators with and without the integration of $\omega$-CTMA, as shown in Figure \ref{fig:ctma_comparison} and Figure \ref{fig:ctma}. The results demonstrate that $\omega$-CTMA can enhance the performance of weighted robust aggregators in various Byzantine scenarios for both datasets.
\begin{figure}[H]
\centering
\includegraphics[width=0.7\linewidth]{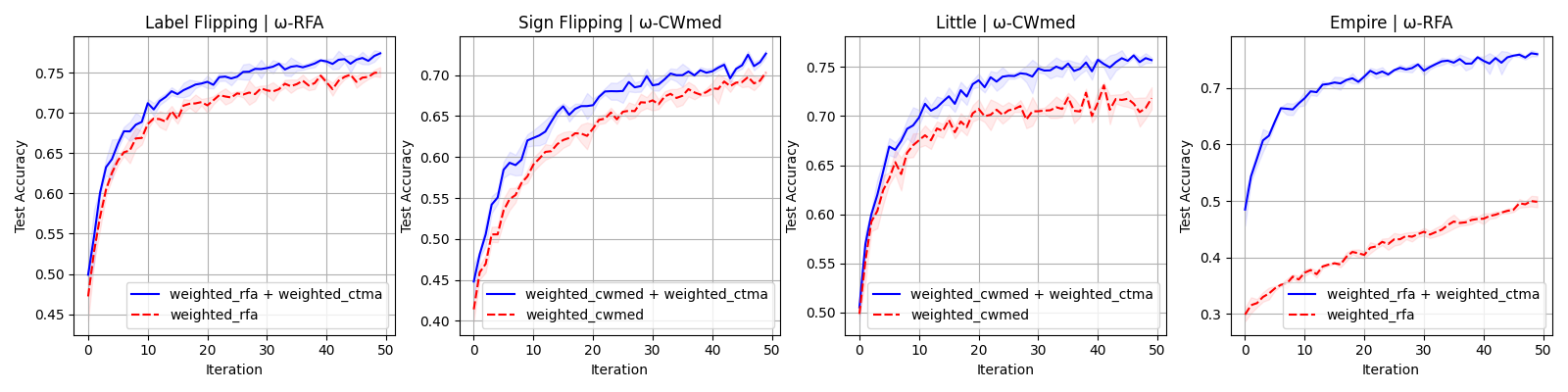}
\caption{\smaller \textbf{CIFAR-10}. \textbf{Test Accuracy Comparison of Weighted Robust Aggregators With and Without $\omega$-CTMA}. This scenario involves 9 workers, including either 1 or 3 Byzantine workers. The arrival probabilities of workers are proportional to their IDs, and we employed $\mu^2$-SGD. On the \textit{left}, the \textit{label flipping} and \textit{sign flipping} attacks are depicted with $\lambda = 0.3$ and $\lambda = 0.4$, respectively, using 3 Byzantine workers. On the \textit{right}, the \textit{little} attack is shown with $\lambda = 0.1$ and 1 Byzantine worker, and the \textit{empire} attack is shown with $\lambda = 0.4$ and 3 Byzantine workers.}
\label{fig:ctma_comparison}
\end{figure}

{\textbf{Performance of $\mu^2$-SGD vs. Standard Momentum and SGD}.} We evaluated the test accuracy of $\mu^2$-SGD in comparison to standard momentum \citep{polyak1964some} and SGD \citep{eon1998online} within an asynchronous Byzantine setup. Figure \ref{fig:optimizers} and Figure \ref{fig:optimizer} show that $\mu^2$-SGD performs on par with standard momentum, while SGD generally exhibits poorer performance relative to both. These results underscore the importance of utilizing historical information when addressing Byzantine scenarios.
\begin{figure}[H]
\centering
\includegraphics[width=0.7\linewidth]{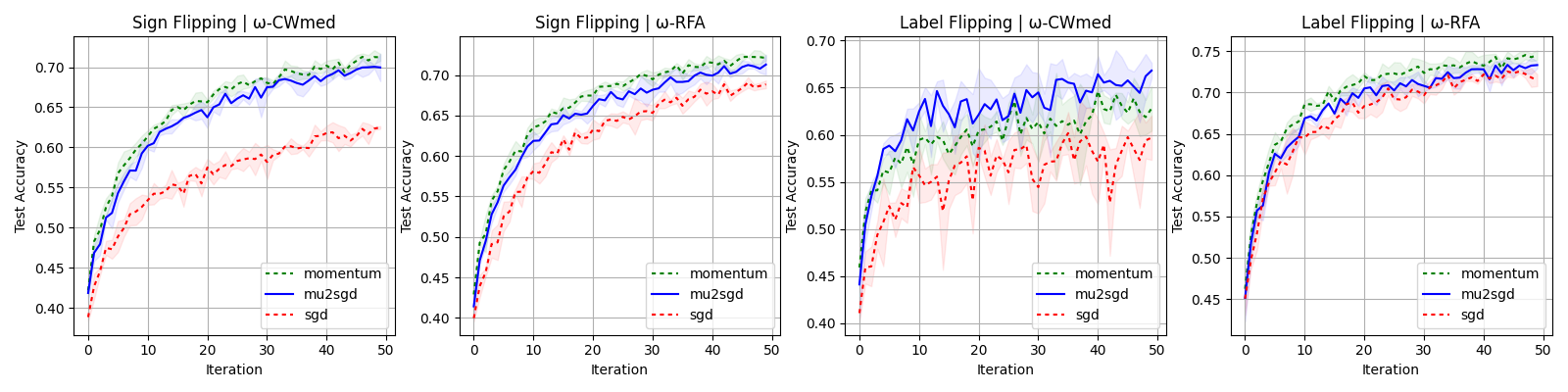}
\caption{\smaller \textbf{CIFAR-10}. \textbf{Test Accuracy Comparison Among Different Optimizers}. This scenario involves 9 workers (4 Byzantine) with $\lambda=0.4$, and workers' arrival probabilities are proportional to their IDs. {Left}: \textit{sign flipping}. {Right}: 
 \textit{label flipping}. }
\label{fig:optimizers}
\end{figure}

\section*{Conclusions and Future Work}
This paper shows that using a double momentum approach, which incorporates the entire history of each honest worker, improves the stochastic error bound to be proportional to the total number of updates when considering their weighted average in asynchronous settings. By integrating this method with a weighted robust framework, \(\mu^2\)-SGD achieves an optimal convergence rate, making it particularly effective for asynchronous Byzantine environments. However, integrating other optimization algorithms, like momentum, into this weighted robust framework can be challenging, as they do not achieve an error bound proportional to the total number of updates and may complicate the adjustment of weights based on the update count. This highlights the need for further research to adapt different methods to the spirit of this framework in non-convex and convex settings. 

\section*{Acknowledgement}
This research was partially supported by Israel PBC-VATAT, the Technion Artificial Intelligent Hub (Tech.AI), and the Israel Science Foundation (grant No. 3109/24).

\bibliographystyle{plainnat} 
\bibliography{bib}

\clearpage

\appendix

\section{Bounded Smoothness Variance Assumption}

\label{sec:sigmal}
We show that Eq.~\eqref{eq:Main} implies that Eq.~\eqref{eq:sigmal} holds for some $\sigmal^2 \in[0,L^2]$.
\begin{align*}
\E\|(\nabla f(\bx;\bz)-\nabla f(\bx)) - (\nabla f(\by;\bz)-\nabla f(\by))\|^2 &= 
\E\|\nabla f(\bx;\bz)-\nabla f(\by;\bz)\|^2 - \|\nabla f(\bx)-\nabla f(\by))\|^2 \\
&\leq L^2 \|\bx-\by\|^2~.  
\end{align*} 
Here, we also used  $\E[\nabla f(\bx;\bz)-\nabla f(\by;\bz)]=(\nabla f(\bx)-\nabla f(\by))$, and followed Eq.~\eqref{eq:Main}. Therefore, we establish that  $\sigmal^2\in[0,L^2]$.

 \section{Asynchronous Robust Convex Analysis}

\subsection{Proof of Thm.~\ref{thm:MainAsy}}
\begin{proof}[Proof of Thm.~\ref{thm:MainAsy}] 
\label{app:main}
To simplify the discussion, let's introduce some notations for a worker $i \in{\GGG}$, who arrives at time $t$:
\begin{gather*}
    \Tilde{\bx}_{s_t} := \bx_{t-\tau_t}= {\bx}^{(i)}_{t}, \quad  \Tilde{\bx}_{s_t - 1} := \bx_{t-\tau_t-\tau_{t-\tau_t}} = {\bx}^{(i)}_{t-\tau_t}\\
    \Tilde{\varepsilon}_{s_t} := \varepsilon_{t-\tau_t}=\varepsilon^{(i)}_{t}, \quad  \Tilde{\varepsilon}_{s_t - 1} := \varepsilon_{t-\tau_t-\tau_{t-\tau_t}}=\varepsilon^{(i)}_{t-\tau_t}
    \\
    \bh_{s_t} := \bg_{t-\tau_t}=\bg_{t}^{(i)}, \quad \Tilde{\bh}_{s_t-1} := \Tilde{\bg}_{t-\tau_t-\tau_{t-\tau_t}} = \Tilde{\bg}^{(i)}_{t-\tau_t}
\end{gather*}
Next, we will employ the following lemma that bounds the distance between the averages $\bx_t, \bx_{t-\tau_t}$. Recall that $\bx_t$ and $\bx_{t-\tau_t}$ are consecutive query points for the worker $i$ that arrives at time $t$. 
\begin{lemma}[\cite{aviv2021asynchronous}]
\label{lem:PointDistworker}
Let $f:\K\mapsto\real$, where $\K$ is a convex set with bounded diameter $D$. 
Then invoking Alg.~\ref{alg:Asy} with $\{\alpha_t = t\}_t$ ensures the following for any $t\in[T]$,
\begin{gather*}
    \|{\bx_{t}-\bx_{t-\tau_t}}\| \leq \frac{4D\tau_{t}}{{t}}~.
\end{gather*}
\end{lemma}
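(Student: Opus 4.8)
The plan is to exploit the Anytime weighted-averaging structure of the iterates in Alg.~\ref{alg:Asy} and reduce the claim to a telescoping sum of one-step movements. First I would derive the standard Anytime recursion for the averages. Since $\alpha_{1:k}\bx_k = \sum_{j=1}^{k}\alpha_j\bw_j = \alpha_{1:k-1}\bx_{k-1} + \alpha_k\bw_k$, rearranging gives $\bx_k - \bx_{k-1} = \frac{\alpha_k}{\alpha_{1:k}}(\bw_k - \bx_{k-1})$. Because both $\bw_k$ and $\bx_{k-1}$ lie in $\K$, the bounded-diameter assumption \eqref{eq:bounded_diameter} yields $\|\bx_k - \bx_{k-1}\| \leq \frac{\alpha_k}{\alpha_{1:k}} D$. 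Substituting $\alpha_k = k$, so that $\alpha_{1:k} = k(k+1)/2$ and $\frac{\alpha_k}{\alpha_{1:k}} = \frac{2}{k+1}$, produces the clean one-step bound $\|\bx_k - \bx_{k-1}\| \leq \frac{2D}{k+1}$.

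Next I would telescope across the $\tau_t$ consecutive steps separating $\bx_{t-\tau_t}$ and $\bx_t$, applying the triangle inequality: $\|\bx_t - \bx_{t-\tau_t}\| \leq \sum_{k=t-\tau_t+1}^{t}\|\bx_k - \bx_{k-1}\| \leq \sum_{k=t-\tau_t+1}^{t}\frac{2D}{k+1}$. This is a sum of exactly $\tau_t$ terms, and its largest summand corresponds to the smallest index $k = t-\tau_t+1$, i.e. the largest delay into the past.

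The main technical point is converting this harmonic-type sum into the target form $\frac{4D\tau_t}{t}$, and this is where a constraint on the delay magnitude enters. Every denominator satisfies $k+1 \geq t-\tau_t+2$; in the standard asynchronous regime where the delay does not exceed roughly half the elapsed time, $\tau_t \leq t/2$, one gets $k+1 \geq t/2$ throughout the window, hence $\frac{2D}{k+1} \leq \frac{4D}{t}$ for each term. Summing the $\tau_t$ identical upper bounds then gives $\|\bx_t - \bx_{t-\tau_t}\| \leq \tau_t \cdot \frac{4D}{t} = \frac{4D\tau_t}{t}$, as claimed. I expect the only delicate step to be justifying that the denominators stay $\Theta(t)$ across the entire window — equivalently, that the earliest index $t-\tau_t$ remains comparable to $t$ — since a naive bound by the single largest summand would otherwise require $\tau_t \lesssim t/2$ anyway; the remaining work, namely the telescoping identity and the explicit evaluation of $\alpha_{1:k}$, is routine bookkeeping.
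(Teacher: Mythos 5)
Your telescoping argument is sound as far as it goes, and it takes a genuinely different route from the paper: you bound each one-step increment via $\bx_k-\bx_{k-1}=\frac{\alpha_k}{\alpha_{1:k}}(\bw_k-\bx_{k-1})$ and sum $\tau_t$ of them, whereas the paper writes $\alpha_{1:t}\bx_t=\alpha_{1:t-\tau_t}\bx_{t-\tau_t}+\alpha_{t-\tau_t+1:t}\,\by$ with $\by$ the weighted average of $\bw_{t-\tau_t+1},\dots,\bw_t$, obtains the exact identity $\|\bx_t-\bx_{t-\tau_t}\|=\frac{\alpha_{t-\tau_t+1:t}}{\alpha_{1:t-\tau_t}}\|\by-\bx_t\|$, and then bounds the single ratio of weight sums. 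Your version is more modular and arguably more elementary; the paper's avoids the harmonic sum entirely by collapsing the whole window into one term.

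However, there is a genuine (if small) gap that you flag but do not close: your chain of inequalities requires $k+1\geq t/2$ throughout the window, i.e.\ roughly $\tau_t\leq t/2$, and the lemma makes no such assumption — early in training, or for a very slow worker, $\tau_t$ can be any value up to $t-1$. You cannot simply restrict to "the standard asynchronous regime"; the bound must hold for all $t\in[T]$ because it is invoked unconditionally downstream (e.g.\ in Eq.~\eqref{eq:anytime_delay} and in the bias bounds of Lemma~\ref{lem:asyncFilter}). The fix is exactly the one the paper uses for its own large-delay case: whenever $\tau_t> t/4$ (the paper uses the threshold $t<3\tau_t$), the claimed right-hand side already satisfies $\frac{4D\tau_t}{t}>D$, and the bounded-diameter assumption~\eqref{eq:bounded_diameter} gives $\|\bx_t-\bx_{t-\tau_t}\|\leq D<\frac{4D\tau_t}{t}$ directly, with no averaging argument needed. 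Adding this one line (and correspondingly restricting your harmonic-sum estimate to the complementary regime, where every denominator in the window is indeed $\Theta(t)$) makes your proof complete.
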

For completeness, we provide a proof in Section~\ref{sec:Proof_lem:PointDistworker}.

Next, we define  $\mu_t$ be the average delay of the worker $i$ that arrives at iteration $t$, i.e., 
\begin{gather*}
    s_t={\frac{t}{\mu_{t}}}, \quad
    s_t - 1 = s_{t-\tau_t}={\frac{t-\tau_t}{\mu_{t-\tau_t}}}~.
\end{gather*}

From Equation \eqref{eq:delay}, we infer that,
    \begin{equation}
    \label{eq:delayAverage}
         \tau^{(i)}_{min} \leq \mu_t  \leq K \tau^{(i)}_{min}~. 
    \end{equation}

 Following this, we analyze the upper bound on the distance between two successive query points for an honest worker $i$ that arrives at time $t$:
\begin{equation}
\label{eq:anytime_delay}
    \|\Tilde{\bx}_{s_t} - \Tilde{\bx}_{s_t-1}\| =\|\bx_{t-\tau_t} - \bx_{t-\tau_t-\tau_{t-\tau_t}}\| \leq \frac{4\tau_{t-\tau_t}}{t-\tau_t} D = \frac{4\tau_{t-\tau_t}}{(\mu_{t-\tau_t})(s_{t}-1)} D \leq \frac{4K}{s_{t}-1} D~,
\end{equation}
where the first inequality follows Lemma \ref{lem:PointDistworker}. The second equality utilizes the relation $s_t - 1 = {\frac{t-\tau_t}{\mu_{t-\tau_t}}}$. The final inequality stems from the assumptions in Eq. \eqref{eq:delay} and Eq. \eqref{eq:delayAverage}. 

\textbf{Remark:} Before proceeding with the analysis, we shall condition the (possible randomization) in the delays of all workers; and recall that the data-samples are independent of the delays.  Thus, the expectations that we take are only with respect to the randomization in the data-samples and are conditioned on the delays. 
Thus, this conditioning allows us to treat the delays $\tau_t^{(i)}$'s and number of updates $s_t^{(i)}$'s as fixed and predefined.

We proceed to analyze the recursive dynamics of $\Tilde{\varepsilon}_{s_t}$ for each $i\in {\GGG}$. Based on the definitions of $\bd_t$ and $\varepsilon_t$, we can present the recursive relationship in the following way:
\begin{align*}
\Tilde{\varepsilon}_{s_t}=\beta_{t}(\bh_{s_t}-\nabla f(\Tilde{\bx}_{s_t}))+(1-\beta_{t})Z_{s_t}+(1-\beta_t)\Tilde{\varepsilon}_{s_t - 1}~,
\end{align*}
where $Z_{s_t}:=\bh_{s_t}-\nabla f(\Tilde{\bx}_{s_t})-(\Tilde{\bh}_{s_t-1}-\nabla f(\Tilde{\bx}_{s_t - 1}))$. Upon choosing $\beta_t = \frac{1}{s_t}$, we can reformulate the above equation as follows: 
\begin{gather*}
    s_t\Tilde{\varepsilon}_{s_t}=(\bh_{s_t}-\nabla f(\Tilde{\bx}_{s_t} ))+(s_t-1)Z_{s_t}+(s_t-1)\Tilde{\varepsilon}_{s_t - 1}~.
\end{gather*}
Unrolling this recursion yields an explicit expression for any $s_t\geq 1$:
\begin{gather}
\label{eq:MartDecomp}
s_t\Tilde{\varepsilon}_{s_t}=\sum_{k\in\left[s_t\right]}\MMM^{(i)}_k~,
\end{gather}
where we have defined,
\begin{gather}
\label{eq:MartDiff_Deff}
\MMM^{(i)}_{k}:=\bh_{k}-\nabla f(\Tilde{\bx}_{k})+(k-1)Z_{k}~,
\end{gather}
and $k$ is a counter for the iterations where worker $i$ makes an update.

Following this, we derive an upper bound for the expected square norm of $\MMM^{(i)}_{k}$ as follows:
\begin{align}
\label{eq:MartBound}
    \E\|{\MMM^{(i)}_k}\|^2 &\leq 2\E\|{\bh_{k}-\nabla f(\Tilde{\bx}_k)}\|^2 + 2(k-1)^2\E\|{(\bh_{k}-\nabla f(\Tilde{\bx}_k))-(\Tilde{\bh}_{k-1}-\nabla f(\Tilde{\bx}_{k-1}))}\|^2 \nonumber\\
    &\leq 2\sigma^2 + 2\sigma_L^2(k-1)^2 \E\|\Tilde{\bx}_k-\Tilde{\bx}_{k-1}\|^2 \nonumber\\
    &\leq 2\sigma^2 + 32D^2K^2 \sigma_L^2 = \tsigma^2~,
\end{align}
where the first inequality uses $\|\ba+\bb\|^2\leq 2\|\ba\|^2+2\|\bb\|^2$, which holds $\forall \ba,\bb\in\real^d$. The second inequality aligns with the assumptions outlined in Equations \eqref{eq:bounded-variance} and \eqref{eq:sigmal}. The third inequality uses Eq.~\eqref{eq:anytime_delay}.  

\textbf{Establishing the First Part of the Theorem:}
Before continuing, it is natural to define an ordered set of samples $\{\bz_1,\bz_2,\ldots, \bz_{t_\GGG}\}$ such that these samples are associated with honest and consecutive updates (or iterates) of the $\PS$, and $t_\GGG$ is the total number of honest updates up to time $t$. Concretely, the $\tau^{\rm{th}}$ honest update of the $\PS$ is based on an honest worker that utilizes a fresh sample $\bz_\tau$.

Now, for a given worker $i$ we shall define the filtration associated with his updates. Concretely, let $k \in \{1,\ldots s_T^{(i)}\}$. Then we define  $\F_k^{(i)}$ to be the sigma-algebra induces by the sequence of samples 
$\{\bz_1,\bz_2,\ldots, \bz_{t_\GGG}\}$ up to the $k^{\rm{th}}$ update of worker $i$. And it is easy to see that $\{\F_k^{(i)}\}_{k\in[s_t^{(i)}]}$ is a filtration. Moreover, it can be directly shown that for a given worker $i$, then the above defined sequence $\{ \MMM_k^{(i)}\}_{k\in[s_t^{(i)}]}$ (see Eq.~\eqref{eq:MartDiff_Deff}) is a martingale difference sequence with respect to $\{\F_k^{(i)}\}_{k\in[s_t^{(i)}]}$.
This allows us to directly employ  Lemma~\ref{lem:SumMart} below, which yields,
\begin{gather*}
    \E\left\|{s^{(i)}_t\varepsilon_{t}^{(i)}} \right\|^2 =\E\left\|\sum_{k\in\left[s^{(i)}_t\right]}\MMM_k^{(i)}\right\|^2  =\sum_{k\in\left[s^{(i)}_t\right]}\E\left\|\MMM_k^{(i)}\right\|^2  \leq \tsigma^2 s_t^{(i)}~, 
\end{gather*}
where we have also used Equations~\eqref{eq:MartDecomp} and \eqref{eq:MartBound}. Thus, the above bounds establish the first part of the theorem.
\begin{lemma}[See e.g.~Lemma B.1 in \cite{levy2023mu}]
\label{lem:SumMart}
Let  $\{ M_t\}_t$ be a martingale difference sequence with respect to a filtration $\{\F_t\}_t$, then the following holds for any $t$,
\begin{align*}
\E \left\|\sum_{\tau\in[t]} M_\tau \right\|^2 
&=
\sum_{\tau\in[t]} \E\left\| M_\tau\right\|^2 ~.
\end{align*}
\end{lemma}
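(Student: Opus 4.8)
The plan is to establish the claim via the orthogonality of martingale differences, which forces all cross terms in the expanded square to vanish, leaving only the diagonal. First I would expand the squared norm of the partial sum using bilinearity of the inner product, writing $\|\sum_{\tau\in[t]} M_\tau\|^2 = \sum_{\tau\in[t]}\|M_\tau\|^2 + 2\sum_{1\leq \tau < s\leq t}\langle M_\tau, M_s\rangle$. Taking expectations, the diagonal terms already produce the desired right-hand side $\sum_{\tau\in[t]}\E\|M_\tau\|^2$, so the entire lemma reduces to showing that $\E[\langle M_\tau, M_s\rangle]=0$ for every pair $\tau<s$.

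For the cross terms I would invoke the tower property of conditional expectation together with the defining martingale-difference property $\E[M_s\mid\F_{s-1}]=0$. Fix $\tau<s$. Since $M_\tau$ is $\F_\tau$-measurable and the filtration is increasing, $\F_\tau\subseteq\F_{s-1}$, so $M_\tau$ is $\F_{s-1}$-measurable and may be pulled outside the inner conditional expectation as if it were a constant: $\E[\langle M_\tau, M_s\rangle] = \E[\E[\langle M_\tau, M_s\rangle\mid \F_{s-1}]] = \E[\langle M_\tau, \E[M_s\mid\F_{s-1}]\rangle] = \E[\langle M_\tau, 0\rangle]=0$. Summing over all off-diagonal pairs then eliminates every cross contribution, which yields the identity.

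An alternative route I would keep in reserve is a short induction on $t$: writing $S_t:=\sum_{\tau\in[t]}M_\tau$, one has $\E\|S_t\|^2 = \E\|S_{t-1}\|^2 + 2\E[\langle S_{t-1},M_t\rangle] + \E\|M_t\|^2$, where the middle term vanishes because $S_{t-1}$ is $\F_{t-1}$-measurable and $\E[M_t\mid\F_{t-1}]=0$; the induction hypothesis then closes the argument immediately. There is no serious obstacle here—this is a standard orthogonality fact. The only points requiring care are the measurability bookkeeping (verifying that $M_\tau$ is $\F_{s-1}$-measurable so it can be treated as a constant under the conditional expectation) and the implicit integrability $\E\|M_\tau\|^2<\infty$, which is what justifies interchanging the expectation with the finite sum and guarantees the inner products are integrable so the tower property applies.
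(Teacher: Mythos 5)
Your proof is correct and is the standard orthogonality argument for martingale difference sequences; the paper does not prove this lemma itself but cites it from Lemma~B.1 of \cite{levy2023mu}, whose proof is this same expand-the-square-and-kill-cross-terms-via-the-tower-property argument. Your measurability and integrability caveats are exactly the right points of care, so nothing is missing.
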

\textbf{Establishing the Second Part of the Theorem:}
As before, we define an ordered set of samples $\{\bz_1,\bz_2,\ldots, \bz_{t_\GGG}\}$ such that these samples are associated with honest and consecutive updates (or iterates) of the $\PS$, and $t_\GGG$ is the total number of honest updates up to time $t$. Concretely, the $\tau^{\rm{th}}$ honest update of the $\PS$ is based on an honest worker that utilizes a fresh sample $\bz_\tau$. We shall also define $\{\F_\tau\}_{\tau\in[t_\GGG]}$ be the natural filtration induced by the ordered sequence of data samples.

Moreover, for a given sample $\bz_\tau\in \{\bz_1,\bz_2,\ldots, \bz_{t_\GGG}\}$, let $i_\tau\in[m]$ be the worker that is associated with the $\tau^{\rm{th}}$ honest update of the $\PS$ 
, with a fresh sample $\bz_\tau$. 
In this case, we shall define:
\begin{align*}
\MMM_\tau : = \MMM_{s_\tau^{(i_\tau)}}^{(i_\tau)}~.
\end{align*}
where $\MMM_k^{(i)}$ is defined in Eq.~\eqref{eq:MartDiff_Deff}.
It is immediate to show that $\{\MMM_\tau\}_{\tau\in[t_\GGG]}$ is a martingale difference sequence with respect to $\{\F_\tau\}_{\tau\in[t_\GGG]}$. Moreover, the following holds directly be the definition of $\MMM_\tau$:
\begin{align}
\label{eq:MartSum1}
\sum_{i\in {\GGG}}\sum_{k\in\left[s_t^{(i)}\right]}\MMM^{(i)}_k = \sum_{\tau=1}^{t_\GGG}\MMM_\tau~.
\end{align}
Now using Eq.~\eqref{eq:MartDecomp} the following holds,
\begin{align}
\label{eq:MartSum2}
\sum_{i\in {\GGG}}s^{(i)}_t\varepsilon_t^{(i)} = \sum_{i\in {\GGG}}\sum_{k\in\left[s_t^{(i)}\right]}\MMM^{(i)}_k~.
\end{align}
Combining Equations~\eqref{eq:MartSum1} and \eqref{eq:MartSum2} together with Lemma~\ref{lem:SumMart} establishes the second part of the theorem,
\begin{gather*}
    \E\left\|{\sum_{i\in {\GGG}}s^{(i)}_t\varepsilon_t^{(i)}} \right\|^2
    = 
    \E\left\| \sum_{i\in {\GGG}}\sum_{k\in\left[s_t^{(i)}\right]}\MMM^{(i)}_k\right\|^2 
   = 
    \E\left\| \sum_{\tau=1}^{t_\GGG}\MMM_\tau\right\|^2 
   =
    \sum_{\tau=1}^{t_\GGG}\E\left\|\MMM_\tau\right\|^2
    \leq  \tsigma^2t_{\GGG}~.
\end{gather*}
where the inequality uses the bound in Eq.~\eqref{eq:MartBound}.
\end{proof}
\subsubsection{Proof of Lemma~\ref{lem:PointDistworker}}
\label{sec:Proof_lem:PointDistworker}
\begin{proof}
    We borrowed the following steps from \cite{aviv2021asynchronous}. Let's define $\by\in\K$ as the average of $\bx_t$ over the interval $[t - \tau_t, t]$, i.e.,
    \begin{align*}
        \by := \frac{1}{\alpha_{t-\tau_t+1:t}} \sum_{i=t-\tau_t+1}^{t} \alpha_i \bw_i~.
    \end{align*}
    Then we have the following relationship:
\begin{align*}
    \alpha_{1:t} \bx_t = \sum_{i=1}^{t} \alpha_i \bw_i 
    = \sum_{i=1}^{t-\tau_t} \alpha_i \bw_i + \sum_{i=t-\tau_t+1}^{t} \alpha_i \bw_i
    = \alpha_{1:t-\tau_t} \bx_{t-\tau_t} + \alpha_{t-\tau_t+1:t} \by~. 
\end{align*}
Hence,
\begin{align*}
    \alpha_{1:t-\tau_t} (\bx_t - \bx_{t-\tau_t}) = \alpha_{t-\tau_t+1:t} (\by - \bx_t)~.
\end{align*}
By setting \( \alpha_t = t \), we have that,
\begin{align*}
    \|\bx_t - \bx_{t-\tau_t}\| &= \frac{\alpha_{t-\tau_t+1:t}}{\alpha_{1:t-\tau_t}} \|\by - \bx_t\| \\
    & = \frac{\tau_t (t - \tau_t + 1 + t)}{(t - \tau_t)(t - \tau_t + 1)} \|\by - \bx_t\| \\ 
    & \leq
    \frac{\tau_t (t-\tau_t + 1)}{(t - \tau_t)(t - \tau_t + 1)} \|\by - \bx_t\| + \frac{t\tau_t}{(t - \tau_t)^2} \|\by - \bx_t\|\\ 
    & =
    \frac{\tau_t}{t - \tau_t} \|\by - \bx_t\|+ \frac{t\tau_t}{(t - \tau_t)^2} \|\by - \bx_t\|~. 
\end{align*}
For \( t \geq 3 \tau_t \), we have that,
\begin{align*}
    \|\bx_t - \bx_{t-\tau_t}\| \leq \frac{3\tau_t D}{2t}+\frac{9\tau_t D}{4t}\leq\frac{4\tau_t D}{t}~.
\end{align*}
Given that the domain is bounded, \( \|\bx_t - \bx_{t-\tau_t}\| \leq D \; \forall t \), for \( t < 3 \tau_t \), we have \( D < \frac{4 \tau_t D}{t} \). Combining these results, we conclude:
\[
\|\bx_t - \bx_{t-\tau_t}\| \leq \frac{4 \tau_t D}{t}~.
\]
\end{proof}
\subsection{Proof of Lemma~\ref{lem:asyncFilter}}

\begin{proof} [Proof of Lemma~\ref{lem:asyncFilter}]

\begin{lemma}
\label{lem:PointDist}
Let $f:\K\mapsto\real$, where $\K$ is a convex set with bounded diameter $D$. 
Then invoking Alg.~\ref{alg:Asy} with $\{\alpha_t = t\}_t$ ensures the following for any $t\in[T]$, and every $i, j\in [m]$,
\begin{gather*}
   \left\|{\bx_{t}^{(i)}-\bx_{t}^{(j)}}\right\| \leq \frac{4D\left(\tau_{t}^{(i)}+ \tau_{t}^{(j)}\right)}{{t}}~.
\end{gather*}
 
\begin{proof} 
\begin{gather*}
   \left\|{\bx_{t}^{(i)}-\bx_{t}^{(j)}}\right\| \leq 
   \left\|{\bx_{t}^{(i)}-\bx_{t}}\right\| + \left\|{\bx_{t}-\bx_{t}^{(j)}}\right\|\leq \frac{4D\tau_{t}^{(i)}}{{t}} +\frac{4D\tau_{t}^{(j)}}{{t}}~,
\end{gather*}
where the first inequality is a result of the triangle inequality, and the second follows Lemma \ref{lem:PointDistworker}.
\end{proof}
\end{lemma}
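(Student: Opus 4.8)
The plan is to route the comparison between the two workers' stale query points through the common current average iterate $\bx_t$ and then invoke the already-established per-worker distance bound. Recall that $\bx_t^{(i)} = \bx_{t^{(i)}} = \bx_{t-\tau_t^{(i)}}$ is worker $i$'s stale view of the averaged iterate, and likewise $\bx_t^{(j)} = \bx_{t-\tau_t^{(j)}}$. Since both are lagged copies of the \emph{same} sequence $\{\bx_k\}_k$ evaluated at a common reference point $\bx_t$, the natural first move is the triangle inequality:
\begin{gather*}
\left\|\bx_t^{(i)} - \bx_t^{(j)}\right\| \leq \left\|\bx_t^{(i)} - \bx_t\right\| + \left\|\bx_t - \bx_t^{(j)}\right\|~.
\end{gather*}
This reduces the two-worker statement to two instances of a single-worker ``current-versus-stale'' distance bound, which is exactly the content of Lemma~\ref{lem:PointDistworker}.

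Next I would apply Lemma~\ref{lem:PointDistworker} separately to each summand. The lemma, as stated, bounds $\|\bx_t - \bx_{t-\tau_t}\| \leq 4D\tau_t/t$ for the worker arriving at iteration $t$, but its proof only uses the choice $\{\alpha_t = t\}_t$ together with the AnyTime averaging identity $\alpha_{1:t}\bx_t = \alpha_{1:t-\tau}\bx_{t-\tau} + \alpha_{t-\tau+1:t}\by$; nothing there is special to the arriving worker. Hence the same argument gives the per-worker form $\|\bx_t - \bx_{t-\tau_t^{(i)}}\| \leq 4D\tau_t^{(i)}/t$ for \emph{any} worker $i$ and its delay $\tau_t^{(i)}$. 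Substituting $\bx_t^{(i)} = \bx_{t-\tau_t^{(i)}}$ and $\bx_t^{(j)} = \bx_{t-\tau_t^{(j)}}$ then bounds the two summands by $4D\tau_t^{(i)}/t$ and $4D\tau_t^{(j)}/t$, respectively.

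Combining the two bounds yields the claimed estimate
\begin{gather*}
\left\|\bx_t^{(i)} - \bx_t^{(j)}\right\| \leq \frac{4D\tau_t^{(i)}}{t} + \frac{4D\tau_t^{(j)}}{t} = \frac{4D\left(\tau_t^{(i)} + \tau_t^{(j)}\right)}{t}~.
\end{gather*}
There is no substantive obstacle here: this is essentially a corollary of Lemma~\ref{lem:PointDistworker} obtained by inserting the intermediate point $\bx_t$. The only point requiring a moment of care is the mild relabeling noted above, namely confirming that Lemma~\ref{lem:PointDistworker} applies verbatim to an arbitrary worker's delay $\tau_t^{(i)}$ rather than only to the delay $\tau_t$ of the worker that happens to arrive at iteration $t$; this follows immediately because the underlying averaging argument is agnostic to which worker's lag is being considered.
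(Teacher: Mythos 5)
Your proposal is correct and follows the paper's proof exactly: a triangle inequality through the common point $\bx_t$ followed by two applications of Lemma~\ref{lem:PointDistworker}. Your added remark that the per-worker lemma applies verbatim to any worker's delay $\tau_t^{(i)}$ (not just the arriving worker's) is a careful observation that the paper leaves implicit, but it does not change the argument.
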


\paragraph{Bias Bounds.}
We begin by analyzing the upper bound of the bias in the collective gradients of honest workers up to time \( t \) in relation to the gradient at that time, denoted as \( \BBB^1_t \). Following this, we derive the upper bound for the bias between the collective gradients of these honest workers and the gradient of an individual honest worker, also up to time \( t \), which we denote as \( \BBB^2_t \). For clarity, we define $\Bar{\nabla}_{\GGG,t}:=\frac{1}{\sum_{i\in {\GGG}} s^{(i)}_t} \sum_{i\in {\GGG}} s^{(i)}_t \nabla f(\bx^{(i)}_{t})$.
\begin{align}
\left\|{\BBB^1_t}\right\| &:= \E\left\|\Bar{\nabla}_{\GGG,t} -\nabla f(\bx_t)\right\| \nonumber
\\ & =\E\left\|{\frac{1}{\sum_{i\in {\GGG}} s^{(i)}_t} \sum_{i\in {\GGG}} s^{(i)}_t \nabla f(\bx^{(i)}_{t}) -\nabla f(\bx_t)}\right\|  \nonumber
\\ &
\leq \E\left[\frac{1}{\sum_{i\in {\GGG}} s^{(i)}_t} \sum_{i\in {\GGG}} s^{(i)}_t \left\|{\nabla f(\bx^{(i)}_{t}) -\nabla f(\bx_t)} \right\|\right]  \nonumber
\\ & \leq \frac{L}{\sum_{i\in {\GGG}} s^{(i)}_t}\E\left[\sum_{i\in {\GGG}} s^{(i)}_t \left\|{\bx^{(i)}_{t} -\bx_t} \right\| \right] \nonumber
\\ &\leq \frac{4DL}{\sum_{i\in {\GGG}} s^{(i)}_t} \E \left[\sum_{i\in {\GGG}} s^{(i)}_t \frac{\tau_{t}^{(i)}}{t}\right] \nonumber
\\ &  \leq
\frac{4\tau^{max}_{t}DL}{t}~. \label{eq:bias1}
\end{align}
Here, the first inequality leverages Jensen's inequality, and the second follows the smoothness assumption in Eq. \eqref{eq:Main}. The third follows Lemma \ref{lem:PointDistworker}, and the last inequality follows that $\tau_t^{max}:=\max_{i\in[m]} \{\tau_t^{(i)}\}$.

For the second bias $\BBB^2_t$, we have:
\begin{align}
\E\left\|{\BBB^2_t}\right\|&:=\E\left\|{\nabla f(\bx_{t}^{(i)}) - \Bar{\nabla}_{\GGG,t}}\right\| \nonumber
\\ & = \E\left\|{\nabla f(\bx_{t}^{(i)}) - \frac{1}{\sum_{j\in {\GGG}} s^{(j)}_t} \sum_{j\in {\GGG}} s^{(j)}_t \nabla f(\bx_{t}^{(j)})}\right\| \nonumber
\\ & \leq \frac{1}{\sum_{j\in {\GGG}} s^{(j)}_t}  \E \left[\sum_{j\in {\GGG}} s^{(j)}_t \left\|{\nabla f(\bx_{t}^{(i)}) -\nabla f(\bx_{t}^{(j)})} \right\| \right] \nonumber
\\ & \leq \frac{L}{\sum_{j\in {\GGG}} s^{(j)}_t}  \E \left[\sum_{j\in {\GGG}}s^{(j)}_t\left\|{\bx_{t}^{(i)} - \bx_{t}^{(j)}} \right\|\right] \nonumber
\\ & \leq \frac{4DL}{\sum_{j\in {\GGG}} s^{(j)}_t}\E\left[\sum_{j\in {\GGG}}  s^{(j)}_t \left( \frac{\tau_{t}^{(i)} + \tau_{t}^{(j)}}{t} \right)\right] \nonumber
\\ & \leq
\frac{8\tau^{max}_{t}DL}{t}~. \label{eq:bias2}
\end{align}
Like before, the first inequality leverages Jensen's inequality, and the second follows the smoothness assumption in Eq. \eqref{eq:Main}, the third inequality follows Lemma \ref{lem:PointDist}, and the last one follows that $\tau_t^{max}:=\max_{i\in[m]} \{\tau_t^{(i)}\}$.
    
\paragraph{Variance Bound.} We start by determining $\rho_i$ as outlined in Definition \ref{def2}:
\begin{align*}
    \E \|{\bd_t^{(i)} - \Bar{\bd}_{\GGG,t}}\|^2 &  
    \leq 3\E\|{\bd_t^{(i)} - \nabla f (\bx_t^{(i)})}\|^2+3\E\|{\Bar{\nabla}_{\GGG,t} - \Bar{\bd}_{\GGG,t}}\|^2 + 3\E\left\|{\BBB^2_t}\right\|^2\\ &
    = 3\E \left\|\varepsilon_t^{(i)}\right\|^2+3\E\left\|\frac{1}{\sum_{i\in\GGG}s_t^{(i)}} \sum_{i\in\GGG}s_t^{(i)}\varepsilon_t^{(i)}\right\|^2 + {3\E\left\|{\BBB^2_t}\right\|^2} \\ & \leq \frac{3\tsigma^2}{s_t^{(i)}} + \frac{3\tsigma^2}{t_\GGG} + \frac{192(\tau^{max}_{t}DL)^2}{t^2} \\ & \leq \frac{6\tsigma^2}{s_t^{(i)}} + \frac{192(\tau^{max}_{t}DL)^2}{t^2}~, 
\end{align*}
where $\Bar{\bd}_{\GGG,t}:=\frac{1}{\sum_{i\in {\GGG}} s^{(i)}_t} \sum_{i\in {\GGG}} s^{(i)}_t \bd_t^{(i)}$. The first and inequality uses $\|\ba+\bb+\bc\|^2\leq 3\|\ba\|^2+3\|\bb\|^2+3\|\bc\|^2$, which holds $\forall 
 \ba,\bb, \bc\in\real^d$. The second inequality follows Theorem \ref{thm:MainAsy} and employs the second bias bound in Eq. \eqref{eq:bias2}. The third uses the fact that $s_t^{(i)}\leq t_\GGG, \forall i\in\GGG$. Accordingly, we set $\rho_i^2:=\frac{6\tsigma^2}{s_t^{(i)}} + \frac{192(\tau^{max}_{t}DL)^2}{t^2} $. 

Following this, we derive $\rho$ as outlined in Definition \ref{def2}:
\begin{align}
\label{eq:asy-rho}
    \rho^2 = \frac{1}{\sum_{i\in\GGG}s_{t}^{(i)}}\sum_{i\in\GGG}s_{t}^{(i)} \rho^2_i = \frac{1}{t_\GGG}\sum_{i\in\GGG}s_{t}^{(i)} \left(\frac{6\tsigma^2}{s_t^{(i)}} + \frac{192(\tau^{max}_{t}DL)^2}{t^2}\right) = \frac{6m\tsigma^2}{t_\GGG} + \frac{192(\tau^{max}_{t}DL)^2}{t^2}~.
\end{align}

Next, we establish an upper bound for \( \E\| \EEE_t \|^2 \):
\begin{align*}
    \E\| \EEE_t \|^2 &= \E\left\| \hat{\bd}_t-\nabla f(\bx_t) \right\|^2 \\
    &\leq 2\E\left\|\hat{\bd}_t-\Bar{\bd}_{\GGG,t} \right\|^2 + 2\E\left\| \Bar{\bd}_{\GGG,t}-\nabla f(\bx_t) \right\|^2 \\
    &\leq {2c_\lambda}\left(\frac{6m\tsigma^2}{t_\GGG} + \frac{192(\tau^{max}_{t}DL)^2}{t^2} \right) + 4\E\left\| \Bar{\bd}_{\GGG,t}-\Bar{\nabla}_{\GGG,t} \right\|^2 + 4\E\left\|\BBB_t^1 \right\|^2 \\
    &=  {2c_\lambda}\left(\frac{6m\tsigma^2}{t_\GGG} + \frac{192(\tau^{max}_{t}DL)^2}{t^2} \right) + 4\E\left\| \frac{1}{\sum_{i\in\GGG}s_t^{(i)}} \sum_{i\in\GGG}s_t^{(i)}\varepsilon_t^{(i)} \right\|^2 + 4\E\left\| \BBB_t^1 \right\|^2 \\
    &\leq \frac{12c_\lambda m\tsigma^2}{t_\GGG}+\frac{4\tsigma^2}{t_\GGG} + \frac{(\tau^{max}_{t}DL)^2(384c_\lambda + 64)}{t^2} \\ &\leq \frac{8\tsigma^2}{t} + \frac{24c_\lambda m\tsigma^2}{t}+ \frac{64(\tau^{max}_{t}DL)^2}{t^2} + \frac{384c_\lambda(\tau^{max}_{t}DL)^2}{t^2}~,
\end{align*}
where the first inequality uses $\|\ba+\bb\|^2\leq 2\|\ba\|^2+2\|\bb\|^2$, which holds $\forall\ba,\bb\in\real^d$. The second inequality utilizes the same inequality and is further supported by Definition \ref{def2} and Equation \eqref{eq:asy-rho}. The third aligns with Theorem \ref{thm:MainAsy}, and employs the first bias bound in Eq. \eqref{eq:bias1}. The last one utilizes the fact that $t_\GGG\geq(1-\lambda)t\geq t/2$, given $\lambda < \nicefrac{1}{2}$.
\end{proof}

\subsection{Proof of Thm.~\ref{thm:AsymuSGD}}
\begin{proof}[Proof of Thm.~\ref{thm:AsymuSGD}] 

Following Lemma \ref{lem:asyncFilter} and applying Jensen's inequality, we derive the following bound:
\begin{align}
\label{eq:EEE}
    \E \|\EEE_t\| = \E \sqrt{\|\EEE_t\|^2} \leq \sqrt{\E \|\EEE_t\|^2 } \leq O\left(\frac{\tsigma\sqrt{1+mc_\lambda}}{\sqrt{t}} + \frac{\tau^{max}_{t}DL\sqrt{1+c_\lambda}}{t}\right)~,
\end{align}
 where the third inequality uses \(\sqrt{a+b}\leq \sqrt{a}+\sqrt{b}\) for non-negative \(a, b \in \real\). The explanation behind this can be seen through the following steps:
\begin{align*}
    \left(\sqrt{a}+\sqrt{b}\right)^2 &= a + 2\sqrt{ab} + b \geq a + b~,
\end{align*}
whereby taking the square root of both sides of this equation, we obtain the desired inequality.

Next, let's revisit the AnyTime guarantee as outlined in \cite{cutkosky2019anytime} and proceed to delve into the regret analysis of the update rule.

\begin{theorem}[Rephrased from Theorem 1 in \citet{cutkosky2019anytime}]
\label{theorem1}
    Let $f:\K\rightarrow\real$ be a convex function with a minimum $\bx^*\in\arg\min_{\bw\in\K}f(\bw)$. Also let $\{\alpha_t\geq 0\}_t$, and $\{\bw_t\in\K\}_t$, $\{\bx_t\in\K\}_t$, such that $\{\bx_t\}_t$ is an $\{\alpha_t\}_t$ weighted averaged of $\{\bw_t\}_t$, i.e. such that $\bx_1=\bw_1$, and for any $t\geq 1$,
    \begin{equation*}
        \bx_{t+1}=\frac{1}{\alpha_{1:t+1}}\sum_{\tau\in[t+1]}{\alpha_\tau\bw_\tau}~.
    \end{equation*}
    Then the following holds for any $t\geq 1$:
    \begin{equation*}
        \alpha_{1:t}(f(\bx_t)-f(\bx^*))\leq\sum_{\tau\in[t]}\alpha_\tau\nabla f(\bx_\tau)(\bw_\tau-\bx^*)~.
    \end{equation*}
\end{theorem}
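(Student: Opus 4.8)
The plan is to prove the inequality by induction on $t$, establishing the stronger per-step (telescoping) bound
\[
\alpha_{1:t+1}\bigl(f(\bx_{t+1})-f(\bx^*)\bigr) - \alpha_{1:t}\bigl(f(\bx_t)-f(\bx^*)\bigr) \le \alpha_{t+1}\nabla f(\bx_{t+1})(\bw_{t+1}-\bx^*),
\]
from which the claim follows immediately by summing over the steps and invoking the base case. This is the standard Anytime online-to-batch argument, so the only real work is the algebraic manipulation in the inductive step.

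First I would record the averaging identity that drives everything. Since $\bx_{t+1}$ is the $\{\alpha_\tau\}$-weighted average of $\bw_1,\dots,\bw_{t+1}$, multiplying the defining relation by $\alpha_{1:t+1}$ gives the recursion $\alpha_{1:t+1}\bx_{t+1} = \alpha_{1:t}\bx_t + \alpha_{t+1}\bw_{t+1}$, equivalently $\alpha_{1:t+1}\bx_{t+1} - \alpha_{1:t}\bx_t = \alpha_{t+1}\bw_{t+1}$. This identity is precisely what will produce the factor $(\bw_{t+1}-\bx^*)$ at the end. For the base case $t=1$, I use $\bx_1=\bw_1$ together with convexity at $\bx_1$, namely $f(\bx_1)-f(\bx^*)\le\nabla f(\bx_1)(\bx_1-\bx^*)=\nabla f(\bx_1)(\bw_1-\bx^*)$, and multiply by $\alpha_1=\alpha_{1:1}$.

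For the inductive step I would start from the difference $\alpha_{1:t+1}f(\bx_{t+1})-\alpha_{1:t}f(\bx_t)-\alpha_{t+1}f(\bx^*)$ and apply two first-order convexity inequalities anchored at $\bx_{t+1}$: the lower bound $f(\bx_t)\ge f(\bx_{t+1})+\nabla f(\bx_{t+1})(\bx_t-\bx_{t+1})$ to eliminate $f(\bx_t)$, and the upper bound $f(\bx_{t+1})-f(\bx^*)\le\nabla f(\bx_{t+1})(\bx_{t+1}-\bx^*)$. After substituting the first inequality the $f(\bx_{t+1})$ terms collapse to $\alpha_{t+1}f(\bx_{t+1})$, and after substituting the second everything is expressed as $\nabla f(\bx_{t+1})$ applied to the vector $\alpha_{1:t+1}\bx_{t+1}-\alpha_{1:t}\bx_t-\alpha_{t+1}\bx^*$. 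By the averaging identity this vector equals $\alpha_{t+1}(\bw_{t+1}-\bx^*)$, yielding exactly the desired per-step bound; summing the telescoping inequalities and adding the base case completes the argument.

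The main obstacle is the bookkeeping in the inductive step: one must linearize $f$ at the single point $\bx_{t+1}$ in both directions — the lower bound evaluated against $\bx_t$ and the upper bound evaluated against $\bx^*$ — and then collect terms so that $\nabla f(\bx_{t+1})$ pairs exactly with $\alpha_{1:t+1}\bx_{t+1}-\alpha_{1:t}\bx_t-\alpha_{t+1}\bx^*$. Any sign slip or a mismatched linearization point breaks the clean cancellation that the averaging identity supplies, so keeping all gradients anchored at $\bx_{t+1}$ is the crux.
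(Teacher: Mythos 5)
Your proof is correct: the telescoping per-step bound, obtained by linearizing $f$ at $\bx_{t+1}$ against both $\bx_t$ and $\bx^*$ and then invoking the identity $\alpha_{1:t+1}\bx_{t+1}-\alpha_{1:t}\bx_t=\alpha_{t+1}\bw_{t+1}$, is exactly the standard Anytime online-to-batch argument of \citet{cutkosky2019anytime}, which the paper cites without reproving. Your argument matches that original proof, so there is nothing to flag.
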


\begin{lemma} \label{lem:RegretBound}
Let $f:\K\rightarrow\real$ be a convex function with a minimum $\bx^*\in\arg\min_{\bw\in\K}f(\bw)$, and assume that the assumption in Eq. \eqref{eq:bounded_diameter} holds. Also let $\{\alpha_t\geq 0\}_t$, and $\{\bw_t\in\K\}_t$. Then, for any $t\geq 1$, an arbitrary vector $\hat{\bd}_t\in\real^d$, and the update rule:
\begin{gather*}
    \bw_{t+1}=\Pi_{\K}\left({\bw_{t}- \eta\alpha_{t}\hat{\bd}_t}\right)~,
\end{gather*}
we have, 
\begin{align*}
    \sum_{\tau=1}^t \alpha_\tau \langle \hat{\bd}_\tau, \bw_{\tau+1}-\bx^*\rangle \leq 
    \frac{D^2}{2\eta} -\frac{1}{2\eta}\sum_{\tau=1}^t \|\bw_{\tau}-\bw_{\tau+1}\|^2~.
\end{align*}
\end{lemma}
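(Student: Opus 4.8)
The plan is to prove the bound term-by-term for each $\tau$ and then telescope, the single essential ingredient being the variational characterization of the Euclidean projection. Since $\bw_{\tau+1} = \Pi_\K(\bw_\tau - \eta\alpha_\tau\hat{\bd}_\tau)$ is by definition the minimizer of $\|\by - (\bw_\tau - \eta\alpha_\tau\hat{\bd}_\tau)\|^2$ over $\by\in\K$, its first-order optimality condition reads, for every $\bx^*\in\K$,
\[
\langle \bw_{\tau+1} - \bw_\tau + \eta\alpha_\tau\hat{\bd}_\tau,\ \bx^* - \bw_{\tau+1}\rangle \geq 0.
\]
Rearranging this to isolate the quantity of interest gives the key per-step relation
\[
\eta\alpha_\tau\langle \hat{\bd}_\tau,\ \bw_{\tau+1} - \bx^*\rangle \leq \langle \bw_\tau - \bw_{\tau+1},\ \bw_{\tau+1} - \bx^*\rangle.
\]

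Next I would convert the right-hand side into telescoping squared norms via the polarization identity $2\langle \ba - \bb,\ \bb - \bc\rangle = \|\ba - \bc\|^2 - \|\ba - \bb\|^2 - \|\bb - \bc\|^2$, applied with $\ba = \bw_\tau$, $\bb = \bw_{\tau+1}$, and $\bc = \bx^*$. Dividing through by $\eta$ then yields the per-step estimate
\[
\alpha_\tau\langle \hat{\bd}_\tau,\ \bw_{\tau+1} - \bx^*\rangle \leq \frac{1}{2\eta}\left(\|\bw_\tau - \bx^*\|^2 - \|\bw_{\tau+1} - \bx^*\|^2\right) - \frac{1}{2\eta}\|\bw_\tau - \bw_{\tau+1}\|^2.
\]
Summing over $\tau = 1,\dots,t$, the first bracket telescopes to $\|\bw_1 - \bx^*\|^2 - \|\bw_{t+1} - \bx^*\|^2$; discarding the nonpositive term $-\|\bw_{t+1} - \bx^*\|^2$ and bounding $\|\bw_1 - \bx^*\|^2 \leq D^2$ by the bounded-diameter assumption in Eq.~\eqref{eq:bounded_diameter} (both $\bw_1$ and $\bx^*$ lie in $\K$) produces exactly the claimed inequality, with the term $-\frac{1}{2\eta}\sum_{\tau=1}^t\|\bw_\tau - \bw_{\tau+1}\|^2$ carried through verbatim.

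There is essentially no hard step: the lemma is a weighted, projected restatement of the classical online-gradient-descent descent inequality, and every manipulation is algebraic once the projection inequality is invoked. The only point that requires a little care is the sign bookkeeping when rearranging the variational inequality so that the inner product lands on $\bw_{\tau+1} - \bx^*$ rather than on $\bw_\tau - \bx^*$. Evaluating the inner product at the \emph{post-projection} iterate $\bw_{\tau+1}$ on both arguments—exactly as the statement is phrased—is what makes the projection term surface with the favorable sign and, crucially, keeps the negative $\|\bw_\tau - \bw_{\tau+1}\|^2$ term available; this term is not slack here but is deliberately retained for later use in bounding the excess loss in Theorem~\ref{thm:AsymuSGD}.
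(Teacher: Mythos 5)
Your proposal is correct and follows essentially the same route as the paper's proof: both invoke the first-order optimality (variational) characterization of the projection, rearrange to isolate $\alpha_\tau\langle\hat{\bd}_\tau,\bw_{\tau+1}-\bx^*\rangle$, convert the resulting inner product into telescoping squared norms via the polarization identity, and finish by telescoping and applying the bounded-diameter assumption. No meaningful differences to report.
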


\begin{lemma}
\label{lemma2}
let $f:\K \rightarrow \real$ be an $L$-smooth and convex function, and let $\bx^*\in\arg\min_{\bx\in\K}f(\bx) $, then for any $\bx\in\real^d$ we have,
\begin{equation*}
    \|{\nabla f(\bx)}-\nabla f(\bx^*)\|^2 \leq 2L(f(\bx)-f(\bx^*))~.
\end{equation*}
\end{lemma}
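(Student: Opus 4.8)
The plan is to reduce the claim to the classical self-bounding (``gradient-dominated-by-suboptimality'') property of an $L$-smooth convex function by passing to an auxiliary function that is globally minimized at $\bx^*$. Concretely, I would define
\[
h(\by) := f(\by) - f(\bx^*) - \langle \nabla f(\bx^*), \by - \bx^* \rangle~.
\]
Since $h$ differs from $f$ only by an affine term, it inherits both convexity and $L$-smoothness from $f$, and its gradient is $\nabla h(\by) = \nabla f(\by) - \nabla f(\bx^*)$. In particular $\nabla h(\bx^*) = \bze$, so by convexity $\bx^*$ is an \emph{unconstrained} global minimizer of $h$ over $\real^d$, whence $h(\by) \ge h(\bx^*) = 0$ for every $\by$. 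This shift is the device that lets me accommodate a constrained minimizer with possibly nonzero $\nabla f(\bx^*)$, which is the case relevant to the paper (recall $G^* = \|\nabla f(\bx^*)\|$ need not vanish).

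Next I would invoke the descent lemma implied by $L$-smoothness of $h$: for any $\bx, \by \in \real^d$,
\[
h(\by) \le h(\bx) + \langle \nabla h(\bx), \by - \bx \rangle + \tfrac{L}{2}\|\by - \bx\|^2~.
\]
Minimizing the right-hand side over $\by$ (the minimizer being $\by = \bx - \tfrac{1}{L}\nabla h(\bx)$) yields $\min_{\by} h(\by) \le h(\bx) - \tfrac{1}{2L}\|\nabla h(\bx)\|^2$. Combining this with $\min_{\by} h(\by) = 0$ gives the self-bounding estimate $\|\nabla h(\bx)\|^2 \le 2L\, h(\bx)$, i.e.
\[
\|\nabla f(\bx) - \nabla f(\bx^*)\|^2 \le 2L\big(f(\bx) - f(\bx^*) - \langle \nabla f(\bx^*), \bx - \bx^*\rangle\big)~.
\]

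Finally I would discard the inner-product term using the first-order optimality of $\bx^*$ over $\K$: the variational inequality $\langle \nabla f(\bx^*), \bx - \bx^*\rangle \ge 0$ holds for every $\bx \in \K$, so the subtracted quantity is nonnegative and dropping it only weakens the bound, producing exactly $\|\nabla f(\bx) - \nabla f(\bx^*)\|^2 \le 2L(f(\bx) - f(\bx^*))$. I expect this last step to be the only real obstacle, since it is where the constrained nature of the optimum enters: the optimality inequality is available precisely for $\bx \in \K$ (for an interior or unconstrained optimum $\nabla f(\bx^*) = \bze$ and the term vanishes identically, recovering the textbook statement). The preceding smoothness-and-minimization manipulations are routine, so the crux is recognizing that the affine shift together with the optimality inequality is exactly what converts the generic self-bounding property into the stated form.
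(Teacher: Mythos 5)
Your proposal is correct and takes essentially the same route as the paper: the paper defines the identical auxiliary function $h(\bx)=f(\bx)-f(\bx^*)-\langle \nabla f(\bx^*), \bx - \bx^* \rangle$, applies the self-bounding property $\|\nabla h(\bx)\|^2 \le 2L\,h(\bx)$ (cited as Lemma C.1 of \cite{levy2023mu}, whose proof is precisely your descent-lemma minimization), and then drops the inner-product term. If anything, you are more careful than the paper on the final step: the paper attributes $\langle \nabla f(\bx^*), \bx - \bx^* \rangle \geq 0$ to ``convexity of $f$,'' whereas, as you correctly identify, it is the first-order optimality (variational) inequality for $\bx^* \in \arg\min_{\bx\in\K} f(\bx)$, which holds for $\bx \in \K$ rather than all of $\real^d$ as the lemma's statement nominally allows.
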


Next, for every iteration $t\leq T$, we define:
\begin{gather*}
   \hat{\bd}_t:=\mathcal{A}_\omega(\{\bd_t^{(i)}, s_t^{(i)}\}_{i=1}^m)
    \\
    \EEE_t:=\hat{\bd}_t-\nabla f(\bx_t)
\end{gather*}
Thus, combining Theorem~\ref{theorem1} with Lemma~\ref{lem:RegretBound}, we have that,
\begin{align}
\label{eq:regret-main}
    \alpha_{1:t}(f(\bx_t)-f(\bx^*)) &\leq \sum_{\tau\in[t]} \alpha_\tau\langle \nabla f(\bx_\tau), \bw_\tau-\bx^*\rangle \nonumber \\ 
    &= \sum_{\tau\in[t]}\alpha_\tau\langle \hat{\bd}_\tau,\bw_{\tau+1}-\bx^*\rangle + \sum_{\tau\in[t]}\alpha_\tau\langle \hat{\bd}_\tau,\bw_\tau-\bw_{\tau+1}\rangle - \sum_{\tau\in[t]}\alpha_\tau \langle\EEE_\tau,\bw_\tau-\bx^*\rangle \nonumber \\
    &\leq \frac{D^2}{2\eta} - \frac{1}{2\eta} \sum_{\tau\in[t]} \|{\bw_\tau-\bw_{\tau+1}}\|^2  + \sum_{\tau\in[t]}\alpha_\tau\langle \hat{\bd}_\tau,\bw_\tau-\bw_{\tau+1}\rangle  - \sum_{\tau\in[t]}\alpha_\tau\langle\EEE_\tau,\bw_\tau-\bx^*\rangle \nonumber \\
    &= \frac{D^2}{2\eta} - \frac{1}{2\eta} \sum_{\tau\in[t]} \|{\bw_\tau-\bw_{\tau+1}}\|^2 +  \sum_{\tau\in[t]}\alpha_\tau\langle \nabla f(\bx_\tau),\bw_\tau-\bw_{\tau+1}\rangle  -\sum_{\tau\in[t]}\alpha_\tau \langle\EEE_\tau,\bw_{\tau+1}-\bx^*\rangle \nonumber \\  &\leq \frac{D^2}{2\eta} \underbrace{- \frac{1}{2\eta} \sum_{\tau\in[t]} \|{\bw_\tau-\bw_{\tau+1}}\|^2 +  \sum_{\tau\in[t]}\alpha_\tau\langle \nabla f(\bx_\tau),\bw_\tau-\bw_{\tau+1}\rangle  }_{\textnormal{(A)}}+D\sum_{\tau\in[t]}\alpha_\tau\|{\EEE_\tau}\|~, 
\end{align}
where the first inequality is derived from the Anytime guarantee, as outlined in Theorem \ref{theorem1}. The second inequality follows Lemma \ref{lem:RegretBound}.  The third inequality is a result of applying the Cauchy-Schwarz inequality and the assumption in Eq. \eqref{eq:bounded_diameter}.
\begin{align*}
    \textnormal{(A)} &:=  - \frac{1}{2\eta} \sum_{\tau\in[t]} \|{\bw_\tau-\bw_{\tau+1}}\|^2 +  \sum_{\tau\in[t]}\alpha_\tau\langle \nabla f(\bx_\tau),\bw_\tau-\bw_{\tau+1}\rangle
    \\ & =- \frac{1}{2\eta} \sum_{\tau\in[t]} \|{\bw_\tau-\bw_{\tau+1}}\|^2 +  \sum_{\tau\in[t]}\alpha_\tau\langle \nabla f(\bx_\tau)-\nabla f(\bx^*),\bw_\tau-\bw_{\tau+1}\rangle + \sum_{\tau\in[t]}\alpha_\tau  \langle \nabla f(\bx^*),\bw_\tau-\bw_{\tau+1}\rangle \\ 
    & \leq \frac{\eta}{2} \sum_{\tau\in[t]} \alpha_\tau^2\|{\nabla f(\bx_\tau)-\nabla f(\bx^*)}\|^2 + \sum_{\tau\in[t]}\alpha_\tau\langle \nabla f(\bx^*),\bw_\tau-\bw_{\tau+1}\rangle\\ 
    & \leq 2\eta L \sum_{\tau\in[t]} \alpha_{1:\tau}\Delta_\tau + \sum_{\tau\in[t]}(\alpha_\tau-\alpha_{\tau-1})\langle \nabla f(\bx^*),\bw_\tau\rangle - \alpha_t \langle \nabla f(\bx^*),\bw_{t+1}\rangle \\ 
    & = 2\eta L \sum_{\tau\in[t]} \alpha_{1:\tau}\Delta_\tau + \sum_{\tau\in[t]}(\alpha_\tau-\alpha_{\tau-1})\langle \nabla f(\bx^*),\bw_\tau - \bw_{t+1}\rangle \\
    & \leq 2\eta L \sum_{\tau\in[t]} \alpha_{1:\tau}\Delta_\tau + \sum_{\tau\in[t]}(\alpha_\tau-\alpha_{\tau-1})\| \nabla f(\bx^*)\| \|\bw_\tau - \bw_{t+1}\| \\
    & \leq  \frac{1}{2T} \sum_{\tau\in[T]} \alpha_{1:\tau}\Delta_\tau + \alpha_tG^*D ~.
\end{align*}
Here, the first inequality employs the Young’s inequality.
For the second inequality, we introduce the notation \(\Delta_t := f(\bx_t) - f(\bx^*)\), and we follow Lemma \ref{lemma2}, which relates to the smoothness of the function $f$. In this step, we also set $\alpha_0 = 0$ and utilizes the property \(\alpha_\tau^2 \leq 2\alpha_{1:\tau}\), given that \(\alpha_\tau = \tau\). The third inequality uses the Cauchy-Schwarz inequality. The last inequality follows the assumption in Eq. \eqref{eq:bounded_diameter}. It uses the fact that \(t \leq T\) and $\Delta_t\geq0$, $\forall t$. This step also incorporates the choice of an appropriate learning rate parameter \(\eta\leq1/4LT\).

Plugging \textnormal{(A)} into Eq. \eqref{eq:regret-main}, gives us,
\begin{align}
\label{eq:reg-final}
    \alpha_{1:t}\Delta_t &\leq \frac{1}{2T} \sum_{\tau\in[T]} \alpha_{1:\tau}\Delta_\tau+ \frac{D^2}{2\eta} + \alpha_tG^*D +D\sum_{\tau\in[t]}\alpha_\tau\|{\EEE_\tau}\|~. 
\end{align}

\begin{lemma}[Lemma C.2 in \cite{levy2023mu}]
\label{lemma3}
let $\{A_t\}_{t\in[T]}, \{B_t\}_{t\in[T]}$ be sequences of non-negative elements, and assume that for any $t\leq T$,
\begin{align*}
    A_t \leq B_T + \frac{1}{2T} \sum_{t\in[T]} A_t~.
\end{align*}
Then the following bound holds,
\begin{align*}
    A_T \leq 2B_T~.
\end{align*}
\end{lemma}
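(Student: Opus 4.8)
The plan is to exploit the fact that the awkward self-referential term on the right-hand side, $\frac{1}{2T}\sum_{t\in[T]} A_t$, does not actually depend on the left-hand index $t$: it is one fixed number. So first I would abbreviate $S := \frac{1}{2T}\sum_{t\in[T]} A_t$, which turns the hypothesis into the uniform bound $A_t \leq B_T + S$ holding for every $t\in[T]$. The whole point is that this is a bootstrap inequality: $S$ is (half) the average of the very quantities $A_t$ that we have just bounded in terms of $S$, so feeding the bound back into the average will let us solve for $S$ in terms of $B_T$ alone.

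Concretely, the next step is to average the inequality $A_t \leq B_T + S$ over $t\in[T]$, giving $\frac{1}{T}\sum_{t\in[T]} A_t \leq B_T + S$. Since by definition $\frac{1}{T}\sum_{t\in[T]} A_t = 2S$, this reads $2S \leq B_T + S$, and subtracting $S$ from both sides yields $S \leq B_T$. Non-negativity of the $A_t$ (hence of $S$) is what keeps this rearrangement legitimate and everything well defined.

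Finally I would instantiate the hypothesis at $t = T$ and plug in the bound just obtained: $A_T \leq B_T + S \leq B_T + B_T = 2B_T$, which is exactly the claim.

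There is no genuinely hard step here; the only thing worth flagging — and the single point a hurried reader could stumble on — is recognizing that the averaging term is constant in $t$, so the inequality may be averaged back into itself to absorb the factor and isolate $S$. It is also worth noting that the constant $\frac{1}{2}$ is essential: had the coefficient been $1$, averaging would collapse to the vacuous $0 \leq B_T$ and give no control on $A_T$, whereas $\frac{1}{2}$ leaves $2S - S = S$ on the favorable side and produces the clean factor of $2$ in the conclusion.
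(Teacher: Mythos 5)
Your proof is correct: abbreviating the constant term $S=\frac{1}{2T}\sum_{t\in[T]}A_t$, averaging the hypothesis over $t$ to get $2S\leq B_T+S$, and then instantiating at $t=T$ is exactly the standard bootstrap argument for this bound. The paper itself does not prove this lemma (it is quoted from Lemma C.2 of the cited reference), but your argument matches the one given there in substance, and your remark about why the coefficient $\tfrac{1}{2T}$ rather than $\tfrac{1}{T}$ is essential is accurate.
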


In the next step, let us define two terms:  $A_t := \alpha_{1:t}\E\left[f(\bx_t)-f(\bx^*)\right]$ and $B_t := \frac{D^2}{2\eta}+ \alpha_tG^*D  + D\sum_{\tau\in[t]}\alpha_\tau\E\|{\EEE_\tau}\|$. Note that the series $\{B_t\}_t$ forms a non-decreasing series of non-negative values, implying $B_t \leq B_T$ for any $t\in[T]$. As a result of Eq. \eqref{eq:reg-final}, we have that $A_t\leq B_T + \frac{1}{2T}\sum_{\tau\in[T]}A_\tau$. 

Leveraging Lemma \ref{lemma3}, Eq. \eqref{eq:EEE}, and acknowledging that \(\alpha_{1:T}=\Theta(T^2)\), as \(\alpha_t=t\), it follows that:
\begin{align*}
    \E[f(\bx_T)-f(\bx^*)] &\leq \frac{2}{T^2}\B_T \\
    & = \frac{D^2}{T^2\eta}+ \frac{2G^*D}{T}   + \frac{2D }{T^2}\sum_{t\in[T]}\alpha_t\E\|{\EEE}\| \\
    & \leq O\left(\frac{D^2}{T^2\eta}+ \frac{G^*D}{T} + \frac{D }{{T^2}}\sum_{t\in[T]}\left(\sqrt{t}{\tsigma\sqrt{1+mc_\lambda}} + {\tau^{max}_{t}DL\sqrt{1+c_\lambda}}\right)\right)\\
    & \leq O\left(\frac{D^2}{T^2\eta}+ \frac{G^*D}{T} + \frac{D\tsigma\sqrt{1+mc_\lambda}}{\sqrt{T}} + \frac{\mu^{max}D^2L\sqrt{1+c_\lambda}}{{T}}\right)~,
\end{align*}
where $\mu^{max}:=\frac{1}{T}\sum_{t\in[T]}\tau_t^{max}$. Finally, choosing the optimal \(\eta \leq \frac{1}{4TL}\) gives us:
\begin{equation*}
    \E[f(\bx_T)-f(\bx^*)] \leq O\left(\frac{LD^2\mu^{max}\sqrt{1+c_\lambda}}{{T}} + \frac{G^*D}{T} + \frac{D\tsigma\sqrt{1+mc_\lambda}}{\sqrt{T}}\right)~.
\end{equation*} 
\end{proof}

\subsubsection{Proof of Lemma ~\ref{lem:RegretBound}}

\begin{proof}[Proof of Lemma ~\ref{lem:RegretBound}]
The update rule $\bw_{\tau+1} = \Pi_\K (\bw_\tau - \eta\alpha_\tau\hat{\bd}_\tau)$ can be expressed as a convex optimization problem within the set $\K$:
\begin{align*}
    \bw_{\tau + 1} &= \Pi_{\K}\left({\bw_{\tau} - \eta \alpha_{\tau} \hat{\bd}_\tau}\right) 
    \\ &= \arg\min_{\bw\in\K} \|\bw_{\tau} - \eta \alpha_{\tau} \hat{\bd}_\tau - \bw\|^2 
    \\ &= \arg\min_{\bw\in\K}\{ \alpha_\tau \langle \hat{\bd}_\tau, \bw - \bw_\tau\rangle + \frac{1}{2\eta} \|\bw - \bw_\tau\|^2\}~.
\end{align*}

Here, the first equality is derived from the definition of the update rule, the second stems from the property of projection, and the final equality is obtained by reformulating the optimization problem in a way that does not affect the minimum value.

Given that \(\bw_{\tau+1}\) is the optimal solution of the above convex problem, by the optimality conditions, we have that:
    \begin{align*}
        \left\langle \alpha_\tau\hat{\bd}_\tau + \frac{1}{\eta}(\bw_{\tau+1} - \bw_{\tau}), \bw-\bw_{\tau+1}\right\rangle \geq 0, \quad \forall \bw\in\K~.
    \end{align*}

Rearranging this, summing over $t\geq1$ iterations, and taking $\bw=\bx^*$, we derive:
    \begin{align*}
        \sum_{\tau\in[t]}\alpha_\tau \langle \hat{\bd}_\tau, \bw_{\tau+1}-\bx^*\rangle & \leq \frac{1}{\eta} \sum_{\tau\in[t]}\langle \bw_\tau - \bw_{\tau+1}, \bw_{\tau+1}-\bx^*\rangle \\ &=  \frac{1}{2\eta}\sum_{\tau\in[t]} \left( \|\bw_\tau-\bx^*\|^2 - \|\bw_{\tau+1}-\bx^*\|^2 - \|\bw_\tau-\bw_{\tau+1}\|^2\right) \\ &=  \frac{1}{2\eta} \left( \|\bw_1-\bx^*\|^2 - \|\bw_{t+1}-\bx^*\|^2 - \sum_{\tau\in[t]}\|\bw_\tau-\bw_{\tau+1}\|^2\right) \\ & \leq \frac{D^2}{2\eta} -\frac{1}{2\eta}\sum_{\tau\in[t]} \|\bw_{\tau}-\bw_{\tau+1}\|^2~.
    \end{align*}
The first equality equality is achieved through algebraic manipulation, and the last inequality follows the assumption in Eq. \eqref{eq:bounded_diameter}. 
\end{proof}

\subsubsection{Proof of Lemma ~\ref{lemma2}}
\begin{proof}[Proof of Lemma ~\ref{lemma2}]

\begin{lemma}[Lemma C.1 in \citet{levy2023mu}]
\label{lem:smooth-functions}
let $f:\real^d \rightarrow \real$ be an $L$-smooth function with a global minimum $\bx^*$, then for any $\bx\in\real^d$ we have,
\begin{equation*}
    \|{\nabla f(\bx)}\|^2 \leq 2L(f(\bx)-f(\bx^*))~.
\end{equation*}
\end{lemma}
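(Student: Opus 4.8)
The plan is to derive the inequality from the standard quadratic upper bound (the \emph{descent lemma}) implied by $L$-smoothness, evaluate it at one well-chosen point, and then invoke the global optimality of $\bx^*$. A convenient feature here is that $f$ is defined on all of $\real^d$ rather than on a constrained set $\K$, so every auxiliary point we construct is automatically feasible and no projection enters the argument; in particular, convexity of $f$ will turn out to be unnecessary for this specific bound.

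First I would record the descent lemma: since $f$ has an $L$-Lipschitz gradient (as implied by Eq.~\eqref{eq:Main}), for all $\bx,\by\in\real^d$,
\begin{equation*}
    f(\by) \le f(\bx) + \langle \nabla f(\bx), \by - \bx\rangle + \frac{L}{2}\|\by-\bx\|^2~.
\end{equation*}
This follows from writing $f(\by)-f(\bx)=\int_0^1 \langle \nabla f(\bx+t(\by-\bx)), \by-\bx\rangle\,dt$ and bounding the integrand via $\|\nabla f(\bx+t(\by-\bx))-\nabla f(\bx)\|\le Lt\|\by-\bx\|$.

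Next I would minimize the right-hand side over $\by$, which corresponds to the single gradient step $\by = \bx - \tfrac{1}{L}\nabla f(\bx)$. Substituting this choice collapses the linear and quadratic terms into $-\tfrac{1}{2L}\|\nabla f(\bx)\|^2$, yielding
\begin{equation*}
    f\Bigl(\bx-\tfrac{1}{L}\nabla f(\bx)\Bigr) \le f(\bx) - \frac{1}{2L}\|\nabla f(\bx)\|^2~.
\end{equation*}
Finally, because $\bx^*$ is a global minimizer, $f(\bx^*)\le f\bigl(\bx-\tfrac1L\nabla f(\bx)\bigr)$; chaining this with the previous display and rearranging gives $\|\nabla f(\bx)\|^2 \le 2L(f(\bx)-f(\bx^*))$, as claimed.

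I do not expect any genuine obstacle: this is a textbook gradient-domination estimate. The only points meriting attention are the derivation (or citation) of the descent lemma from the Lipschitz-gradient assumption, and the recognition that the optimal $\by$ is precisely the $\tfrac1L$-step so that the bound is tight; both are routine.
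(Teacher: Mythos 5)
Your proof is correct and is the canonical gradient-domination argument: the descent lemma evaluated at the exact gradient step $\by=\bx-\tfrac1L\nabla f(\bx)$, followed by global optimality of $\bx^*$. The paper itself does not prove this statement but imports it by citation (Lemma C.1 of \citet{levy2023mu}), and your argument is precisely the standard one behind that result, including the correct observation that convexity is not needed here.
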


     Let us define the function $h(\bx)=f(\bx)-f(\bx^*)-\langle \nabla f(\bx^*), \bx - \bx^* \rangle$. Due to the convexity of \( f(\bx) \), we have the gradient inequality \( f(\bx) - f(\bx^*) \geq \langle \nabla f(\bx^*), \bx - \bx^* \rangle \), which implies \( h(\bx) \geq 0 \). As $h(\bx^*)=0$, this implies that $\bx^*$ is the global minimum of $h$. Applying Lemma \ref{lem:smooth-functions}, gives us,
    \begin{align*}
        \|{\nabla f(\bx)}-\nabla f(\bx^*)\|^2 = \|{\nabla h(\bx)}\|^2 \leq 2L(f(\bx)-f(\bx^*)-\langle \nabla f(\bx^*), \bx - \bx^* \rangle) \leq 2L(f(\bx)-f(\bx^*))~,
    \end{align*}
where last inequality holds due to the convexity of \( f \), which implies that \( \langle \nabla f(\bx^*), \bx - \bx^* \rangle \geq 0 \).
\end{proof}

\section{Robust Aggregators Analysis}
\label{app:robust-agg}

\subsection{Weighted Robust Aggregators}

\subsubsection{Weighted Geometric Median (WeightedGM)}
The Weighted Geometric Median (WeightedGM) is an aggregation method that seeks a point minimizing the weighted sum of Euclidean distances to a set of points. Formally, for a given set of points $\{\bx_i\}_{i=1}^m$ and corresponding weights $\{s_i\}_{i=1}^m$, the WeightedGM aggregator is defined as follows:
\[
\textnormal{WeightedGM} \in \arg \min_{\by \in \real^d} \sum_{i\in[m]} s_i \|\by - \bx_i\|
\]

\begin{lemma}
\label{lem:WeightedGM}
    Let ${\hat{\bx}}$ be a WeightedGM aggregator then $\hat{\bx}$ is $(c_\lambda, \lambda)$-weighted robust with $c_\lambda=\left(1+\frac{\lambda}{1-2\lambda}\right)^2$.

\begin{proof}
\begin{align*}
    \|\hat{\bx} - \Bar{\bx}_\GGG \|&=\left\|\hat{\bx} - \frac{1}{\sum_{i\in\GGG} s_i}\sum_{i\in\GGG}s_i{\bx}_i \right\| \\ &
    \leq \frac{1}{\sum_{i\in\GGG} s_i}\sum_{i\in\GGG}s_i \left\|\hat{\bx} -{\bx}_i \right\|\\ &
    = \frac{1}{\sum_{i\in\GGG} s_i}\sum_{i\in[m]}s_i \left\|\hat{\bx} -{\bx}_i \right\| - \frac{1}{\sum_{i\in\GGG} s_i}\sum_{i\in\BBB}s_i \left\|\hat{\bx} -{\bx}_i \right\| \\ &
    \leq \frac{1}{\sum_{i\in\GGG} s_i}\sum_{i\in[m]}s_i \left\|\Bar{\bx}_\GGG -{\bx}_i \right\| - \frac{1}{\sum_{i\in\GGG} s_i}\sum_{i\in\BBB}s_i \left\|\hat{\bx} -{\bx}_i \right\| \\ &
    = \frac{1}{\sum_{i\in\GGG} s_i}\sum_{i\in\GGG}s_i \left\|\Bar{\bx}_\GGG -{\bx}_i \right\| + \frac{1}{\sum_{i\in\GGG} s_i}\sum_{i\in\BBB}s_i \left\|\Bar{\bx}_\GGG -{\bx}_i \right\| - \frac{1}{\sum_{i\in\GGG} s_i}\sum_{i\in\BBB}s_i \left\|\hat{\bx} -{\bx}_i \right\| \\ &
    \leq \frac{1}{\sum_{i\in\GGG} s_i}\sum_{i\in\GGG}s_i \left\|\Bar{\bx}_\GGG -{\bx}_i \right\| + \frac{1}{\sum_{i\in\GGG} s_i}\sum_{i\in\BBB}s_i \left\|\Bar{\bx}_\GGG -\hat{\bx} \right\| \\ &
    \leq \frac{1}{\sum_{i\in\GGG} s_i}\sum_{i\in\GGG}s_i \left\|\Bar{\bx}_\GGG -{\bx}_i \right\| + \frac{\lambda}{1-\lambda} \left\|\Bar{\bx}_\GGG -\hat{\bx} \right\|~.
\end{align*}
The first inequality leverages Jensen's inequality. The second inequality follows the WeightedGM definition. The third is derived using the following triangle inequality: $\|\bar{\bx}_\GGG-\bx_i\|\leq\|\bar{\bx}_\GGG-\hat{\bx}\|+\|\hat{\bx}-\bx_i\|$. The final inequality is based on the assumptions that $\sum_{i\in\BBB}s_i \leq \lambda s_{1:m}$ and $\sum_{i\in\GGG}s_i \geq (1-\lambda) s_{1:m}$.

By rearranging, we obtain:
\begin{align*}
    \|\hat{\bx} - \Bar{\bx}_\GGG \|&
    \leq \left(1+\frac{\lambda}{1-2\lambda}\right)\frac{1}{\sum_{i\in\GGG} s_i}\sum_{i\in\GGG}s_i \left\|\Bar{\bx}_\GGG -{\bx}_i \right\|~.
\end{align*}

Taking the square of both sides and applying Jensen's inequality gives us: 
\begin{align*}
    \|\hat{\bx} - \Bar{\bx}_\GGG \|^2&
    \leq \left(1+\frac{\lambda}{1-2\lambda}\right)^2\frac{1}{\sum_{i\in\GGG} s_i}\sum_{i\in\GGG}s_i \left\|\Bar{\bx}_\GGG -{\bx}_i \right\|^2~.
\end{align*}
Taking the exception of both sides gives us the following:
\begin{align*}
    \E\|\hat{\bx} - \Bar{\bx}_\GGG \|^2&
    \leq \left(1+\frac{\lambda}{1-2\lambda}\right)^2\rho^2~.
\end{align*}
\end{proof}
\end{lemma}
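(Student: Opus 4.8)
The plan is to first derive a purely deterministic upper bound on $\|\hat{\bx} - \bar{\bx}_\GGG\|$ in terms of the weighted honest deviations $\{\|\bar{\bx}_\GGG - \bx_i\|\}_{i\in\GGG}$, and only afterwards to square and take expectations so as to match the form required by Definition~\ref{def2}. The single nontrivial ingredient is the defining optimality property of WeightedGM: since $\hat{\bx}$ minimizes the map $\by \mapsto \sum_{i\in[m]} s_i\|\by - \bx_i\|$, I may compare its objective value against the particular competitor $\bar{\bx}_\GGG$. Everything else is triangle inequalities, the weight assumptions $\sum_{i\in\BBB} s_i \leq \lambda s_{1:m}$ and $\sum_{i\in\GGG} s_i \geq (1-\lambda) s_{1:m}$, and convexity.

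First I would use Jensen's inequality to write $\|\hat{\bx} - \bar{\bx}_\GGG\| \leq \frac{1}{\sum_{i\in\GGG} s_i}\sum_{i\in\GGG} s_i \|\hat{\bx} - \bx_i\|$, and then re-express the honest sum as the full sum over $[m]$ minus the Byzantine sum. On the full sum I apply WeightedGM optimality to replace $\hat{\bx}$ by $\bar{\bx}_\GGG$, and split the resulting quantity back into its honest and Byzantine parts. The Byzantine contributions then appear as $\frac{1}{\sum_{i\in\GGG}s_i}\sum_{i\in\BBB} s_i(\|\bar{\bx}_\GGG - \bx_i\| - \|\hat{\bx} - \bx_i\|)$, and the triangle inequality $\|\bar{\bx}_\GGG - \bx_i\| - \|\hat{\bx} - \bx_i\| \leq \|\bar{\bx}_\GGG - \hat{\bx}\|$ bounds each such term by $\|\bar{\bx}_\GGG - \hat{\bx}\|$. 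This is the crucial move: it eliminates all explicit dependence on the arbitrary adversarial vectors $\bx_i$, $i\in\BBB$, leaving only the honest average plus a self-referential copy of $\|\hat{\bx} - \bar{\bx}_\GGG\|$.

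At this stage the bound reads $\|\hat{\bx} - \bar{\bx}_\GGG\| \leq \frac{1}{\sum_{i\in\GGG} s_i}\sum_{i\in\GGG} s_i\|\bar{\bx}_\GGG - \bx_i\| + \frac{\sum_{i\in\BBB} s_i}{\sum_{i\in\GGG} s_i}\|\hat{\bx} - \bar{\bx}_\GGG\|$. The weight assumptions give $\frac{\sum_{i\in\BBB} s_i}{\sum_{i\in\GGG} s_i} \leq \frac{\lambda}{1-\lambda}$, so I can move the self-referential term to the left-hand side; since $1 - \frac{\lambda}{1-\lambda} = \frac{1-2\lambda}{1-\lambda}$ and $\lambda < \nicefrac{1}{2}$ keeps this coefficient strictly positive, dividing through produces the factor $\frac{1-\lambda}{1-2\lambda} = 1 + \frac{\lambda}{1-2\lambda}$ in front of the honest average.

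Finally, I would square both sides and apply Jensen's inequality once more to the weighted honest average, using convexity of $t\mapsto t^2$ to pass from $\big(\frac{1}{\sum_{i\in\GGG} s_i}\sum_{i\in\GGG} s_i\|\bar{\bx}_\GGG - \bx_i\|\big)^2$ to $\frac{1}{\sum_{i\in\GGG} s_i}\sum_{i\in\GGG} s_i\|\bar{\bx}_\GGG - \bx_i\|^2$. Taking expectations and substituting $\E\|\bx_i - \bar{\bx}_\GGG\|^2 \leq \rho_i^2$ together with $\rho^2 = \frac{1}{\sum_{i\in\GGG} s_i}\sum_{i\in\GGG} s_i \rho_i^2$ yields $\E\|\hat{\bx} - \bar{\bx}_\GGG\|^2 \leq \big(1+\frac{\lambda}{1-2\lambda}\big)^2 \rho^2$, i.e. $c_\lambda = \big(1+\frac{\lambda}{1-2\lambda}\big)^2$. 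The main obstacle is concentrated entirely in the middle step — the extend-to-$[m]$, invoke-optimality, recombine-Byzantine-terms chain that simultaneously discharges the adversarial vectors and surfaces the self-referential norm; once that term is absorbed across the inequality, the remainder is routine algebra with the weight fractions and two applications of convexity.
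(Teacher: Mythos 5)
Your proposal is correct and follows essentially the same route as the paper's proof: Jensen's inequality on the honest weighted average, extension to the full sum, the WeightedGM optimality comparison against $\bar{\bx}_\GGG$, the triangle inequality to absorb the Byzantine terms into a self-referential copy of $\|\hat{\bx}-\bar{\bx}_\GGG\|$, rearrangement using $\frac{1-\lambda}{1-2\lambda}=1+\frac{\lambda}{1-2\lambda}$, and a final squaring plus Jensen before taking expectations. No gaps; the argument matches step for step.
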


\subsubsection{Weighted Coordinate-Wise Median (WeightedCWMed)}
The Weighted Coordinate-Wise Median (WeightedCWMed) is an aggregation technique that operates on a per-coordinate basis. For a given set of multi-dimensional data points $\{\bx_i\}_{i=1}^m$ and corresponding weights $\{s_i\}_{i=1}^m$, the WeightedCWMed is computed by independently finding the weighted median of each coordinate across all points. Formally, for the $k^{\rm{th}}$ dimension, the WeightedCWMed aggregator is defined as:
\[
[\textnormal{WeightedCWMed}]_k := \textnormal{WeightedMedian}(\{[\bx_i]_k\}_{i=1}^m; \{[s_i]_k\}_{i=1}^m)
\]
where $[\bx]_k$ is the $k^{\rm{th}}$ element of a vector $\bx$ and the WeightedMedian is defined as follows:
given the elements $\{\bx_1,\ldots,\bx_m\}$ of each dimension are sorted in ascending order and their corresponds weights $\{s_1,\ldots,s_m\}$, the weighted median is the element $\bx_{j^*}$, where $j^*$ is determined by the condition:
\[ j^* \in \arg \min_{j\in[m]}\left\{\sum_{i\in[j]}s_i > \frac{1}{2} \sum_{i\in[m]} s_i \right\} \]

If there exists a value $j$ such that:
\[ \sum_{i\in[j]}s_i = \frac{1}{2} \sum_{i\in[m]} s_i  \]
Then, the WeightedMedian is the average of the $j$-th and $(j+1)$-th elements:
\[ \textnormal{WeightedMedian} = \frac{\bx_j+\bx_{j+1}}{2} \]

Here, we extend the theoretical guarantee of the Coordinate-Wise Median (CWMed) to its weighted version, following the procedure in \cite{allouah2023fixing}.

\begin{lemma}
\label{lem:coordinate-wise}
    Let $A_\omega: \mathbb{R}^{d \times m} \rightarrow \mathbb{R}^d$ be a weighted coordinate-wise aggregation function. Given set of points $\{\bx_i\}_{i=1}^m$ and corresponding weights $\{s_i\}_{i=1}^m$, this function incorporates $d$ real-valued functions \( A_\omega^{1}, \ldots, A_\omega^{d} \), where each $[A_\omega(\{\bx_i\}_{i=1}^m; \{s_i\}_{i=1}^m)]_k = A_\omega^{k}(\{[\bx_i]_k\}_{i=1}^m; \{[s_i]_k\}_{i=1}^m)$. If for each $k\in[d]$, $A_\omega^{k}$ is $(c_\lambda, \lambda)$-weighted robust that satisfies:
    \begin{align*}
       \E\left| A_\omega^{k}(\{[\bx_i]_k\}_{i=1}^m; \{[s_i]_k\}_{i=1}^m) - [\Bar{\bx}_\GGG]_k \right|^2 & \leq   \frac{c_\lambda}{\sum_{i\in\GGG}s_i} \sum_{i\in\GGG}s_i \E\left| [\bx_i]_k - [\Bar{\bx}_\GGG]_k \right|^2~. 
    \end{align*}
    Then $A_\omega$ is $(c_\lambda, \lambda)$-weighted robust. 

    \begin{proof}
        Since $A_\omega$ is a coordinate-wise aggregator, it applies the same aggregation rule across each dimension. Therefore,   
        \begin{align*}
            \left\|A_\omega(\{\bx_i\}_{i=1}^m; \{s_i\}_{i=1}^m) - \Bar{\bx}_\GGG\right\|^2 = \sum_{k\in[d]}\left| A_\omega^{k}(\{[\bx_i]_k\}_{i=1}^m; \{[s_i]_k\}_{i=1}^m) - [\Bar{\bx}_\GGG]_k \right|^2~.
        \end{align*}
        Given that each $A_\omega^{k}$, for $k \in [d]$, is $(c_\lambda, \lambda)$-weighted robust  that satisfies: 
        \begin{align*}
            \E\left| A_\omega^{k}(\{[\bx_i]_k\}_{i=1}^m; \{[s_i]_k\}_{i=1}^m) - [\Bar{\bx}_\GGG]_k \right|^2 & \leq   \frac{c_\lambda}{\sum_{i\in\GGG}s_i} \sum_{i\in\GGG}s_i \E\left| [\bx_i]_k - [\Bar{\bx}_\GGG]_k \right|^2.
        \end{align*}
        We can express the overall aggregation function $A_\omega$ as follows:
        \begin{align*}
            \sum_{k\in[d]}\E\left| A_\omega^{k}(\{[\bx_i]_k\}_{i=1}^m; \{[s_i]_k\}_{i=1}^m) - [\Bar{\bx}_\GGG]_k \right|^2 & \leq   \sum_{k\in[d]}\frac{c_\lambda}{\sum_{i\in\GGG}s_i} \sum_{i\in\GGG}s_i \E\left| [\bx_i]_k - [\Bar{\bx}_\GGG]_k \right|^2 \\ & = \frac{c_\lambda}{\sum_{i\in\GGG}s_i} \sum_{i\in\GGG}s_i \E\sum_{k\in[d]}\left| [\bx_i]_k - [\Bar{\bx}_\GGG]_k \right|^2  \\ & = \frac{c_\lambda}{\sum_{i\in\GGG}s_i} \sum_{i\in\GGG}s_i \E\left\| \bx_i - \Bar{\bx}_\GGG \right\|^2 \\ & \leq c_\lambda \rho^2~,
        \end{align*}
where the first inequality is derived from the assumption stated in this lemma. The second aligns with the definition of $(c_\lambda, \lambda)$-weighted robust as detailed in Definition \ref{def2}. 
\end{proof}
\end{lemma}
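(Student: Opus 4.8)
The plan is to exploit the fact that $A_\omega$ acts independently on each coordinate, so the target bound $\E\|A_\omega(\{\bx_i\};\{s_i\}) - \Bar{\bx}_\GGG\|^2 \leq c_\lambda \rho^2$ required by Definition~\ref{def2} decouples into $d$ one-dimensional statements, each of which is precisely the per-coordinate hypothesis supplied for $A_\omega^k$. In other words, I would lift the scalar guarantees coordinate-by-coordinate and reassemble them into the vector guarantee, never needing anything about how $A_\omega^k$ is actually implemented beyond the assumed inequality.

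Concretely, the first step is to expand the squared Euclidean norm of the aggregation error across coordinates,
\[
\left\|A_\omega(\{\bx_i\};\{s_i\}) - \Bar{\bx}_\GGG\right\|^2 = \sum_{k\in[d]} \left| A_\omega^k(\{[\bx_i]_k\};\{[s_i]_k\}) - [\Bar{\bx}_\GGG]_k\right|^2 ,
\]
which uses only the definition of a coordinate-wise aggregator together with $\|\bv\|^2=\sum_k [\bv]_k^2$. Taking expectations and applying the assumed $(c_\lambda,\lambda)$-weighted-robust bound to each summand gives an upper bound of $\sum_{k}\tfrac{c_\lambda}{\sum_{i\in\GGG}s_i}\sum_{i\in\GGG}s_i \,\E\left|[\bx_i]_k - [\Bar{\bx}_\GGG]_k\right|^2$. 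Next I would interchange the two finite sums over $k$ and over $i\in\GGG$, factor the common $c_\lambda/\sum_{i\in\GGG}s_i$ outside, and recombine the inner sum over $k$ back into a full Euclidean norm, producing $\tfrac{c_\lambda}{\sum_{i\in\GGG}s_i}\sum_{i\in\GGG}s_i\,\E\|\bx_i - \Bar{\bx}_\GGG\|^2$. Finally, invoking the per-worker variance bounds $\E\|\bx_i - \Bar{\bx}_\GGG\|^2 \leq \rho_i^2$ and the definition $\rho^2 := \tfrac{1}{\sum_{i\in\GGG}s_i}\sum_{i\in\GGG}s_i\,\rho_i^2$ from Definition~\ref{def2} collapses the expression to $c_\lambda\rho^2$, which is exactly the claimed conclusion.

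The argument is essentially bookkeeping, so there is no deep obstacle; the only point requiring care is the interchange-and-recombine step, which implicitly relies on the normalizing weight $\sum_{i\in\GGG}s_i$ and the worker weights $[s_i]_k$ being identical across all coordinates, so that the scalar factor truly pulls out of the sum over $k$ and the inner coordinate sum reassembles into $\|\bx_i-\Bar{\bx}_\GGG\|^2$. For the coordinate-wise rules of interest, such as WeightedCWMed, the weight attached to worker $i$ is coordinate-independent, so this holds automatically; I would state this coordinate-independence explicitly to make the recombination rigorous rather than leaving it tacit.
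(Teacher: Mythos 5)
Your proposal is correct and follows essentially the same route as the paper's proof: expand the squared norm coordinate-wise, apply the scalar hypothesis to each coordinate, swap the sums over $k$ and $i\in\GGG$, recombine into $\E\|\bx_i-\Bar{\bx}_\GGG\|^2$, and invoke the definition of $\rho^2$. Your explicit remark that the worker weights must be coordinate-independent for the recombination to go through is a worthwhile clarification that the paper leaves tacit, but it does not change the argument.
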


\begin{lemma}
    Let ${\hat{\bx}}$ be a WeightedCWMed aggregator then ${\hat{\bx}}$ is $(c_\lambda, \lambda)$-weighted robust with $c_\lambda=\left(1+\frac{\lambda}{1-2\lambda}\right)^2$.

\begin{proof}
In the context of the $k^{\rm{th}}$ coordinate, $[\textnormal{WeightedCWMed}]_k$ functions equivalently to WeightedGM for a one-dimensional case. Consequently, each coordinate of the WeightedCWMed aggregator is $(c_\lambda, \lambda)$-weighted robust with $c_\lambda=\left(1+\frac{\lambda}{1-2\lambda}\right)^2$ as established in Lemma \ref{lem:WeightedGM}. Furthermore, since the WeightedCWMed functions on a coordinate-wise basis, it follows from Lemma \ref{lem:coordinate-wise} that the entire WeightedCWMed aggregator is $(c_\lambda, \lambda)$-weighted robust with $c_\lambda=\left(1+\frac{\lambda}{1-2\lambda}\right)^2$.
\end{proof}
\end{lemma}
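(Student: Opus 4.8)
The plan is to reduce the multidimensional claim to a one-dimensional statement and then lift it back using the coordinate-wise machinery already in place. Since WeightedCWMed acts independently on each coordinate, I would first observe that for each $k \in [d]$ the $k$-th output coordinate is the one-dimensional WeightedMedian of $\{[\bx_i]_k\}_{i=1}^m$ with weights $\{s_i\}_{i=1}^m$. The crucial observation is that in one dimension the WeightedMedian coincides with WeightedGM: minimizing $\sum_{i} s_i |y - x_i|$ over $y\in\real$ is exactly the definition of the scalar weighted geometric median. Hence every per-coordinate rule used by WeightedCWMed is itself an instance of WeightedGM applied to scalars.

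Next, I would invoke Lemma~\ref{lem:WeightedGM}, specialized to $d=1$, to obtain the per-coordinate guarantee. The important point here is to extract not merely the final aggregated bound $\E\|\hat{\bx}-\bar{\bx}_\GGG\|^2 \leq c_\lambda \rho^2$, but the stronger intermediate inequality appearing in that proof, namely the squared weighted-average form obtained just before $\rho^2$ is substituted. Read coordinate-wise and in expectation, this is precisely the hypothesis required by Lemma~\ref{lem:coordinate-wise}: for each $k$,
\[
\E\left| [\hat{\bx}]_k - [\bar{\bx}_\GGG]_k \right|^2 \leq \frac{c_\lambda}{\sum_{i\in\GGG}s_i}\sum_{i\in\GGG}s_i\,\E\left| [\bx_i]_k - [\bar{\bx}_\GGG]_k \right|^2,
\]
with $c_\lambda = \left(1+\tfrac{\lambda}{1-2\lambda}\right)^2$.

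Finally, I would apply Lemma~\ref{lem:coordinate-wise} directly: summing the per-coordinate bounds over $k\in[d]$ recombines the coordinate errors into the full Euclidean norms $\|\hat{\bx} - \bar{\bx}_\GGG\|^2$ and $\|\bx_i - \bar{\bx}_\GGG\|^2$, and the definition of $\rho^2$ from Definition~\ref{def2} then yields $\E\|\hat{\bx} - \bar{\bx}_\GGG\|^2 \leq c_\lambda \rho^2$, establishing that WeightedCWMed is $(c_\lambda,\lambda)$-weighted robust with the claimed constant.

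I expect the main obstacle to be the bookkeeping around exactly what Lemma~\ref{lem:WeightedGM} delivers. Its statement records only the aggregated bound $c_\lambda\rho^2$, whereas Lemma~\ref{lem:coordinate-wise} needs the per-coordinate weighted-average form \emph{before} averaging against $\rho^2$. One must therefore cite the penultimate display of the WeightedGM proof rather than its final line, and verify that the scalar projection argument there (Jensen's inequality together with the minimizing property of the median) transfers verbatim to each coordinate. Everything else is a direct chaining of the two established lemmas.
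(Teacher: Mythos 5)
Your proposal follows essentially the same route as the paper: identify each coordinate of WeightedCWMed as a one-dimensional WeightedGM, apply Lemma~\ref{lem:WeightedGM} per coordinate, and lift via Lemma~\ref{lem:coordinate-wise}. Your observation that one must cite the penultimate (pre-$\rho^2$) display of the WeightedGM proof to obtain the per-coordinate weighted-average form required by Lemma~\ref{lem:coordinate-wise} is correct and is precisely the step the paper's terser argument leaves implicit.
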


\subsection{Proof of Lemma \ref{lem:CTMA}}
\label{app:ctma}
\begin{proof}[Proof of Lemma \ref{lem:CTMA}]

We denote $\by_i := \bx_i -\bx_0$, $\Sigma_\GGG:=\sum_{i\in\GGG}s_i$, $\Sigma_S:=\sum_{i\in S}s_i$, $\Sigma_\BBB:=\sum_{i\in \BBB}s_i$, and $\Sigma_m:=\sum_{i\in [m]}s_i$. Recall that $\Sigma_\GGG\geq(1-\lambda)\Sigma_m$, and $\Sigma_S=(1-\lambda)\Sigma_m$ (Alg. \ref{alg:CTMA}).
\begin{align*}
    \hat{\bx} - \bar{\bx}_\GGG &= \frac{1}{\Sigma_S} \sum_{i \in S}s_i\bx_i - \bar{\bx}_\GGG 
   \\& = \bx_0  - \bar{\bx}_\GGG + \frac{1}{\Sigma_S} \sum_{i \in S} s_i(\bx_i - \bx_0)\\ & = -\frac{1}{\Sigma_\GGG} \sum_{i \in \GGG} s_i(\bx_i - \bx_0) + \frac{1}{\Sigma_S} \sum_{i \in S} s_i(\bx_i - \bx_0)\\  
   &= -\frac{1}{\Sigma_\GGG}\sum_{i \in \GGG} s_i \by_i + \frac{1}{\Sigma_S} \sum_{i \in S} s_i \by_i \\ &= \left(\frac{1}{\Sigma_S}-\frac{1}{\Sigma_\GGG}\right)\sum_{i \in \GGG} s_i \by_i -\frac{1}{\Sigma_S} \sum_{i \in \GGG} s_i \by_i + \frac{1}{\Sigma_S}  \sum_{i \in S}s_i \by_i     \\
    &=  \left(\frac{1}{\Sigma_S}-\frac{1}{\Sigma_\GGG}\right)\sum_{i \in \GGG} s_i \by_i - \frac{1}{\Sigma_S} \sum_{i \in \GGG\backslash S} s_i\by_i + \frac{1}{\Sigma_S} \sum_{i \in S\backslash \GGG} s_i\by_i \\
    &=  \left(\frac{\Sigma_\GGG-\Sigma_S}{\Sigma_S\Sigma_\GGG}\right)\sum_{i \in \GGG} s_i \by_i - \frac{1}{\Sigma_S} \sum_{i \in \GGG\backslash S} s_i\by_i + \frac{1}{\Sigma_S} \sum_{i \in S\backslash \GGG} s_i\by_i.
\end{align*}
Taking the squared norm of both sides, we obtain:
\begin{align}
\label{eq:lambda}
    \|\hat{\bx} - \bar{\bx}_\GGG\|^2 &= \left\|\left(\frac{\Sigma_\GGG-\Sigma_S}{\Sigma_S\Sigma_\GGG}\right)\sum_{i \in \GGG} s_i \by_i - \frac{1}{\Sigma_S} \sum_{i \in \GGG\backslash S} s_i\by_i + \frac{1}{\Sigma_S} \sum_{i \in S\backslash \GGG} s_i\by_i \right\|^2 \nonumber \\
    &\leq 3\left\|\left(\frac{\Sigma_\GGG-\Sigma_S}{\Sigma_S\Sigma_\GGG}\right)\sum_{i \in \GGG} s_i \by_i\right\|^2 + 3\left\|\frac{1}{\Sigma_S} \sum_{i \in \GGG\backslash S} s_i\by_i\right\|^2 + 3\left\|\frac{1}{\Sigma_S} \sum_{i \in S\backslash \GGG} s_i\by_i\right\|^2 \nonumber \\
    &\leq 3\Sigma_\GGG \left(\frac{\Sigma_\GGG-\Sigma_S}{\Sigma_S\Sigma_\GGG}\right)^2 \sum_{i \in  \GGG} s_i\|\by_i\|^2 + \frac{3\sum_{i \in S\backslash \GGG}s_i}{\Sigma_S^2} \sum_{i \in S\backslash \GGG} s_i\|\by_i\|^2 + \frac{3\sum_{i \in \GGG\backslash S}s_i}{\Sigma_S^2} \sum_{i \in \GGG\backslash S}s_i \|\by_i\|^2~,
\end{align}
where the first inequality follows the inequality $\| \ba+\bb+\bc\|^2 \leq 3\|\ba\|^2+3\|\bb\|^2+3\|\bc\|^2$, $\forall \ba,\bb,\bc\in\real^2$ and the second follow Jensen’s inequality. 

Note that,
\begin{align}
\label{eq:bound1}
    \left(\frac{\Sigma_\GGG-\Sigma_S}{\Sigma_S\Sigma_\GGG}\right)^2&=\left(\frac{\Sigma_m-\Sigma_\BBB-(1-\lambda)\Sigma_m}{\Sigma_S\Sigma_\GGG}\right)^2 \nonumber
    \\&=\left(\frac{\lambda\Sigma_m-\Sigma_\BBB}{\Sigma_S\Sigma_\GGG}\right)^2 \nonumber
    \\&\leq\left(\frac{\lambda\Sigma_m}{(1-\lambda)\Sigma_m\Sigma_\GGG}\right)^2 \nonumber
    \\&\leq\left(\frac{2\lambda}{\Sigma_\GGG}\right)^2 \nonumber
    \\&<\frac{2\lambda}{\Sigma_\GGG^2}~,
\end{align}
where the first inequality holds because $\Sigma_\BBB\leq \lambda\Sigma_m$ and $\Sigma_S=(1-\lambda)\Sigma_m$. The second inequality follows from the fact that $1-\lambda\geq\nicefrac{1}{2}$, and the last since $\lambda<\nicefrac{1}{2}$. 

In addition, we have that,
\begin{align}
\label{eq:bound2}
    \frac{\sum_{i \in S\backslash \GGG}s_i}{\Sigma_S^2}&= \frac{\sum_{i \in S\cup \GGG}s_i-\sum_{i \in \GGG}s_i}{\Sigma_S^2}\nonumber
    \\&\leq\frac{\Sigma_m-(1-\lambda)\Sigma_m}{\Sigma_S^2} \nonumber
    \\&=\frac{\lambda\Sigma_m}{\Sigma_S^2} \nonumber
    \\&=\frac{\lambda\Sigma_m}{(1-\lambda)^2\Sigma_m^2} \nonumber
    \\&\leq\frac{4\lambda}{\Sigma_m} \nonumber
    \\&\leq \frac{4\lambda}{\Sigma_\GGG}~,
\end{align}
where the first inequality follows the facts that $S\cup\GGG\subseteq[m]$, $\{s_i\geq0\}_{i\in[m]}$ and $\Sigma_\GGG\geq(1-\lambda)\Sigma_m$. The second inequality is based on that $1-\lambda\geq\nicefrac{1}{2}$, and the last since $\Sigma_\GGG\leq\Sigma_m$. And in a similar way,  
\begin{align}
\label{eq:bound3}
    \frac{\sum_{i \in \GGG\backslash S}s_i}{\Sigma_S^2}\leq \frac{4\lambda}{\Sigma_\GGG}
\end{align} 

Applying Eq. \eqref{eq:bound1}, Eq. \eqref{eq:bound2} and Eq. \eqref{eq:bound3} into Eq. \eqref{eq:lambda}, gives us, 
\begin{align*}
\|\hat{\bx} - \bar{\bx}_\GGG\|^2 & \leq \frac{6\lambda}{\Sigma_\GGG}\sum_{i \in  \GGG} s_i\|\by_i\|^2 + \frac{12\lambda}{\Sigma_\GGG} \sum_{i \in S\backslash \GGG} s_i\|\by_i\|^2 + \frac{12\lambda}{\Sigma_\GGG} \sum_{i \in \GGG\backslash S}s_i \|\by_i\|^2
\\&\leq \frac{12\lambda}{\Sigma_\GGG} \sum_{i \in S\backslash \GGG} s_i \|\bx_i - \bx_0\|^2 + \frac{18\lambda}{\Sigma_\GGG} \sum_{i \in \GGG}s_i \|\bx_i - \bx_0\|^2~,
\end{align*}
where the latter holds since $\sum_{i \in \GGG\backslash S}s_i \|\by_i\|^2\leq\sum_{i \in \GGG}s_i \|\by_i\|^2$.

Next, we define:
\begin{align*}
    S^* &:= \bigcup_{i \in S} \{i\}_{j=1}^{s_i}, \quad 
    \GGG^* := \bigcup_{i \in \GGG} \{i\}_{j=1}^{s_i}.
\end{align*}

Note that $\sum_{i \in S\backslash \GGG} s_i \|\bx_i - \bx_0\|^2 = \sum_{i \in  {S}^*\backslash {\GGG}^*} \|\bx_i - \bx_0\|^2$. We'll show that there exists an injective function $\Phi:{S}^*\backslash{\GGG}^*\to {\GGG}^*\backslash{S}^*$ such that, \(\forall i\in S^*\setminus\GGG^* \), $\|\bx_{\Phi(i)}-\bx_0\|\geq\|\bx_i-\bx_0\|$ is satisfied. This follows from our selection of $S$, which consists of the closest elements $\{\bx_i\}_{i\in S}$ to  $\bx_0$ (see  Alg. \ref{alg:CTMA}), and from:
\begin{align*}
     |S^*\backslash \GGG^*| = \sum_{i \in S\backslash \GGG} s_i =  \sum_{i \in S} s_i + \sum_{i \in \G\backslash S} s_i - \sum_{i \in \GGG} s_i \leq \sum_{i \in \G\backslash S} s_i = |\GGG^*\backslash S^*|~,
\end{align*}
where the last inequality follows that $\sum_{i \in S} s_i - \sum_{i \in \GGG} s_i = (1-\lambda)\Sigma_m - \Sigma_\GGG \leq 0 $. 

Thus,
\begin{align*}
\|\hat{\bx} - \bar{\bx}_\GGG\|^2 &\leq \frac{12 \lambda}{\Sigma_\GGG} \sum_{i \in S\backslash \GGG} s_i \|\bx_i - \bx_0\|^2 + \frac{18 \lambda}{\Sigma_\GGG} \sum_{i \in \GGG} s_i \|\bx_i - \bx_0\|^2 \\
&= \frac{12 \lambda}{\Sigma_\GGG} \sum_{i \in S^*\backslash \GGG^*} \|\bx_{i} - \bx_0\|^2 + \frac{18 \lambda}{\Sigma_\GGG} \sum_{i \in \GGG} s_i\|\bx_i - \bx_0\|^2 \\
&\leq \frac{12 \lambda}{\Sigma_\GGG} \sum_{i \in S^*\backslash \GGG^*} \|\bx_{\Phi(i)} - \bx_0\|^2 + \frac{18 \lambda}{\Sigma_\GGG} \sum_{i \in \GGG} s_i\|\bx_i - \bx_0\|^2 \\
&\leq \frac{12 \lambda}{\Sigma_\GGG} \sum_{i \in \GGG^*} \|\bx_i - \bx_0\|^2 + \frac{18 \lambda}{\Sigma_\GGG} \sum_{i \in \GGG} s_i\|\bx_i - \bx_0\|^2 \\
&= \frac{12 \lambda}{\Sigma_\GGG} \sum_{i \in \GGG} s_i\|\bx_i - \bx_0\|^2 + \frac{18 \lambda}{\Sigma_\GGG} \sum_{i \in \GGG} s_i\|\bx_i - \bx_0\|^2 \\
&= \frac{30 \lambda}{\Sigma_\GGG} \sum_{i \in \GGG} s_i\|\bx_i - \bx_0\|^2 \\
&\leq \frac{60 \lambda}{\Sigma_\GGG} \sum_{i \in \GGG} s_i\|\bx_i - \bar{\bx}_\GGG\|^2 + \frac{60 \lambda}{\Sigma_\GGG} \sum_{i \in \GGG} s_i \|\bar{\bx}_\GGG - \bx_0\|^2~,
\end{align*}
where the second inequality follows from the definition of the injective function $\Phi$. The third inequality is justified by the fact that $\sum_{i \in \GGG^*\backslash S^*} \|\by_i\|^2\leq\sum_{i \in \GGG^*} \|\by_i\|^2$. Finally, the last inequality leverages the property $\|\ba+\bb\|^2\leq 2\|\ba\|^2+2\|\bb\|^2$, which holds $\forall\ba,\bb\in\real^d$.

Taking the expectations of both sides gives us the following:
\begin{align*}
\E\|\hat{\bx} - \bar{\bx}_\GGG\|^2 &\leq \frac{60\lambda}{\Sigma_\GGG} \sum_{i \in \GGG} s_i \E\|\bx_i - \Bar{\bx}_\GGG\|^2 + \frac{60 \lambda}{\Sigma_\GGG} \sum_{i \in \GGG} s_i \E\|\Bar{\bx}_\GGG - \bx_0\|^2 \\  & \leq {60 \lambda}\rho^2 + 60{\lambda} c_\lambda \rho^2  \\&=  60\lambda(1+c_\lambda) \rho^2~,
\end{align*}
where the last inequaility stems from Def.~\ref{def2}. 
\end{proof}

\section{Experiments}
\label{app:exp}
\subsection{Technical Details}
\paragraph{Datasets.} We simulated over the MNIST \citep{lecun2010mnist} and CIFAR-10 \citep{krizhevsky2014cifar} datasets. The datasets were accessed through \texttt{torchvision} (version 0.16.2).

\begin{itemize}
    \item \textbf{MNIST Dataset}. MNIST is a widely used benchmark dataset in the machine learning community, consisting of 70,000 grayscale images of handwritten digits (0-9) with a resolution of 28x28 pixels. The dataset is split into 60,000 training images and 10,000 test images.
    
    \item \textbf{CIFAR-10 Dataset}. CIFAR-10 is a widely recognized benchmark dataset in the machine learning community, containing 60,000 color images categorized into 10 different classes. Each image has a resolution of 32x32 pixels and represents objects such as airplanes, automobiles, birds, cats, and more. The dataset is evenly split into 50,000 training images and 10,000 test images.
\end{itemize}
\begin{table}[h]
    \centering
    \renewcommand{\arraystretch}{1.3} 
    \begin{tabularx}{\textwidth}{>{\raggedright\arraybackslash}X >{\centering\arraybackslash}X >{\centering\arraybackslash}X}
        \toprule
        \textbf{Parameter} & \textbf{MNIST} & \textbf{CIFAR-10} \\
        \midrule
        Model Architecture & 
        \small Conv(1,20,5), ReLU, MaxPool(2x2), Conv(20,50,5), ReLU, MaxPool(2x2), FC(800$\rightarrow$50), BatchNorm, ReLU, FC(50$\rightarrow$10) &
        \small Conv(3,20,5), ReLU, MaxPool(2x2), Conv(20,50,5), ReLU, MaxPool(2x2), FC(1250$\rightarrow$50), BatchNorm, ReLU, FC(50$\rightarrow$10) \\
        \addlinespace
        Learning Rate & 0.01 & 0.01 \\
        \addlinespace
        Batch Size & 16 & 8 \\
        \addlinespace
        Data Processing \& Augmentation & \small Normalize(mean=(0.1307), std=(0.3081)) &
        \small RandomCrop(size=32, padding=2), RandomHorizontalFlip(p=0.5), Normalize(mean=(0.4914, 0.4822, 0.4465), std=(0.2023, 0.1994, 0.2010)) \\
        \bottomrule
    \end{tabularx}
    \caption{Experimental Setup for MNIST and CIFAR-10}
    \label{tab:experiment-setup}
\end{table}

\paragraph{Imbalanced Arrival Scenarios.} We simulated two types of imbalanced arrival scenarios:
\begin{itemize}
    \item \textbf{Proportional Arrival Probability}: The probability of arrival for the \(i\)-th worker in the honest group was set proportionally to \(i/\sum_{j \in \mathcal{G}} j\), ensuring that workers with higher indices had a higher chance of arriving. The same distribution method was applied to Byzantine workers.
    \item \textbf{Squared ID Arrival Probability}: In a more skewed scenario, the arrival probability was proportional to the square of the worker’s ID, i.e., \(i^2/\sum_{j \in \mathcal{G}} j^2\). This setup further accentuated the imbalance by favoring workers with larger IDs.
\end{itemize}

For simplicity, Byzantine workers were introduced after a fixed number of iterations, controlled by a parameter \(\lambda\). However, it is worth noting that when Byzantine iterations are concentrated, they can cause significant performance degradation. Such patterns can lead to increased delays for honest updates, ultimately affecting the overall convergence of the algorithm.

\paragraph{Optimization Setup.} We optimized the cross-entropy loss across all experiments. For comparisons, we configured \(\mu^2\)-SGD with fixed parameters \(\gamma = 0.1\) and \(\beta = 0.25\). This was tested against Standard SGD, and Momentum-based SGD, where the momentum parameter was set to \(\beta = 0.9\) as recommended by \cite{karimireddy2021learning}.

\paragraph{Attack Simulations.} We simulated four types of attacks to evaluate the robustness of our approach: 
\begin{enumerate}
    \item \textbf{Label Flipping}  \citep{allen2020byzantine}: The labels of the data were flipped to incorrect values, by subtracting the original labels from 9.
    \item \textbf{Sign Flipping}  \citep{allen2020byzantine}: The signs of the workers' output were flipped.
    \item \textbf{Little} \citep{baruch2019little}: Adapted from the synchronous case.  It computes the maximum allowable deviation \( z_{\text{max}} \) based on iterations count rather than the number of workers. Then, it perturbs the honest updates by subtracting the product of the weighted standard deviation and \( z_{\text{max}} \) from the weighted mean of the honest updates.
      \[
     \text{Byzantine\_update} = \text{weighted\_mean}(\text{honest\_momentums}) - \text{weighted\_std}(\text{honest\_momentums}) \cdot z_{\text{max} }.
     \]
    \item \textbf{Empire} \citep{xie2020fall}: Adapted from the synchronous case.  This attack scales the weighted mean of the honest momentums by a factor \(\epsilon\) in the negative direction, 
    \[
    \text{Byzantine\_update} = -\epsilon \cdot \text{weighted\_mean}(\text{honest\_momentums}).
    \]
\end{enumerate}
In the two latter attacks, the mean and standard deviation are calculated coordinate-wise with respect to weights, setting \(\epsilon = 0.1\).

\paragraph{AnyTime Update Formulation.} 
 Regarding the AnyTime update, defined as \(\bx_t := \frac{\alpha_t \bw_t + \alpha_{1:t-1} \bx_{t-1}}{\alpha_{1:t}}\), we employed a momentum-based formulation that equivalent to the standard AnyTime update. Specifically, we updated the model parameters according to the formula:
$$
\bx_t = \gamma_t \bw_t + (1 - \gamma_t) \bx_{t-1}
$$
where \(\gamma_t\) is difined as \(\gamma_t := \frac{\alpha_t}{\alpha_{1:t}}\). By setting \(\alpha_t = C \alpha_{1:t-1}\) with \(C>0\) being a constant, \ we derived that \(\gamma_t = \frac{C}{C + 1}\) and remains consistent across all time steps \(t \geq 1\).

For more details, please visit our GitHub repository.\footnote{\url{https://github.com/dahan198/asynchronous-fault-tolerant-ml}}

\clearpage
\subsection{Experimental Results on MNIST}
\label{app:results}

\begin{figure}[ht]
\centering
\includegraphics[width=1\linewidth]{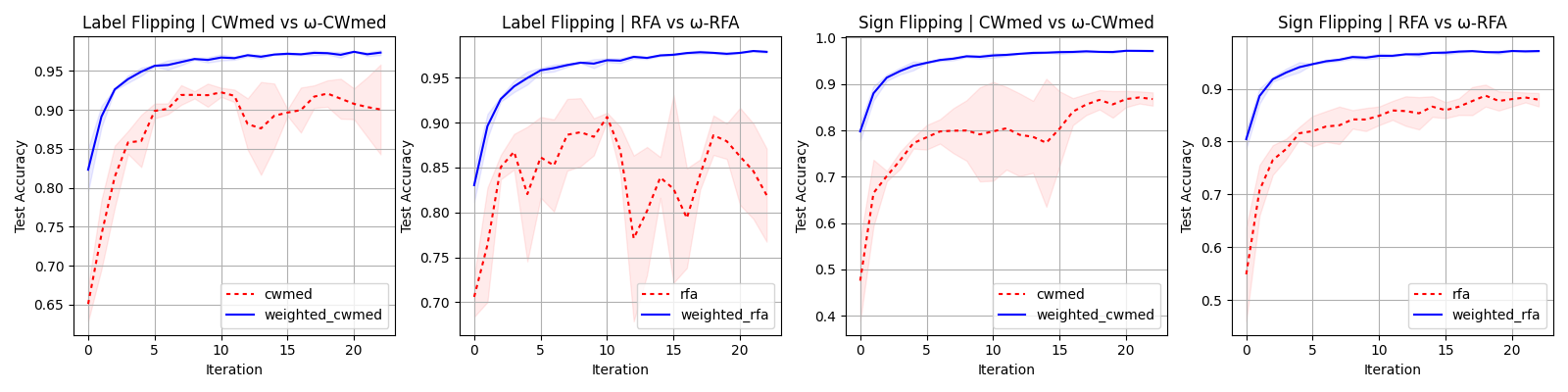}
\caption{\small \textbf{MNIST}. \textbf{Test Accuracy of Weighted vs. Non-Weighted Robust Aggregators}. This scenario involves 17 workers, including 8 Byzantine workers. The arrival probabilities of workers are proportional to the square of their IDs, with $\lambda = 0.4$. We employed $\mu^2$-SGD in this setup. {Left}: \textit{label flipping} attack. {Right}: \textit{sign flipping} attack.}
\label{fig:weighted}
\end{figure}

\begin{figure}[ht]
\centering
\includegraphics[width=1\linewidth]{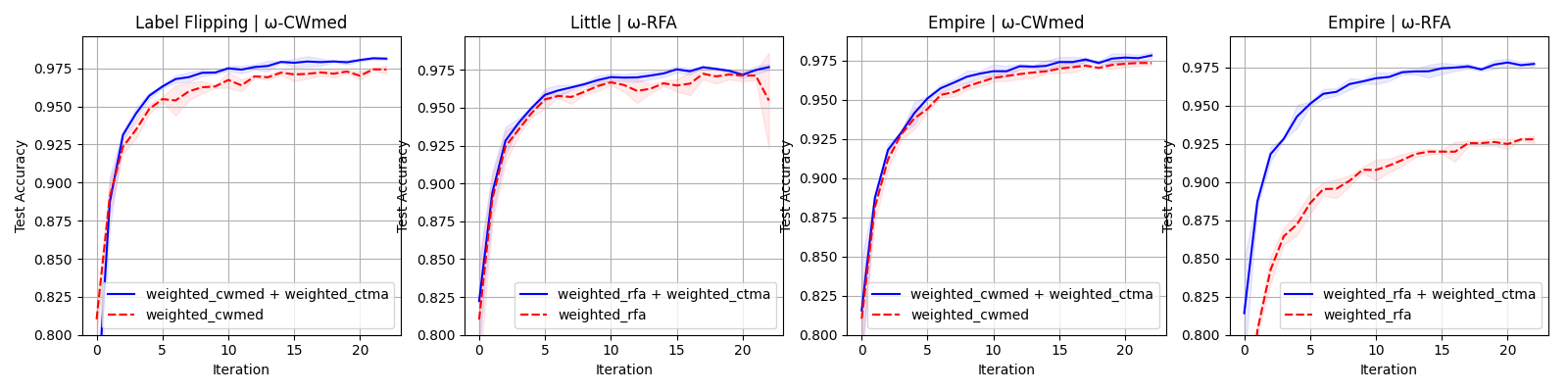}
\caption{\small \textbf{MNIST}. \textbf{Test Accuracy Comparison of Weighted Robust Aggregators With and Without $\omega$-CTMA}. This scenario involves 9 workers, with a very fast Byzantine worker, and workers' arrival probabilities proportional to their IDs. We used the $\mu^2$-SGD in this scenario.  On the \textit{left} we have \textit{label flipping}  ($\lambda=0.4$) and \textit{little} ($\lambda=0.3$) attacks. On the \textit{right} we have an \textit{empire} ($\lambda=0.4$) attack.}
\label{fig:ctma}
\end{figure}

\begin{figure}[ht]
\centering
\includegraphics[width=1\linewidth]{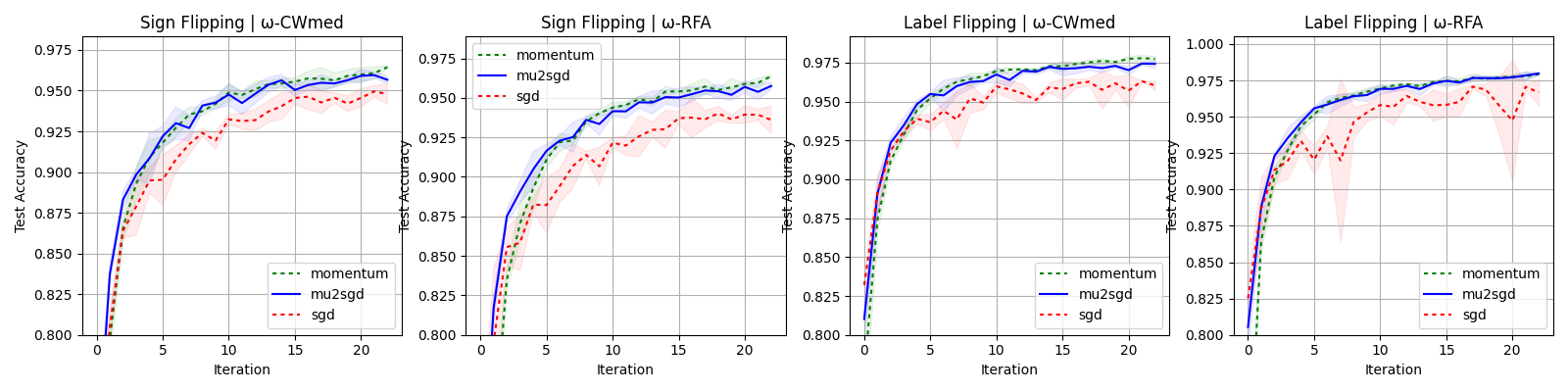}
\caption{\small \textbf{MNIST}. \textbf{Test Accuracy Comparison Among Different Optimizers}. This scenario involves 9 workers, with a very fast Byzantine worker, $\lambda=0.4$, and workers' arrival probabilities proportional to their IDs. {Left}: \textit{sign flipping}. {Right}: 
 \textit{label flipping}.}
\label{fig:optimizer}
\end{figure}

\clearpage

\end{document}